\newtheorem{theorem}{Theorem}[section]
\newtheorem*{theorem*}{Theorem}
\newtheorem{proposition}[theorem]{Proposition}
\newtheorem*{proposition*}{Proposition}
\newtheorem{lemma}[theorem]{Lemma}
\newtheorem{remark}[theorem]{Remark}
\newtheorem{definition}[theorem]{Definition}
\newtheorem{assumption}[theorem]{Assumption}
\newtheorem{fact}[theorem]{Fact}
\renewcommand{\eqref}[1]{Eq.~(\ref{#1})}
\newcommand{\LB}{\bP_{\mathrm{B}}}  
\newcommand{\LsB}{\bP_{\mathrm{sB}}}  
\newcommand{\LG}{\bP_{\mathrm{G}}}  
\newcommand{\bP}{\mathbf{P}}
\newcommand{\bPmult}{{\bP^{\mathsf{mult}}}}
\newcommand{\mem}{\mathrm{mem}}
\newcommand{\test}{\mathrm{test}}
\newcommand{\train}{\mathrm{train}}
\newcommand{\BSC}{\mathrm{BSC}}
\newcommand{\err}{\mathrm{err}}
\newcommand{\errn}{\mathrm{errn}}
\newcommand{\one}[1]{\mathbbm{1}\left\{#1\right\}}
\newcommand{\reals}{\mathbb{R}}
\newcommand{\R}{\mathbb{R}}
\newcommand{\indep}{\perp\!\!\!\perp}
\newcommand{\TV}{\mathrm{TV}}
\newcommand{\vertiii}[1]{{\left\vert\kern-0.25ex\left\vert\kern-0.25ex\left\vert #1 
    \right\vert\kern-0.25ex\right\vert\kern-0.25ex\right\vert}}
\newcommand{\X}{\mathcal{X}}
\newcommand{\eps}{\varepsilon}
\newcommand{\kl}[2]{D_{
\mathrm{KL}
  }
  \left(#1 \,||\, #2\right)
  }
\newcommand{\beq}{\begin{eqnarray*}}
\newcommand{\eeq}{\end{eqnarray*}}
\newcommand{\beqn}{\begin{eqnarray}}
\newcommand{\eeqn}{\end{eqnarray}}
\newcommand{\Acal}{\mathcal{A}}
\newcommand{\Bcal}{\mathcal{B}}
\newcommand{\Ncal}{\mathcal{N}}
\newcommand{\Ucal}{\mathcal{U}}
\newcommand{\Xcal}{\mathcal{X}}
\newcommand{\NN}{\mathbb{N}}
\newcommand{\half}{\frac{1}{2}}
\newcommand{\E}{\mathop{\mathbb{E}}}
\newcommand{\Mcal}{\mathcal{M}}
\newcommand{\Pcal}{\mathcal{P}}
\newcommand{\inner}[1]{\langle#1\rangle}
\newcommand{\vremove}[1]{}
\title{Trade-offs in Data Memorization
\\via Strong Data Processing Inequalities}
\author{
Vitaly Feldman
\vspace{2pt}
\\Apple
\and
Guy Kornowski
\vspace{2pt}
\\Weizmann Institute of Science\thanks{Research done while at Apple.
}
\and
Xin Lyu
\vspace{2pt}
\\UC Berkeley\footnotemark[1]
}
\date{}
\begin{document}

\maketitle

\begin{abstract}
Recent research demonstrated that training large language models involves memorization of a significant fraction of training data. Such memorization can lead to privacy violations when training on sensitive user data and thus motivates the study of data memorization's role in learning.
In this work, we develop a general approach for proving lower bounds on excess data memorization, that relies on a new connection between strong data processing inequalities and data memorization. We then demonstrate that several simple and natural binary classification problems exhibit a trade-off between the number of samples available to a learning algorithm, and the amount of information about the training data that a learning algorithm needs to memorize to be accurate. In particular, $\Omega(d)$ bits of information about the training data need to be memorized when $O(1)$ $d$-dimensional examples are available, which then decays as the number of examples grows at a problem-specific rate. Further, our lower bounds are generally matched (up to logarithmic factors) by simple learning algorithms. We also extend our lower bounds to more general mixture-of-clusters models. Our definitions and results build on the work of \citet{brown2021memorization} and address several limitations of the lower bounds in their work.
\end{abstract}

\newpage
\tableofcontents

\newpage

\section{Introduction}
The machine learning (ML) methodology is
traditionally thought of as constructing a model by extracting patterns in the training data. Theoretical understanding of machine learning focuses on understanding how to ensure that the constructed model generalizes well from the training data to the unseen instances. In this context, memorization of training data is typically thought of as antithetical to generalization. 
Yet, it has been empirically demonstrated that a variety of modern LLMs memorize a significant portion of training data \citep{carlini2021extracting,carlini2023extracting,nasr2025scalable}. Specifically, \cite{nasr2025scalable} demonstrate an attack allowing them to estimate that at least $0.852\%$ of the training data used by ChatGPT ({\texttt{gpt-3.5-turbo}} used in production by OpenAI as of 2023) can be extracted from the model. 
Importantly, this includes information such as personal addresses and URLs that appears to be both highly sensitive and not particularly relevant to the task of modeling language. In many applications training data includes either personally sensitive information or copyrighted works. This makes such data memorization highly concerning and motivates the research into the role of data memorization in learning. 

Explicit data memorization is known to be a crucial part of some learning algorithm, most notably those based on the nearest neighbor classifier. Further, a significant number of classical and modern works establish theoretical generalization guarantees for such methods (e.g. \citealp{cover1967nearest,biau2015lectures}). It is less clear how data memorization emerges when training NNs and how the resulting models encode training data \citep{radhakrishnan2020overparameterized,zhangidentity2020}. However, in this work we focus not on the mechanics of memorization by specific algorithms, but on the question of whether data memorization is necessary for solving natural learning problems, as opposed to just being an artifact of the choice of the learning algorithm.

This question was first addressed by \cite{brown2021memorization}, who proposed to measure ``irrelevant'' training data memorization as the mutual information between the model and the dataset $I(\Acal(X_{1:n});X_{1:n})$,\footnote{A crucial step in the study of data memorization is capturing the intuitive notion using a formal definition. We discuss some alternative approaches in Section~\ref{sec:related}. We also note that the measure $I(\Acal(X_{1:n});X_{1:n})$ was previously used in the context of generalization and we describe work done in this context in Section~\ref{sec:related}.} where $\Acal$ is the learning algorithm and $X_{1:n}$ is the dataset which consists of $n$ i.i.d. samples from a data distribution. For this notion, they demonstrated existence of a simple
multi-class classification problem over $\{0,1\}^d$, where each
accurate learner needs to memorize a constant fraction of all training data, namely satisfies $I(\Acal(X_{1:n});X_{1:n}) = \Omega(nd)$.

The results of \cite{brown2021memorization} rely on two components,
which we briefly recall to motivate our work.
The first
is the focus on the accuracy for classes that only have a \emph{single} example present in the dataset (referred to as singletons).
This is motivated by the ``long-tail'' view of the data distribution proposed in \citep{feldman2020does},
where it was shown that for data distributions that are long-tailed mixtures of clusters, the accuracy of the learning algorithm
on the tail of the data distribution
is determined by the accuracy on singletons.
The second component is a memorization lower bound for a cluster identification problem,
when given
only a single example of that class.
Each cluster in \citep{brown2021memorization} is distributed uniformly over examples satisfying a $\widetilde O(\sqrt{d})$-sparse boolean conjunction, and the clusters are sufficiently different so that each
cluster
can be accurately classified using a single example. However, as shown therein,
doing so requires $\Omega(d)$ bits of information about the single example, most of which is
``irrelevant'' to the learning problem.
Overall,
this suggests that data memorization is necessary for learning in high-dimensional Boolean settings
with just a single relevant sample.

In this work we aim to develop a more general understanding of data memorization in learning. In particular, we 
address two specific limitations of the lower bounds in \citep{brown2021memorization}. The key limitation is that lower bounds in \citep{brown2021memorization} are tailored to a specific sparse Boolean clustering problem in which clusters are defined by uniform distributions over Boolean conjunctions. Such data distribution are not directly related to the real-valued data representations typically manipulated by neural networks. Thus we aim to develop techniques that apply to large classes of data distribution that include natural data distributions over $\R^d$.

The second limitation is the fragility of the lower bound: it applies only to a single example (per cluster). 
Empirically, it is observed that gathering more data can mitigate memorization, eventually allowing models to forget specific samples \citep{jagielski2023measuring}. Thus going beyond a single example and understanding the trade-off between the number of available examples and memorization for a given data distribution is an important question.

\subsection{Our Contribution}
In this work, we develop a general technique for proving lower bounds on data memorization. Our technique focuses on simple binary classification problems in which the goal is to distinguish points coming from a ``cluster'' from those coming from some fixed ``null'' distribution given a small number of examples. We argue that a complex learning problem over a natural data distribution implicitly involves solving many such classification problems (see Section \ref{sec:motivation} for more details).

Our technique relies on establishing a tight connection between our binary classification setting and {\em strong data processing} inequalities (SDPIs), an important tool in information theory dating back to \citep{ahlswede1976spreading}.
As such SDPIs are known for a relatively limited number of pairs of jointly distributed random variables (referred to as {\em channels} in the context of SDPIs),
we develop an approach based on approximate reductions
that enables applying them in the context of learning from datasets.

We then use our general framework to analyze three natural problem instances: Gaussian cluster identification, Boolean cluster identification and sparse Boolean hypercube identification. The first problem is particularly simple and fundamental: classify an example as either sampled from an isotropic Gaussian around the origin or an isotropic Gaussian around a different point (sufficiently far from the origin). For the first two problems we prove an excess memorization lower bound of $\widetilde \Omega(d/n)$ for any learning algorithm (and, in the Boolean case, $n \lesssim  \sqrt{d}$). We further show that this lower bound is tight up to logarithmic factors. The third problem is the one defined in \citep{brown2021memorization}, for which we show a lower bound of $\Omega(d/2^{2n})$ for $n \lesssim \log d$, and we again match with a tight upper bound.

\paragraph{Problem setting and excess memorization.}
We now describe our approach and results in more detail starting with some more formal definitions and notation.
We study
binary classification
problems in which an algorithm $\Acal$ is given training data $X_{1:n} = X_1,\dots,X_n\overset{iid}{\sim}\Pcal_\theta$ sampled from some distribution $\Pcal_\theta$ over $\Xcal$ parameterized by some parameter $\theta$. To make our lower bounds stronger, following \citep{feldman2020does, brown2021memorization} we state them for average case problems, namely, we think of $\theta$ as itself being chosen from a known meta-distribution $\theta\sim \Psi$.
For a given data distribution $\Pcal_\theta$ and algorithm $\Acal$, we measure training data memorization of $\Acal$ by the mutual information between the model and the dataset drawn i.i.d. from $\Pcal_\theta$:
\begin{equation*}
\mem_n(\Acal,\Pcal_\theta)
:=I\left(\Acal(X_{1:n});X_{1:n}\right)~,
\end{equation*}
where $X_{1:n}\sim\Pcal_\theta^n$.
Notably, in this definition $\Pcal_\theta$ is fixed, and therefore $\mem_n(\Acal,\Pcal_\theta)$ does not count any information about the unknown data distribution $\theta$. 
To emphasize this property we refer to $\mem_n(\Acal,\Pcal_\theta)$ as {\em excess} data memorization. 
We further denote the average excess memorization for $\Acal$ on
an average-case
problem $\bP= (\Pcal_\theta)_{\theta\sim\Psi}$ by
\begin{align*}
\mathrm{mem}_n(\Acal,\bP)
:=\E_{\theta\sim\Psi}[\mem_n(\Acal,\Pcal_\theta)]
&=I\left(\Acal(X_{1:n});X_{1:n}\mid\theta\right)
\\&= I\left(\Acal(X_{1:n});X_{1:n}\right) - I\left(\Acal(X_{1:n});\theta\right)
~.
\end{align*}
Note, again, that conditioning on $\theta$ ensures that we are not counting the information that $\Acal$ learns about $\theta$ which is necessary for learning, but rather measuring excess memorization of the dataset.
Moreover, the latter equality (which holds by the chain rule for mutual information) provides an intuitive interpretation of this quantity: it is information the model has about the training data, after subtracting the ``relevant'' information about $\theta$, thus leaving the purely excess memorization.

\paragraph{Strong Data Processing Inequalities (SDPIs) imply memorization (Theorem~\ref{thm: excess mem}):}
We establish a direct connection between SDPIs and excess data memorization. We recall that the (regular) data processing inequality states that mutual information cannot increase as a result of post-processing, that is, in a Markov chain $A\rightarrow B\rightarrow C$, $I(A;C)\leq I(B;C)$. Strong data processing inequality holds when for a pair of jointly distributed random variables $(A,B)$, the step $A\rightarrow B$ necessarily reduces the mutual information by some factor $\rho<1$, referred to as the SDPI constant for $(A,B)$.  

In our context, for a randomly chosen $\theta\sim \Psi$, a dataset $X_{1:n}\sim\Pcal_\theta^n$, and an additional fresh test sample $X\sim\Pcal_\theta$, we have a Markov chain $X\rightarrow X_{1:n}\rightarrow\Acal(X_{1:n})$ since the only information $\Acal$ has about $X$ is through $X_{1:n}$. Thus we can deduce that $I(\Acal(X_{1:n}); X) \leq \rho \cdot I(\Acal(X_{1:n}); X_{1:n})$, where $\rho$ is the SDPI constant for $(X,X_{1:n})$. As is well-known, accurate binary prediction requires information, namely $I(\Acal(X_{1:n}); X) = \Omega(1)$ for any $\Acal$ with error that is $\leq1/3$. As a result of applying the SDPI, we get that $I(\Acal(X_{1:n}); X_{1:n}) = \Omega(1/\rho)$. 

As discussed, when $\theta$ is a random variable, $I(\Acal(X_{1:n}); X_{1:n})$ also counts the information that $\Acal(X_{1:n})$ contains about $\theta$. To obtain a lower bound on excess data memorization we need to subtract $I(\Acal(X_{1:n}); \theta)$ from $I(\Acal(X_{1:n}); X_{1:n})$. To achieve this, we consider the Markov chain $\theta \rightarrow X_{1:n}\rightarrow\Acal(X_{1:n})$ and denote by $\tau$ the SDPI constant of the pair
$(\theta, X_{1:n})$. Applying the SDPI to $\Acal(X_{1:n})$ and combining it with the lower bound on $I(\Acal(X_{1:n}); X_{1:n})$ then gives us the summary of the connection between SDPIs and memorization:
\begin{equation*}
\mem_n(\Acal,\bP) =\Omega\left(\frac{1-\tau_n}{\rho_n} \right)~,
\end{equation*}
where we emphasize the fact $\tau$ and $\rho$ depend on $n$.
We remark, that for $n=1$,
the first SDPI
coefficient $\rho_1$
is related to the proof technique of \citep{brown2021memorization} who reduce their learning problem to a variant of the so-called Gap-Hamming communication problem and then give an SDPI-based lower bound adapted from \citep{hadar2019communication}.

\paragraph{Approximate SDPIs via dominating variables (Theorem~\ref{thm: Z SDPI}):}
Our framework reduces memorization to computation of SDPI constants for pairs $(X_{1:n}, X)$ and $(X_{1:n}, \theta)$. However, known SDPIs deal primarily with individual samples from several very specific distributions (most notably, Bernoulli and Gaussian), and not datasets that appear to be much more challenging to analyze directly. We bypass this difficulty via a notion of a \emph{dominating} random variable for the dataset.
Specifically, if there exists a random variable $Z_\theta^\train$ and post-processing $\Phi$ such that $\Phi(Z_\theta^\train)\approx X_{1:n}$
(as distributions), then it suffices to prove SDPIs for the pair $(Z_\theta^\train,X)$. For our applications, it is crucial to allow $\Phi(Z_\theta^\train)$ to approximate $X_{1:n}$ and thus
our
reduction
incorporates the effects of approximation error in both SDPIs.
Our reduction
also allows using a dominating variable $Z_\theta^\test$ for the test point $X$, but in our applications $X$ is simple enough and this step is not needed.

\paragraph{Applications: memorization trade-offs and matching upper bounds (Theorem~\ref{thm: gaussian lb}, Theorem~\ref{thm: boolean lb} and Theorem~\ref{thm: sparse boolean lb}):}
We apply the techniques we developed to demonstrate that several natural learning problems 
exhibit smooth trade-offs between excess memorization and sample size. 
The first problem we consider is Gaussian clustering. In this problem, negative examples are sampled from $\Ncal(0_d,I_d)$, while positive examples are sampled from $\Ncal(\lambda \theta,(1-\lambda^2)I_d)$ for some scale $\lambda$. Our lower bounds are for $\theta\sim \Ncal(0_d,I_d)$. We pick $\lambda = \Theta(1/d^{1/4})$, ensuring that accurate learning is possible with just a single positive sample (by using either nearest neighbor or linear classifier). At the same time, our analysis demonstrates that any learning algorithm $\Acal$ for this problem that achieves non-trivial error satisfies
\[
\mem_n(\Acal,\bP) =\Omega\left(\frac{d}{n}\right)~.
\]

Moreover, for this problem we show that a matching (up to log factors) upper bound can be achieved by simple learning algorithms whenever $n\leq \sqrt{d}$. Hence, we overall establish that memorization can be reduced by using more data, and that is the only way to it (while maintaining accuracy).

We next consider a Boolean clustering problem. In this problem $\theta \in \{\pm1\}^d$, negative examples are sampled from the uniform distribution over $\{\pm1\}^d$, whereas positive examples are sampled from a product distribution over $\{\pm1\}^d$ with mean $\lambda \cdot  \theta$. Thus for $\theta$ chosen uniformly from $\{\pm1\}^d$, a random positive example from $\Pcal_\theta$ is coordinate-wise $\lambda$-correlated with $\theta$.
The problem can be thought of as a Boolean analogue of the Gaussian setting above, and we obtain nearly tight (up to a log factor) upper and lower bounds for this problem. The bounds are similar to those in the Gaussian case for $n\leq \sqrt{d}$ but almost no memorization is needed when $n\geq \sqrt{d} \log d$. We note however that the analysis of the approximately dominating variable is more involved in this case and is derived from composition results in differential privacy.

Finally, we consider a sparse Boolean hypercube clustering problem introduced in \citep{brown2021memorization}. In this setting, the data distribution is defined by a pair $\theta=(S,Y)$, where $S\subseteq [d]$ is a subset of coordinates and $Y \in \{\pm1\}^{S}$ are the values assigned to these coordinates. In the data distribution $\Pcal_\theta$ negative examples are uniform over $\{\pm1\}^d$, whereas the positive examples are uniform over the hypercube of all the points $x$ whose values in coordinates in $S$ are exactly $Y$: namely $x \in \{\pm1\}^d$ that satisfy the conjunction $\bigwedge_{i\in S} (x_i=Y_i)$. To sample $\theta$ we include each index in $S$ with probability $\approx 1/\sqrt{d}$ independently at random, and then assign to each coordinate a uniformly random value in $Y \in \{\pm1\}^{S}$. As noted in \citep{brown2021memorization}, when $n=1$ this problem is identical to the Boolean clustering problem defined above. However, for larger $n$, we show it requires much less memorization. Specifically, we give a lower and upper bound (tight up to log factors) that are:
\[
\mem_n(\Acal,\bP)=\tilde\Theta\left(\frac{d}{2^{2n}}\right)~.
\]

We remark that in these example applications, we chose parameters 
so as
to ensure that each problem is learnable from a single positive example, but requires $\Omega(d)$ bits to be memorized to do so. Beyond this extreme case, our techniques easily extend to show lower bounds in terms of correlation between samples, which also determines the smallest $n$ and $d$ that would be required for the learnability.

\subsection{From cluster classification to LLMs}
\label{sec:motivation}
We now briefly discuss how lower bounds for the simple classification problems we consider are related to the data memorization by LLMs. We first note that while LLMs are often used as generative models, underlying the sampler is a (soft) predictor of the next token given the preceding context. Thus an LLM is also a multiclass classifier. Second, LLMs (and many other ML models) either explicitly or implicitly rely on semantic data embeddings of the context, that is, embeddings in which semantically similar contexts are mapped to points close in Euclidean distance (or cosine similarity). In particular, nearby points are typically classified the same. As a result, when viewed in the embedding space, natural data distributions correspond to mixtures of (somewhat-disjoint) clusters of data points where points in the same cluster typically have the same label (cf. \citealp{reif2019visualizing,cai2021isotropy,radford2021learning}).

As has been widely observed (and discussed in \citep{feldman2020does}), for natural data distributions in many domains, the frequencies of these clusters tend to be long-tailed with a significant fraction of the entire data distribution being in low-frequency clusters. Such low frequency clusters have only few representatives in the training dataset (possibly just one). Accurate classification of a point from a low-frequency cluster requires being able to classify whether a test point belongs to the same cluster based on just a few examples of that cluster.

This shows that classifying points as belonging to some cluster or not is a subproblem that arises when learning from natural data. This raises the question of how to model such ``clusters''. While in practice cluster distributions will depend strongly on the representations used and may not have a simple form, one prototypical and widely studied choice is the Gaussian distribution. This distribution is known to be prevalent in natural phenomena (earning it the name ``normal''). The ubiquity of mixtures of Gaussian-like distributions is also the reason for the utility of techniques such as Gaussian Mixture Models for distribution modeling (cf. \citealp{reynolds2009gaussian}).

Putting these insights together, we get to the key application of our techniques: the problem of distinguishing a point from a Gaussian distribution from some null distribution given few samples, as a subproblem when the data distribution is a long-tailed mixture of Gaussian clusters.

\paragraph{Lower bounds for mixtures of clusters (Theorem~\ref{thm: accuracy to memorization}):}
Finally, in Section~\ref{sec: multiclass} we discuss how our techniques can be extended to a more detailed mixture-of-clusters model of data. Specifically, we consider data models based on a prior distribution over frequencies of clusters, as studied in \citep{feldman2020does}. The lower bound by \citet{brown2021memorization} is given in this model, but only clusters from which a single example was observed contribute to the memorization lower bound. We show that our more general lower bound approach extends to this mixture-of-clusters setting, with each cluster contributing to the total memorization lower bound according to the number of examples of that cluster in the training dataset.
In particular, our results demonstrate a smooth trade-off in which clusters with less representatives in the training data contribute more to the excess memorization of the learned model.

\subsection{Related Work}
\label{sec:related}
A fundamental theme in learning theory is that ``simple'' learning rules generalize \citep{blumer1987occam}.
In particular, there is a long line of work studying generalization bounds which provide various formalizations of the intuition that learners who use little information about their dataset must generalize.
Classical such notions include compression schemes \citep{littlestone1986relating} and the PAC-Bayes framework \citep{mcallester1998some}.
This theme is also the basis for the more recent use of mutual information (MI) between the dataset and the output of the algorithm to derive generalization bounds. The approach was first proposed in the context of adaptive data analysis by \citet{dwork2015generalization}, who used max-information to derive high-probability generalization bounds.
Building on this approach, \citet{russo2019much} proposed using the classical notion of MI to derive (in expectation) generalization bounds, with numerous subsequent works strengthening and applying their results \citep{raginsky2016information,xu2017information,feldman2018calibrating,bu2020tightening}. More recent developments in this line of work rely on the notion of {\em conditional mutual information} (CMI). Here, the conditioning is over a ghost sample which is different from the conditioning over the data distribution (i.e. $\theta$) considered in this work \citep{steinke2020reasoning}. The CMI roughly measures \emph{identifiability} of the samples in the dataset given the model. It is closely related to membership inference attacks
\citep{shokri2017membership,carlini2022membership,attias2024information}.

To demonstrate limitations of generalization methods based on MI, \citet{livni2024information} considers the setting of stochastic convex optimization (SCO). In this setting, he proves a lower bound on the MI of learners achieving asymptotically optimal error, which scales as $d/n^C$ for some constant $C$. While superficially this lower bound is similar to our result, the goal of the problem therein is to estimate an unknown $d$-dimensional parameter.
Moreover,
the coordinates of this parameter are chosen independently and thus the estimation problem requires effectively solving $d$ independent one-dimensional problems. This is in contrast to our setting, in which the problem is binary classification, and we make no assumptions on the representation of the model. We also remark that the trade-off in \citep{livni2024information} appears to be mostly an artifact of the proof, with natural algorithms achieving
nearly-optimal rates requiring excess memorization of $\Omega(d)$ bits for any $n$. \citet{attias2024information} recently proved lower bound on CMI in the SCO setting demonstrating that CMI cannot be used to recover known generalization bounds for SCO. By the nature of the definition, the CMI is at most $n$ and thus the lower bounds of the CMI are incomparable to the trade-offs in our work. 

Several works study lower bounds on the MI in the context of distribution-independent PAC learning of threshold functions establishing lower bounds which, at best, only scale logarithmically with the
description length of examples (which in our instances corresponds to the dimension) \citep{bassily2018learners,nachum2018direct,livni2020limitation}.

Lower bounds on memorization are also directly implied by lower bounds on differentially private algorithms. This follows from well-known results showing that the output of any $(\eps,\delta)$ differentially private algorithm $\Acal$ has bounded mutual information with the dataset \citep{mcgregor2010limits,bassily2018learners,feldman2018calibrating,duchi2019lower}. Specifically, for any distribution $P$ over $\Xcal$ and $X_{1:n}\sim P^n$, $I(\Acal(X_{1:n}),X_{1:n} ) = O(\eps^2 n + \delta n (H(P)+\log(1/\delta)))$ (see \citep{brown2021memorization} for an outline of a special case of this statement). In particular, our lower bounds on memorization directly imply lower bounds on accurate differentially private algorithms for solving the learning problems we defined.

The literature on phenomena related to memorization relies on a large variety of mostly informal notions. In the context of data extraction attacks, the definitions rely on the success of specific attacks that either feed a partial prompt \citep{carlini2021extracting} or examine the relative likelihood of training data under the model \citep{carlini2019secret}. Such definitions are useful for analyzing the success of specific attacks but are sensitive to the learning algorithms. In particular, minor changes to the algorithm can greatly affect the measures of memorization. They also do not distinguish between memorization of data relevant to the learning problem (e.g. memorization of capitals of countries in the context of answering geographic queries) from the irrelevant one.

Another related class of definitions considers memorization resulting from fitting of noisy data points (e.g.~\citealp{zhang2017rethinking}). Such memorization is referred to as {\em label} memorization and does not, in general, require memorization of data points that we study here. The known formal definition is not information-theoretic but rather directly examines the influence of the data point on the label \citep{feldman2020does}. At the same time, both label memorization and excess data memorization appear to be artifacts of learning from long-tailed data distributions.

On a technical side, SDPIs have a number of important applications in machine learning (and beyond), most notably, in the context of privacy preserving and/or distributed learning and estimation. We refer the reader to the book of \cite{polyanskiy2024information} for a detailed overview. These applications of SDPIs are not directly related to our use.

\section{Formal Problem Setting}\label{sec: formal setting}

\paragraph{Notation.}
We abbreviate a sequence $X_1,\dots,X_n$ by $X_{1:n}$.
$I_d$ denotes the $d\times d$ identity matrix, $0_d$ denotes the $d$-dimensional zero vector, and we occasional omit the subscript when the dimension is clear from the context.
Given a finite set $S$, we denote by $\Delta(S)$ the set of all distributions over $S$,
and by $\Ucal(S)$ the uniform distribution over $S$.
We denote $X\indep Y$ when $X,Y$ are independent random variables.
$\|\cdot\|$ denotes the Euclidean norm, and $d_{\TV}(\cdot,\cdot)$ denotes the total variation distance (which when applied to random variables, is the distance between their corresponding distributions).
For $x\in\{\pm1\}^d$, $\BSC_{p}(x)$ denotes the product distribution over $\{\pm1\}^d$ in which for $Y\sim \BSC_p(x)$, $\Pr[Y_i \ne x_i] =1-\Pr[Y_i=x_i]= p$ independently for every $i\in[d]$. The mapping from $x$ to $Y$ is usually referred to as the {\em binary symmetric channel (BSC)} in the context of data processing inequalities. Given random variables $X,Y,Z$, we denote by $H(X)$ the entropy of $X$,\footnote{We slightly abuse information theoretic notation by using it both for discrete and continuous random variables. In the latter case, definitions are with respect to the differential entropy.} by $I(X;Y):=H(X)-H(X\,|\, Y)$ the mutual information between $X$ and $Y$, and by $I(X;Y\,|\,Z):=\E_Z[I(X \,|\, Z;Y\,|\, Z)]$ the conditional mutual information.

\paragraph{Formal setting.}
We now formalize the problem setting we consider throughout this work, outlined in the introduction. 
The learning algorithm's goal is binary classification of a point drawn from a mixture distribution in which
with probability $1/2$ a (positive) point is drawn from the parameter dependent ``cluster'' distribution $\Pcal_\theta$ over $\Xcal$,
and with probability $1/2$ a (negative) point is drawn from a fixed ``null'' distribution $\Pcal_0$. 
Formally, for a problem parameter $\theta$, let $(X^\test,Y^\test)\sim\Pcal_\theta^{\test}$ be the mixture distribution over $\Xcal\times\{0,1\}$ such that with probability $\half:~X^\test\sim\Pcal_\theta$ and $Y^\test=1$; otherwise with probability $\half:~X^\test\sim\Pcal_0$ and $Y^\test=0$.
The algorithm returns a classifier $h:\Xcal\to\{0,1\}$,
aiming at minimizing the classification error:
\[
\err(h):=\Pr\displaystyle_{
\substack{(X^\test,Y^\test)\sim\Pcal_\theta^{\test}}
}
\left[h(X^\test)\neq Y^\test\right]
~.
\]

Note that $\Pcal_0$ is fixed and therefore negative examples do not carry any information about the learning problem (and can be generated by the learning algorithm itself). Therefore, without loss of generality, we can assume that the training dataset given to a learning algorithm $\Acal$ consists of $n$ positive data points  $X_1,\dots,X_n\overset{iid}{\sim}\Pcal_\theta$.

In our problems, the parameter $\theta$ will be sampled from a known meta-distribution $\theta\sim\Psi$ and we denote the resulting (average-case) learning problem by $\bP=(\Pcal_\theta)_{\theta\sim\Psi}$. Further, the null distribution $\Pcal_0$ will always be equal to the marginal distribution of $X\sim \Pcal_{\theta}$ for $\theta\sim\Psi$ and thus we do not specify it explicitly. The loss of a learning algorithm $\Acal$ on the problem $\bP=(\Pcal_\theta)_{\theta\sim\Psi}$ is defined accordingly as
\[
\err(\Acal,\bP):=\E_{\substack{
\theta\sim\Psi
\\X_1,\dots,X_n\sim\Pcal_\theta
\\h\leftarrow\Acal(X_{1:n})}}
\left[
\err(h)
\right]
~.
\]

We also recall the definition of excess data memorization for an algorithm $\Acal$ and data distribution $\Pcal_\theta$ \citep{bassily2018learners,brown2021memorization}:
\begin{equation*}
\mem_n(\Acal,\Pcal_\theta)
:=I\left(\Acal(X_{1:n});X_{1:n}\right)~,
\end{equation*}
where $X_{1:n}\sim\Pcal_\theta^n$.
For an average case problem $\bP= (\Pcal_\theta)_{\theta\sim\Psi}$ excess memorization for $\Acal$ is defined as 
\begin{align*}
\mem_n(\Acal,\bP)
:=\E_{\theta\sim\Psi}[\mem_n(\Acal,\Pcal_\theta)]
&=I\left(\Acal(X_{1:n});X_{1:n}\mid\theta\right)
~.
\end{align*}
We also denote the minimal (i.e., necessary)
memorization for algorithms with error of at most $\alpha$ by 
\[
\mem_n(\bP,\alpha) := \inf_{\Acal: \,\err(\Acal,\bP)\leq \alpha}  \mem_n(\Acal,\bP)~,
\]
and let $\mem_n(\bP) := \mem_n(\bP, 1/3)$.

\section{General Framework: SDPIs and Memorization}
\label{sec: SDPI and MEM general}

In this section, we will introduce the main machinery that allows us to derive excess memorization lower bounds via \emph{strong data processing inequalities} (SDPIs).
We start by recalling the definition of an SDPI. Specifically, we will consider so called {\em input-dependent} SDPIs in which the marginal distribution of the source random variable is fixed \citep{polyanskiy2024information}.

\begin{definition}
Given a pair of jointly distributed random variables $(A,B)$, we say that
$A,B$ satisfy $\rho$-SDPI if for any $M$ such that $A\indep M\,|\,B:~I(M;A)\leq \rho I(M;B)$.
\end{definition}

Recall that all random variables satisfy the definition above for $\rho=1$, which is simply the ``regular'' data processing inequality (DPI).
A \emph{strong} DPI refers to the case $\rho<1$. This means that any $M$ which is a post-processing of $B$ (i.e., $A\indep M\,|\,B$), has \emph{strictly} less information about $A$ than it does about $B$. Equivalently, any post-processing of $B$ must have $\rho^{-1}$-times \emph{more} information about $B$. This observation will serve as the basis of our results, and accordingly, we will aim to prove SDPIs with a small SDPI constant $\rho$.

SDPIs constitute a fundamental concept in information theory dating back to \citet{ahlswede1976spreading}, and their study remains an active area of research
\citep{raginsky2016strong,polyanskiy2017strong,polyanskiy2024information}.
Here we recall two canonical examples of SDPI in which the coefficient $\rho$ results from weak correlation between the marginals.

\begin{fact}
[\citealp{polyanskiy2024information}, Example 33.7+Proposition 33.11]
\label{fact: classic SDPI gauss}
Suppose $(A,B)$ is a $2d$-dimensional Gaussian distribution such that
marginally $A,B\sim\Ncal(0_d,I_d)$ and for each coordinate $i\in[d]:~\E[A_i\cdot B_i]=\sqrt{\rho}$. Then
$A,B$ satisfy the $\rho$-SDPI.
\end{fact}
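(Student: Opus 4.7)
The plan is to reduce the $2d$-dimensional statement to the scalar case by tensorization of the KL contraction coefficient, and then establish the scalar SDPI via the Ahlswede--G\'acs identity relating the contraction coefficient to maximal correlation.

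First, the hypothesis that $(A,B)$ is jointly Gaussian with marginals $\Ncal(0_d,I_d)$ and per-coordinate cross-covariance $\sqrt{\rho}$ (with no cross-coordinate covariance, which is the natural reading) implies that $\{(A_i,B_i)\}_{i=1}^d$ are mutually independent, each being a bivariate Gaussian with unit marginals and correlation $\sqrt{\rho}$. Consequently, the conditional law $P_{A\mid B}$ factors as a product of $d$ identical scalar Gaussian channels.

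Second, I would invoke the classical tensorization property of the KL contraction coefficient: if $\eta_{\mathrm{KL}}(P_{A_1\mid B_1})$ denotes the smallest $\rho$ for which the scalar pair satisfies the SDPI, then the product of $d$ identical copies has the \emph{same} contraction coefficient $\eta_{\mathrm{KL}}(P_{A_1\mid B_1})$. This is a standard chain-rule argument (decompose $I(M;A)=\sum_i I(M;A_i\mid A_{<i})$, exploit independence of the coordinate pairs, and apply the scalar SDPI term by term). This step reduces the proof to the scalar case: showing that a bivariate Gaussian with correlation $\sqrt{\rho}$ satisfies the $\rho$-SDPI.

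Third, for the scalar case I would use two ingredients: (i) the Ahlswede--G\'acs inequality $\eta_{\mathrm{KL}}(P_{A\mid B})\le \rho^*(A,B)^2$, where $\rho^*$ denotes the Hirschfeld--Gebelein--R\'enyi maximal correlation $\sup_{f,g}\E[f(A)g(B)]$ over zero-mean unit-variance $f(A),g(B)$; and (ii) the classical fact that for jointly Gaussian pairs one has $\rho^*(A,B)=|\mathrm{Corr}(A,B)|$, with the supremum attained by the identity functions. The second fact follows from the Hermite polynomial expansion in $L^2$ of the Gaussian measure, since Hermite polynomials are eigenfunctions of the Ornstein--Uhlenbeck semigroup with eigenvalues equal to powers of the correlation. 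Combining (i) and (ii) gives $\eta_{\mathrm{KL}}\le \rho^*(A,B)^2 = \rho$, establishing the $\rho$-SDPI.

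The main obstacle is the Ahlswede--G\'acs bound in (i), which is not elementary: the standard proof goes through a local/small-perturbation analysis linking the KL contraction coefficient to the $\chi^2$-contraction coefficient (which equals $\rho^{*2}$ by a direct $L^2$ argument), and then globalizes the inequality via tensorization of $\chi^2$. Since the fact is cited from \citep{polyanskiy2024information}, I would simply invoke this bound rather than reprove it; the rest of the argument (tensorization plus the Gaussian maximal correlation identity) is then a short deduction.
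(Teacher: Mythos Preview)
The paper does not prove this statement at all: it is recorded as a \texttt{Fact} and simply attributed to \citep{polyanskiy2024information}, Example~33.7 and Proposition~33.11, with no argument given in the paper itself. Your outline is a correct sketch of precisely the argument those citations encode---Proposition~33.11 is the Ahlswede--G\'acs bound $\eta_{\mathrm{KL}}\le (\rho^*)^2$, Example~33.7 is the computation $\rho^*(A,B)=|\mathrm{Corr}(A,B)|$ for jointly Gaussian pairs, and tensorization lifts the scalar result to $d$ coordinates---so your proposal is both correct and aligned with the intended reference.
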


\begin{fact}
[\citealp{polyanskiy2024information}, Example 33.2]
\label{fact: classic SDPI bool}
Suppose $(A,B)$ are $\sqrt{\rho}$-correlated uniform Boolean vectors, namely $A\sim\Ucal(\{\pm1\}^d),~B=\BSC_{\frac{1-\sqrt{\rho}}{2}}(A)$.
Then $A,B$ satisfy the $\rho$-SDPI.
\end{fact}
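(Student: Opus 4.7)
The plan is to first reduce the $d$-dimensional statement to the one-dimensional case via tensorization of the SDPI constant, and then prove the one-dimensional case by a direct KL contraction argument.

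\textbf{Reduction to KL contraction.} It suffices to establish the KL contraction statement: $D(TQ \,\|\, P_A) \leq \rho \cdot D(Q \,\|\, P_B)$ for every distribution $Q$ over $B$'s space, where $T$ denotes the channel $B \to A$ (cf.\ Chapter 33 of \citet{polyanskiy2024information}). This implies the mutual-information SDPI by decomposing $I(M; A) = \sum_m \Pr[M=m] \, D(P_{A|M=m} \,\|\, P_A)$ and using $A \indep M \,|\, B$ to write $P_{A|M=m} = T \cdot P_{B|M=m}$. In our setting $P_A = P_B = \Ucal(\{\pm 1\}^d)$, and by the symmetry of $\BSC$, the channel $T$ from $B$ back to $A$ is also a $\BSC_p$. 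So it suffices to show $D(\BSC_p^{\otimes d} \cdot Q \,\|\, \Ucal) \leq (1-2p)^2 \cdot D(Q \,\|\, \Ucal)$ for every $Q$ on $\{\pm 1\}^d$.

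\textbf{One-dimensional case.} For $d = 1$, any distribution $Q$ on $\{\pm 1\}$ is parameterized by $Q(1) = \tfrac12 + \alpha$, and a direct calculation gives that $\BSC_p \cdot Q$ has parameter $(1-2p)\alpha$. Letting $f(\alpha) := D(Q \,\|\, \Ucal(\{\pm 1\}))$, I would use the identity $f(\alpha) = \tfrac12 \log(1 - 4\alpha^2) + \alpha \log \tfrac{1+2\alpha}{1-2\alpha}$ together with the standard series for $\log(1-x)$ and $\mathrm{arctanh}$ to derive the power-series expansion
\[
f(\alpha) = \sum_{j \geq 1} \frac{4^j \alpha^{2j}}{2j(2j-1)}~,
\]
whose coefficients are all non-negative. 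Since $(1-2p)^{2j} \leq (1-2p)^2$ for all $j \geq 1$, a term-by-term comparison yields $f((1-2p)\alpha) \leq (1-2p)^2 f(\alpha)$, proving the one-dimensional KL contraction with constant $(1-2p)^2 = \rho$.

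\textbf{Tensorization and main obstacle.} To lift the one-dimensional bound to general $d$, I would invoke the tensorization property of the SDPI constant: if $(A_i, B_i)_{i\in[d]}$ are mutually independent pairs each satisfying $\rho$-SDPI, then the joint pair $(A_{1:d}, B_{1:d})$ also satisfies $\rho$-SDPI. This classical result (going back to \citet{ahlswede1976spreading}; see also Theorem 33.13 in \citet{polyanskiy2024information}) is proved via the chain rule $I(M; B_{1:d}) = \sum_i I(M; B_i \,|\, B_{<i})$ and careful use of the product structure of the joint distribution. This tensorization step is the main non-trivial ingredient: the one-dimensional KL contraction is essentially elementary once the series expansion is identified, but the contraction coefficient of KL does not tensorize for arbitrary channels, and the conditional mutual information terms require careful treatment. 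In our case, the product form of both $P_A = P_B$ and the channel $\BSC_p^{\otimes d}$ is exactly what enables the tensorization to go through with the same constant $(1-2p)^2 = \rho$.
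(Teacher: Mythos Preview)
The paper does not prove this statement; it is quoted as a fact from \citet{polyanskiy2024information} (Example 33.2) with no argument given. Your proposal is correct and follows exactly the standard route one finds in that reference: identify the mutual-information SDPI constant with the KL contraction coefficient, compute the latter for a single BSC via the convex power-series expansion of $D(\mathrm{Ber}(\tfrac12+\alpha)\,\|\,\mathrm{Ber}(\tfrac12))$, and then tensorize. The series computation and the term-by-term comparison are right.

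One small quibble: your remark that ``the contraction coefficient of KL does not tensorize for arbitrary channels'' is misleading. The KL SDPI constant \emph{does} tensorize for product channels with product input distributions---that is precisely the content of the theorem you cite (Ahlswede--G\'acs / Theorem 33.13 in Polyanskiy--Wu), and it holds in full generality, not just for BSC. What fails to tensorize in general are contraction coefficients for other $f$-divergences (e.g.\ total variation), or the KL coefficient when the input distribution is not product. So the tensorization step here is routine rather than delicate; the only real computation is the one-dimensional case.
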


In some applications, using ``approximate'' SDPIs appears to be crucial for achieving meaningful lower bounds, so we introduce the following approximate version of this notion.

\begin{definition}
Given a pair of jointly distributed random variables $(A,B)$, we say that
$A,B$ satisfy
$(\rho,\delta)$-approximate-SDPI, or simply $(\rho,\delta)$-SDPI, if
they can be coupled with a random variable $\tilde{B}$ such that: (1) $A$ and $\tilde{B}$ satisfy $\rho$-SDPI, and (2) for every $a\in \mathrm{supp}(A)$, we have $d_{\TV}(\tilde{B}\mid A = a, B \mid A = a ) \le \delta$.
\end{definition}

Clearly, the definition above reduces to the standard notion of SDPI when $\delta=0$, with $\tilde{B}=B$.
Its main utility is through the following result, which quantifies how the $\delta$-approximation results in an additive factor to the standard SDPI setting.

\begin{proposition} \label{prop: approximate sdpi}
    Suppose $(A,B)$ are joint random variables satisfying $(\rho,\delta)$-SDPI.
    Let $\Acal:\mathcal{B}\to\mathcal{M}$ be a (possibly randomized) post-processing of $B$, where $\mathcal{B}$ denotes the support of $B$. Then
    \[
    I(\Acal(B);A)\leq \rho I(\Acal(B);B)+
    8\delta\log(|\Mcal|/\delta)~.
    \]
\end{proposition}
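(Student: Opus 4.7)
My strategy is to import the exact $\rho$-SDPI that governs $(A,\tilde B)$, apply it to an appropriately coupled copy of the post-processing, and then pay continuity errors to swap $\tilde B$ back to $B$. Concretely, let $(A,B,\tilde B)$ be coupled as guaranteed by the $(\rho,\delta)$-SDPI assumption, let the internal randomness of $\Acal$ be sampled independently of $(A,B,\tilde B)$, and define $\Acal(\tilde B)$ by applying the same kernel (extending $\Acal$ to the joint support if necessary; its behavior off $\mathrm{supp}(B)$ is irrelevant). Since $A\indep \Acal(\tilde B)\mid \tilde B$, the exact SDPI yields $I(\Acal(\tilde B);A)\leq \rho\,I(\Acal(\tilde B);\tilde B)$. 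What remains is to pass from the $\tilde B$-side to the $B$-side on both ends of this inequality, losing only $O(\delta\log(|\Mcal|/\delta))$.

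For the left end, the TV data processing inequality gives $d_{\TV}(\Acal(B)\mid A{=}a,\Acal(\tilde B)\mid A{=}a)\leq\delta$ for every $a$. A Fannes--Csisz\'ar continuity bound on the output space $\Mcal$, applied to each conditional and averaged over $a$, together with the same bound on the $\Mcal$-marginals, gives
\begin{equation*}
|I(\Acal(B);A)-I(\Acal(\tilde B);A)|\leq 2\delta\log|\Mcal|+2H_2(\delta).
\end{equation*}
For the right end, a naive joint Fannes bound on $(\Acal(B),B)$ versus $(\Acal(\tilde B),\tilde B)$ would depend on $|\mathrm{supp}(B)|$, which may be arbitrarily large; the crucial observation is that the kernel $b\mapsto\Acal(b)$ is \emph{identical} in the two cases. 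Decomposing $I(\Acal(B);B)=H(\Acal(B))-H(\Acal(B)\mid B)$ and likewise for $\tilde B$, the conditional entropies equal $\E[g(B)]$ and $\E[g(\tilde B)]$ for the same function $g(b):=H(\Acal(b))\in[0,\log|\Mcal|]$, so they differ by at most $\delta\log|\Mcal|$ since $d_{\TV}(B,\tilde B)\leq\delta$; the $\Mcal$-marginals are also $\delta$-close in TV (by TV DPI), so Fannes contributes an additional $\delta\log|\Mcal|+H_2(\delta)$. Altogether
\begin{equation*}
|I(\Acal(B);B)-I(\Acal(\tilde B);\tilde B)|\leq 2\delta\log|\Mcal|+H_2(\delta).
\end{equation*}

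Chaining these, $I(\Acal(B);A)\leq \rho\,I(\Acal(B);B)+4\delta\log|\Mcal|+3H_2(\delta)$, and using the crude estimate $H_2(\delta)\leq 2\delta\log(1/\delta)+2\delta$ this is at most $\rho\,I(\Acal(B);B)+8\delta\log(|\Mcal|/\delta)$ with constants to spare. I anticipate the main obstacle to be the right-end continuity step: a blind application of MI continuity would introduce a $\log|\mathrm{supp}(B)|$ factor and destroy the bound. Exploiting that the two joint laws share the same conditional kernel and only the input marginal differs collapses the continuity argument to the output space $\Mcal$ alone, which is precisely the dependence the proposition allows.
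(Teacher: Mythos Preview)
Your approach is essentially the same as the paper's. Both arguments couple $(A,B,\tilde B)$, apply the exact $\rho$-SDPI to $(A,\tilde B)$ with the same post-processing kernel, and then swap $\tilde B$ back to $B$ on both sides via entropy-continuity bounds that live on the output alphabet $\Mcal$ only---exactly the key insight you highlight. The only cosmetic difference is that the paper proves its own continuity lemma $|H(P)-H(Q)|\le 2\delta\log(|\Mcal|/\delta)$ (their Lemma~\ref{fact:tv-imply-entropy-bound}) rather than invoking Fannes--Csisz\'ar, but the decompositions $I=H-H(\cdot\mid\cdot)$ and the treatment of $H(\Acal(B)\mid B)=\E[g(B)]$ with $g(b)=H(\Acal(b))\le\log|\Mcal|$ are identical; your final arithmetic to land the constant $8$ is slightly tighter than you suggest (it needs $|\Mcal|\ge 2$ or $\delta$ not too close to $1$), but this matches the paper's own level of care.
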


\begin{remark}
Throughout the paper, we use the convention $\delta\log(|\Mcal|/\delta)=0$ when $\delta=0,|\Mcal|=\infty$.
\end{remark}

The proof of Proposition~\ref{prop: approximate sdpi} is provided in Appendix~\ref{sec: proofs framework}.
The rest of this section is structured as follows. In Section~\ref{sec: sdpi to mem} we show
how to bound the quantity of interest $\mem_n(\Acal,\bP)$ whenever certain SDPIs are present in the learning problem.
Subsequently, in Section~\ref{sec: sdpi reduction}, we will address the issue of when we should expect the required SDPIs to hold, and how to compute their corresponding coefficients via an approximate reduction.

\subsection{SDPIs imply Memorization} \label{sec: sdpi to mem}

We now present the main result that relates excess memorization to SDPIs.
Recalling that the problem definition involves the distributions $\Pcal_\theta$ and the parameter distribution $\Psi$, the next theorem shows that memorization is necessary whenever certain SDPIs hold in the learning problem.

\begin{theorem} \label{thm: excess mem}
    Let $\bP=(\Pcal_\theta)_{\theta\sim\Psi}$ be a learning problem satisfying the following:
    \begin{enumerate}
    \item (Data generation SDPI) The variables $(\theta, X_{1:n})$ for $\theta \sim\Psi$ and $X_{1:n}\sim\Pcal_\theta^n$ satisfy $(\tau_n,\epsilon_n)$-SDPI.
    \item (Test/train SDPI) The variables $(X,X_{1:n})$ for $\theta \sim\Psi$, 
    $X\sim\Pcal_\theta$ and $X_{1:n} \sim\Pcal_\theta^n$ satisfy $(\rho_n,\delta_n)$-SDPI.
    \end{enumerate}
Then any algorithm $\Acal:\Xcal^n\to\Mcal$ for $\bP$ satisfies the excess memorization bound: 
\[
\mathrm{mem}_n(\Acal,\bP)
\geq \frac{1-\tau_n}{\rho_n}I(\Acal(X_{1:n});X) -
\mathrm{neg}_n
~,
\]
where $\mathrm{neg}_n:=8\delta_n\frac{1-\tau_n}{\rho_n}\log(|\Mcal|/\delta_n)+8\epsilon_n\log(|\Mcal|/\epsilon_n)$.
Moreover, for any $\alpha<\half:$
\[
\mem_n(\bP,\alpha)
\geq \frac{1-\tau_n}{\rho_n} C_{\alpha} -\mathrm{neg}_n
~,
\text{~~~where~~~}C_{\alpha} := (1-2\alpha)\log\left(\frac{1-\alpha}{\alpha}\right)~.
\]
\end{theorem}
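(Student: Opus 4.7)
The strategy is to decompose the excess memorization via a chain-rule identity for mutual information, apply each of the two assumed approximate SDPIs via Proposition~\ref{prop: approximate sdpi}, and finally convert classification error into mutual information.

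First, I note that since $\Acal(X_{1:n})$ is a (randomized) function of $X_{1:n}$ whose internal randomness is independent of $\theta$, the Markov chain $\theta \to X_{1:n} \to \Acal(X_{1:n})$ holds and hence $I(\Acal(X_{1:n});\theta \mid X_{1:n}) = 0$. The chain rule then yields
\[
\mem_n(\Acal,\bP) \;=\; I(\Acal(X_{1:n});X_{1:n}\mid\theta) \;=\; I(\Acal(X_{1:n});X_{1:n}) - I(\Acal(X_{1:n});\theta).
\]
I next apply Proposition~\ref{prop: approximate sdpi} with $(A,B) = (\theta, X_{1:n})$; the required independence $\theta \indep \Acal(X_{1:n}) \mid X_{1:n}$ is immediate from the same Markov chain. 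This yields
\[
I(\Acal(X_{1:n});\theta) \;\leq\; \tau_n\, I(\Acal(X_{1:n});X_{1:n}) + 8\epsilon_n\log(|\Mcal|/\epsilon_n),
\]
so that $\mem_n(\Acal,\bP) \geq (1-\tau_n)\, I(\Acal(X_{1:n}); X_{1:n}) - 8\epsilon_n\log(|\Mcal|/\epsilon_n)$.

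Second, I apply Proposition~\ref{prop: approximate sdpi} with $(A,B) = (X, X_{1:n})$; the independence $X \indep \Acal(X_{1:n}) \mid X_{1:n}$ again holds since $\Acal$ is determined by $X_{1:n}$. This gives $I(\Acal(X_{1:n}); X_{1:n}) \geq \rho_n^{-1}\bigl(I(\Acal(X_{1:n});X) - 8\delta_n\log(|\Mcal|/\delta_n)\bigr)$. Substituting into the previous lower bound on $\mem_n(\Acal,\bP)$ produces the first displayed inequality of the theorem, with the two slack terms aggregating into $\mathrm{neg}_n$: the $\delta_n$ term inherits the factor $(1-\tau_n)/\rho_n$ from the chaining, while the $\epsilon_n$ term does not.

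Third, to obtain the second displayed inequality, it suffices to show $I(\Acal(X_{1:n}); X) \geq C_\alpha$ whenever $\err(\Acal, \bP) \leq \alpha$. Writing $h := \Acal(X_{1:n})$, and setting $p_1 := \Pr[h(X)=1]$ (with $X\sim\Pcal_\theta$) and $p_0 := \Pr[h(X')=1]$ (with $X'\sim\Pcal_0$ independent of $h$), the decomposition of the average error and the fact that $\Pcal_0$ equals the marginal of $X$ together yield $p_1 - p_0 = 1 - 2\alpha$. Applying the data-processing inequality for KL divergence via the test function $(h,x) \mapsto h(x)$ gives $I(\Acal(X_{1:n}); X) \geq \kl{\mathrm{Ber}(p_1)}{\mathrm{Ber}(p_0)}$, and a standard binary-hypothesis-testing bound shows this is at least $\kl{\mathrm{Ber}(1-\alpha)}{\mathrm{Ber}(\alpha)} = C_\alpha$. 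Taking the infimum over algorithms with error at most $\alpha$ concludes the argument.

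The proof is ultimately a careful composition of existing tools, so the main obstacle is bookkeeping rather than a new idea: tracking the additive slack terms from the two approximate SDPIs through the chaining so they combine exactly into the claimed $\mathrm{neg}_n$, and verifying the Markov structure needed to invoke Proposition~\ref{prop: approximate sdpi} in both places.
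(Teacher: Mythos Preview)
Your proposal is correct and follows essentially the same approach as the paper's proof: the chain-rule decomposition $\mem_n(\Acal,\bP) = I(\Acal(X_{1:n});X_{1:n}) - I(\Acal(X_{1:n});\theta)$, two applications of Proposition~\ref{prop: approximate sdpi} to handle the two approximate SDPIs, and a Fano-type data-processing argument to convert classification error into a mutual-information lower bound. One minor correction: from $\err(\Acal,\bP)\le\alpha$ you only get $p_1 - p_0 \geq 1-2\alpha$ (not equality), but the subsequent minimization of $\kl{\mathrm{Ber}(p_1)}{\mathrm{Ber}(p_0)}$ under this constraint still yields $C_\alpha$.
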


\begin{figure}
\begin{center}
	\includegraphics[trim=0cm 12.5cm 0cm 12.5cm,clip=true, width=0.8\linewidth]{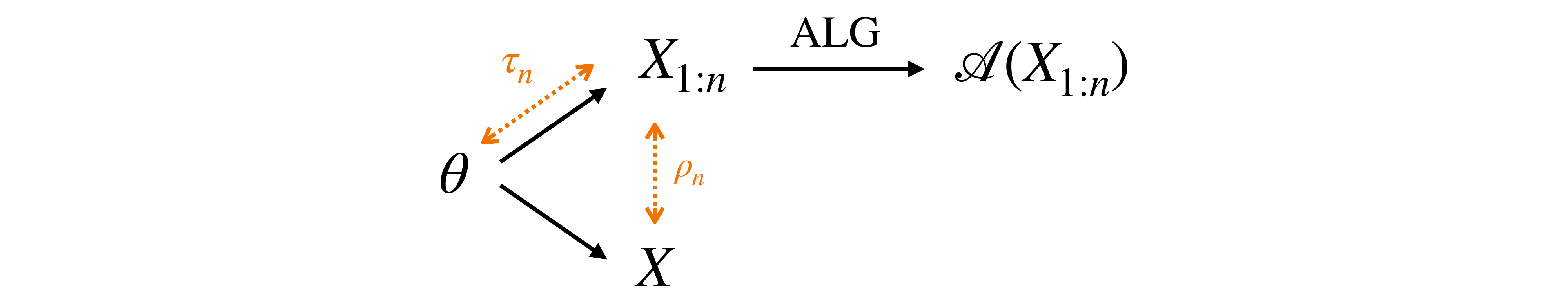}%
	\caption{Illustration for Theorem~\ref{thm: excess mem}.
     Given $\theta\sim\Psi,\,X_{1:n}\sim\Pcal_\theta^n,\,X \sim\Pcal_\theta$, 
     the orange arrows represent two SDPIs, which together necessitate excess memorization on the order of $(1-\tau_n)/\rho_n$.
    }
	\label{fig:SDPI_flow1}
	\end{center}
\end{figure}

\begin{proof}[Proof of Theorem~\ref{thm: excess mem}]
Note that the data generation SDPI implies by Proposition~\ref{prop: approximate sdpi} that
\begin{align} \label{eq: data gen approx sdpi}
\tag{data gen SDPI}
I(\Acal(X_{1:n});\theta)
\leq \tau_n I(\Acal(X_{1:n});X_{1:n})+8\epsilon_n\log(|\Mcal|/\epsilon_n)~.
\end{align}
Similarly, the test/train SDPI implies by Proposition~\ref{prop: approximate sdpi}, after rearrangement, that
\begin{align} \label{eq: test/train approx sdpi}
\tag{test/train SDPI}
I(\Acal(X_{1:n});X_{1:n})\geq
\frac{1}{\rho_n}I(\Acal(X_{1:n});X)-8(\delta_n/\rho_n)\log(|\Mcal|/\delta_n)~.
\end{align}
We therefore see that
\begin{alignat*}{2}
\mathrm{mem}_n(\Acal,\bP)
&=I(\Acal(X_{1:n});X_{1:n}\mid\theta) \qquad &~
\\&=I(\Acal(X_{1:n});X_{1:n},\theta) 
-I(\Acal(X_{1:n});\theta) \qquad &{\color{darkgray}\left[\text{chain rule}\right]}
\\
&=I(\Acal(X_{1:n});X_{1:n})-I(\Acal(X_{1:n});\theta)
\qquad &{\color{darkgray}\left[\Acal(X_{1:n})\indep\theta\,|\,X_{1:n}\right]}
\\
&\geq (1-\tau_n)I(\Acal(X_{1:n});X_{1:n}) -8\epsilon_n\log(|\Mcal|/\epsilon_n)
\qquad &{\color{darkgray}(\text{\ref{eq: data gen approx sdpi}})}
\\
&\geq (1-\tau_n) \left(\frac{1}{\rho_n}\cdot I(\Acal(X_{1:n});X)-(8\delta_n/\rho_n)\log(|\Mcal|/\delta_n)\right)
\qquad &{\color{darkgray}(\text{\ref{eq: test/train approx sdpi}})}
\\&~~~~~
-8\epsilon_n\log(|\Mcal|/\epsilon_n)
~. 
\end{alignat*}
This establishes the first claim.
To further prove the second claim, it remains to show that if
$\err(\Acal, \bP) \le \alpha$ then $I(\Acal(X_{1:n}) ; X) \ge C_\alpha$, namely that a non-trivial error bound implies a mutual information lower bound.
This Fano-type argument is rather standard, and we defer its proof to Appendix~\ref{sec: complete excess mem proof}.
\end{proof}

Before continuing, we discuss the typical use of Theorem~\ref{thm: excess mem}.
Our aim is to show that problems of interest satisfy SDPI with $\tau_n,\rho_n\ll 1$, and to quantify these SDPI constants as a function of $n$.
In our results $\epsilon_n,\delta_n$ will be negligible, resulting in a negligible term $\mathrm{neg}_n$. 
Temporarily ignoring this negligible additive term,
as long as $\tau_n\leq 1/2$ we see that Theorem~\ref{thm: excess mem} implies for any learning algorithm $\Acal:~\mathrm{mem}_n(\Acal,\bP)=\Omega(1/\rho_n)$.
In some of the applications, we will see that $\rho_n = \Theta(n\rho_1)$, which is closely related to ``advanced composition'' from the differential privacy \citep{DworkRV10-boosting}.
Intuitively, this means that the dataset $X_{1:n}$ becomes more correlated with $\theta$ and $X\sim \Pcal_\theta$ as the number of samples grows, hence having more information about test.
Finally, in the simplest setting of interest in which the
inner product
between two independent samples scales as $\sqrt{d}$, even a single sample suffices to achieve low (expected) classification error, yet
in this setting of parameters, $\mathrm{mem}_n(\Acal,\bP)
\gtrsim {1}/{\rho_n}
\gtrsim {1}/{n\rho_1}
\gtrsim {d}/{n}$.

Overall, 
Theorem~\ref{thm: excess mem} reduces proving memorization lower bounds to proving two SDPIs,
and quantifies memorization via the coefficients $\tau_n,\rho_n$.
We remark that the data generation SDPI can be bypassed whenever it is easy (or easier) to prove an explicit upper bound on $I(\Acal(X_{1:n});\theta)$ instead of relating it to $I(\Acal(X_{1:n});X_{1:n})$,
as we do
in our third application in Section \ref{sec: sparse bool}.
The next section addresses the question of computing these coefficients.

\subsection{Proving SDPIs via Dominating Variables} \label{sec: sdpi reduction}

Having established in Theorem~\ref{thm: excess mem} that memorization follows from SDPIs in the process that generates the data, we address the computation of the corresponding SDPI coefficients.
Our derivation of excess memorization lower bounds proceeds by reducing memorization to implicit variables that dominate the learning problem.
The following theorem formalizes this, as illustrated in Figure~\ref{fig:SDPI_flow2}.

\begin{figure}
\begin{center}
	\includegraphics[trim=0cm 10.5cm 0cm 10.5cm,clip=true, width=0.8\linewidth]{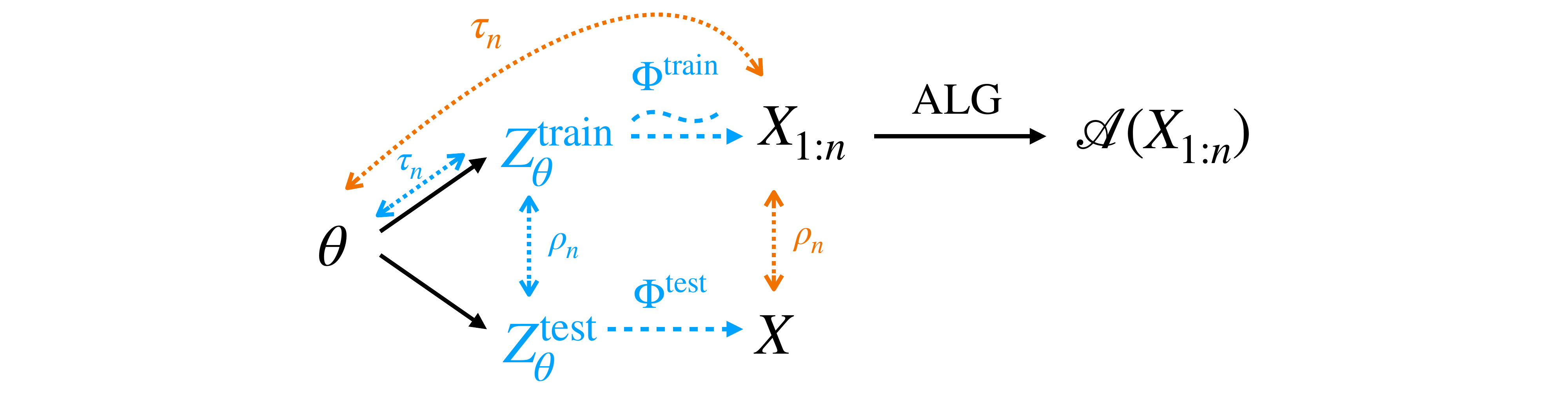}%
	\caption{Illustration of Theorem~\ref{thm: Z SDPI}. 
The blue variables
dominate the learning problem,
and each blue SDPI implies an (approximate) SDPI in orange, resulting in excess memorization.}
	\label{fig:SDPI_flow2}
	\end{center}
\end{figure}

\begin{theorem} \label{thm: Z SDPI}
Let $\bP=(\Pcal_\theta)_{\theta\sim\Psi}$ be a learning problem,
and suppose $(Z_\theta^{\train},Z^\test_\theta)$
are jointly distributed random variables
parameterized by $\theta$ so that
    $Z_\theta^{\train}\indep Z^\test_\theta\mid\theta$,
        and that there are mappings $\Phi^{\train},\Phi^\test$ such that for all $\theta$, $d_{\TV}(\Phi^{\train}(Z_\theta^{\train}),\Pcal_\theta^n)\leq \delta_n$,
        and $\Phi^\test(Z^\test_\theta)\sim\Pcal_\theta$.
Then:
\begin{enumerate}
    \item (Data generation SDPI) If the marginal pair $(\theta,Z_{\theta}^{\train})$ for $\theta\sim\Psi$
 satisfies $\tau_n$-SDPI, then $(\theta, X_{1:n})$ for $\theta \sim\Psi$, $X_{1:n}\sim\Pcal_\theta^n$ satisfy $(\tau_n,\delta_n)$-SDPI.
    \item (Test/train SDPI) If the marginal pair $(Z^\test_\theta,Z_\theta^{\train})$ satisfies $\rho_n$-SDPI, then $(X,X_{1:n})$ for $X\sim\Pcal_\theta$ and $X_{1:n}\sim\Pcal_\theta^n$ satisfies $(\rho_n,\delta_n)$-SDPI.
\end{enumerate}
\end{theorem}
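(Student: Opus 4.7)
The plan is to factor the argument through a general \emph{post-processing lemma} for SDPIs: if $(A,B)$ satisfies $\rho$-SDPI and $A' = f(A)$, $B' = g(B)$ are (possibly randomized) post-processings, then $(A',B')$ also satisfies $\rho$-SDPI. Both claims will then follow by identifying the appropriate $A,B$ among the $Z$-variables and using $\Phi^{\train}$, $\Phi^{\test}$ as post-processings, together with an optimal TV coupling between the $\Phi$-images and the true $X$-variables.

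The key technical step is proving this post-processing lemma. Given any $M$ with $A' \indep M \mid B'$, I would re-simulate $M$ by sampling $\tilde M$ freshly from the conditional $M \mid B'$, independently of everything else given $B'$. Because $A' \indep M \mid B'$, the joint law of $(A',B',\tilde M)$ coincides with that of $(A',B',M)$, so mutual informations involving $A',B'$ are preserved. By construction $\tilde M$ is conditionally independent of $A$ given $B$ (through $B'$), so the SDPI for $(A,B)$ yields $I(\tilde M;A) \leq \rho\, I(\tilde M;B)$. Since $\tilde M - B' - B$ is Markov, $I(\tilde M;B) = I(\tilde M;B') = I(M;B')$, and the DPI gives $I(\tilde M;A) \geq I(\tilde M;A') = I(M;A')$. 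Chaining these yields $I(M;A') \leq \rho\, I(M;B')$, the desired SDPI.

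For Claim~1, I set $\tilde X_{1:n} := \Phi^{\train}(Z^{\train}_\theta)$ and apply the lemma on the $B$-side to the $\tau_n$-SDPI for $(\theta, Z^{\train}_\theta)$, obtaining that $(\theta, \tilde X_{1:n})$ satisfies $\tau_n$-SDPI. Coupling $X_{1:n}$ with $\tilde X_{1:n}$ via the optimal TV coupling conditional on $\theta$ then gives a conditional TV gap of at most $\delta_n$ by hypothesis, which verifies both conditions of $(\tau_n,\delta_n)$-SDPI. For Claim~2, I set $X := \Phi^{\test}(Z^{\test}_\theta)$ and $\tilde X_{1:n} := \Phi^{\train}(Z^{\train}_\theta)$ and apply the lemma to both sides of the $\rho_n$-SDPI for $(Z^{\test}_\theta, Z^{\train}_\theta)$, obtaining the $\rho_n$-SDPI for $(X, \tilde X_{1:n})$.

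The one subtlety in Claim~2 is that the $(\rho_n,\delta_n)$-SDPI definition requires bounding the conditional TV given the $A$-variable $X$, not given $\theta$. Letting $\pi(\cdot \mid x)$ denote the posterior of $\theta$ given $X=x$, both $X_{1:n}\mid X{=}x$ and $\tilde X_{1:n}\mid X{=}x$ are $\pi(\cdot \mid x)$-mixtures of $\Pcal_\theta^n$ and of the distribution of $\Phi^{\train}(Z^{\train}_\theta)$ respectively (using $Z^{\train}_\theta \indep X \mid \theta$, which is inherited from $Z^{\train}_\theta \indep Z^{\test}_\theta \mid \theta$ together with $X$ being a function of $Z^{\test}_\theta$ conditional on $\theta$), and convexity of TV distance gives the bound $\delta_n$. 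I expect the main technical obstacle to be the $A$-side of the post-processing lemma: the hypothesis $A' \indep M \mid B'$ does not directly promote to $A \indep M \mid B$, and the re-simulation trick is precisely what bridges this gap; everything else is coupling bookkeeping.
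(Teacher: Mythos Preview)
Your approach is essentially the paper's, just packaged more cleanly: the paper proves both items inline by the same chain of DPI/SDPI inequalities you write, without isolating a general post-processing lemma. Your re-simulation trick makes rigorous the conditional-independence claims (e.g., $M \indep Z_\theta^{\train} \mid \Phi^{\train}(Z_\theta^{\train})$) that the paper simply asserts, and your posterior-mixture/convexity argument for the conditional TV in Claim~2 is likewise more explicit than the paper's one-line justification; one small slip is that in your chain $I(\tilde M;B) = I(\tilde M;B')$ should be $I(\tilde M;B) \leq I(\tilde M;B')$ (you only have the Markov chain $\tilde M - B' - B$, not the reverse), but this is the direction you need.
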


\begin{proof}[Proof of Theorem~\ref{thm: Z SDPI}]
To prove the first item, recall that for all $\theta$ it holds that $d_{\TV}(\Phi^{\train}(Z_\theta^{\train}),\Pcal_\theta^n)\leq\delta$, so by definition it suffices to show that $(\theta,\Phi^{\train}(Z_\theta^{\train}))$ satisfy $\tau_n$-SDPI.
Indeed, for any $M$ such that $\theta\indep M\,|\,\Phi^{\train}(Z_\theta^{\train}):$

\begin{align*}
I(M;\theta)
\leq \tau_n I(M;Z_\theta^{\train})
\leq \tau_n I(M;\Phi^{\train}(Z_\theta^{\train}))~,
\end{align*}
where the first inequality follows from the data generation SDPI assumption,
and the second inequality follows from the DPI since $Z_{\theta}^{\train}\indep M\,|\,\Phi^{\train}(Z_\theta^{\train})$.

To prove the second item, we first note that it suffices to show that  $(\Phi^\test(Z^\test_\theta),\Phi^{\train}(Z_\theta^{\train}))$ satisfy $\rho_n$-SDPI. This is true since $\Phi^\test(Z^\test_\theta)$ is distributed as $X$ (according to $\Pcal_\theta$) and $Z_\theta^{\train}\indep Z^\test_\theta\mid\theta$. By our assumption, this means that conditioned on any value of $\Phi^\test(Z^\test_\theta)$, the distribution of $\Phi^{\train}(Z_\theta^{\train})$ is $\delta_n$ close in TV distance to $\Pcal_\theta^n$. Thus for every $\theta$ a pair $(X,X_{1:n})\sim \Pcal_\theta \times \Pcal_\theta^n$ can be seen as a sample from $\Phi^\test(Z^\test_\theta)$ and an independent sample from a distribution close in TV distance to the distribution of $\Phi^{\train}(Z_\theta^{\train}))$.

Now, for any $M$ such that $\Phi^{\test}(Z_{\theta}^{\test})\indep M\,|\,\Phi^{\train}(Z_\theta^{\train}):$
\begin{align*}
I(M;\Phi^{\test}(Z_{\theta}^{\test}))
\overset{\mathrm{(1)}}{\leq} I(M;Z_{\theta}^{\test})
\overset{\mathrm{(2)}}{\leq} \rho_n I(M;Z_{\theta}^{\train})
\overset{\mathrm{(3)}}{\leq} \rho_n I(M;\Phi^{\train}(Z_\theta^{\train}))~,
\end{align*}
where $(1)$ is the DPI, $(2)$ follows from the test/train SDPI assumption since $\Phi^{\train}(Z_\theta^{\train})\indep Z^\test_\theta\,|\,Z_\theta^{\train}$ and therefore $M\indep Z^\test_\theta\,|\,Z_\theta^{\train}$,
and $(3)$ follows from the DPI since $M\indep Z_\theta^{\train}\,|\,\Phi^{\train}(Z_\theta^{\train})$.

\end{proof}

The theorem above shows that if a pair of variables $Z_\theta^{\train},Z^\test_\theta$
simulate the train and test data (up to some approximation), 
then we can reduce the computation of the data generation coefficient $\tau_n$ and test/train coefficient $\rho_n$ to these dominating variables with only (presumably small) additive loss.

\begin{remark}
The dominating variables may appear related to the concept of ``sufficient statistics'' (cf. \citealp{cover1999elements,polyanskiy2024information}) of the sample and test data. The main difference is that the discussed variables need not be statistics of the data, i.e. computable from it.
\end{remark}

\section{Applications} \label{sec: app}

In this section, we describe several applications of our framework. Our focus here will be on  problems where high accuracy can be achieved given a single (positive) example. This setting is the closest to our motivating problem of memorization of entire data points. However, by appropriately choosing the parameters, the trade-off can be shown in problems with higher sample complexity.

\subsection{Gaussian Clustering}

We consider a Gaussian clustering problem. Formally, given $\lambda\in(0,1)$ to be fixed later,
the problem $\bP_{\mathrm{G}}=(\Pcal_\theta)_{\theta\sim\Psi}$ is defined as
\begin{align*}
\Pcal_\theta~=~\Ncal(\lambda \theta,(1-\lambda^2)I_d)~,
~~~~~
\theta\sim\Psi~=~\Ncal(0_d,I_d)~.
\end{align*}
Note that the null distribution, namely the marginal of $\Pcal_\theta$ over $\theta\sim\Psi$, equals $\Pcal_0=\Ncal(0_d,I_d)$.
Hence, the problem corresponds to classifying between samples that are $\lambda$-correlated with an unknown parameter $\theta$, and samples that have zero correlation with $\theta$.

Our main result for this problem instance is the following:

\begin{theorem} \label{thm: gaussian lb}
In the Gaussian clustering problem $\bP_{\mathrm{G}}$, assume $\lambda=C d^{-1/4}$ for some sufficiently large absolute constant $C>0$.
Then the following hold:
\begin{itemize}
    \item There exists an algorithm $\Acal$, that given a single sample (i.e. $n=1$) satisfies $\err(\Acal,\bP_{\mathrm{G}}) \leq 0.01$. 
    \item  For $\alpha < 1/2$,
       \[
\mem_n(\bP_{\mathrm{G}},\alpha)\geq 
\frac{1-\lambda^2}{\lambda^4 n}\cdot (1-2\alpha)\log\left(\frac{1-\alpha}{\alpha}\right)~ ,
    \] and, in particular, $\mem_n(\bP_{\mathrm{G}})=\Omega\left(\frac{d}{n}\right)$.
\item
The lower bound above is tight: There is a learning algorithm $\Acal$ such that $\err(\Acal,\bP_{\mathrm{G}}) \leq 0.01$  and $\mem_n(\Acal,\bP_{\mathrm{G}})=O(d/n)$.
\end{itemize}

\end{theorem}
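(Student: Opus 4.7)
The theorem has three parts: a constant-error classifier from a single positive sample, an $\Omega(d/n)$ excess memorization lower bound, and an algorithmic upper bound matching it (up to a $\log(d/n)$ factor) when $n\leq c\sqrt{d}$. The substantive part is the lower bound, obtained by directly instantiating the SDPI framework of Section~\ref{sec: SDPI and MEM general}. For the first bullet, I would analyze the linear classifier $h(x) = \mathbf{1}[\langle X_1, x\rangle > \lambda^2 d/2]$. A direct moment computation gives $\E[\langle X_1, X^{\test}\rangle \mid \theta] \approx \lambda^2\|\theta\|^2 \approx \lambda^2 d = C^2\sqrt{d}$ under the positive hypothesis and $0$ under the negative, with standard deviation of order $\sqrt{d}$ in both cases (from $\|X_1\|\approx\sqrt{d}$ in the negative case and from the cross terms $\lambda\sqrt{1-\lambda^2}\langle\theta, h\rangle$ and $(1-\lambda^2)\langle g, h\rangle$ in the positive case). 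The signal-to-noise ratio is $\Theta(C^2)$, and a standard Gaussian tail bound yields error at most $0.01$ once $C$ is a sufficiently large absolute constant.

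For the lower bound, the plan is to apply Theorem~\ref{thm: Z SDPI} with dominating variables $Z_\theta^{\train} = (\bar X, U)$ and $Z_\theta^{\test} = X$, where $\bar X = \frac{1}{n}\sum_{i=1}^n X_i$ is the empirical mean and $U$ is independent standard Gaussian noise used to realize the residuals. Since $\bar X$ is a sufficient statistic for $\theta$ and the conditional distribution of $X_{1:n}$ given $\bar X$ does not depend on $\theta$, one can build a deterministic $\Phi^{\train}$ with $\Phi^{\train}((\bar X, U)) \sim \Pcal_\theta^n$ exactly, so that $\delta_n = 0$. The two SDPIs then reduce to the jointly Gaussian pairs $(\theta, \bar X)$ and $(X, \bar X)$, which --- after normalizing $\bar X$ to unit variance per coordinate --- fall directly under Fact~\ref{fact: classic SDPI gauss}. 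A short calculation of the resulting coordinatewise correlations gives
\begin{equation*}
\tau_n = \frac{n\lambda^2}{n\lambda^2 + (1-\lambda^2)}, \qquad \rho_n = \frac{n\lambda^4}{n\lambda^2 + (1-\lambda^2)},
\end{equation*}
so that $(1-\tau_n)/\rho_n = (1-\lambda^2)/(n\lambda^4) = \Theta(d/n)$. Plugging into Theorem~\ref{thm: excess mem} with $\epsilon_n = \delta_n = 0$ yields the claimed bound with the stated constant.

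For the matching upper bound, I would use a Gaussian noise-addition algorithm: compute $\bar X$, sample $Z \sim \Ncal(0, \sigma^2 I_d)$ with $\sigma$ a sufficiently small constant multiple of $\lambda^2\sqrt{d}$, and output $h(x) = \mathbf{1}[\langle \bar X + Z, x\rangle > \lambda^2 d/2]$. Since $\bar X$ is a sufficient statistic for $\theta$,
\begin{equation*}
I(\bar X + Z; X_{1:n} \mid \theta) = \tfrac{d}{2}\log\!\bigl(1 + (1-\lambda^2)/(n\sigma^2)\bigr) = O(d/n)~,
\end{equation*}
while the added noise contributes $O(\sigma\sqrt{d}) = O(\lambda^2 d)$ standard deviation to the inner product, which stays below the signal $\lambda^2 d$ thanks to the small constant; as long as $n\leq c\sqrt{d}$, the intrinsic fluctuations of $\langle \bar X, X^{\test}\rangle$ (of order $\lambda\sqrt{d}$ and $\sqrt{d/n}$) are also dominated by the signal, and a tail bound gives error $\leq 0.01$. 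The extra $\log(d/n)$ factor in the stated bound can be absorbed by quantizing $\bar X + Z$ coordinatewise to an appropriate precision before output, which forces the output into a finite set without disturbing accuracy.

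The main obstacle is the SDPI reduction itself: Fact~\ref{fact: classic SDPI gauss} only applies to a precisely structured joint Gaussian pair, yet $Z_\theta^{\train}$ must simultaneously carry enough randomness to reconstruct the entire training set. Introducing the auxiliary independent $U$ fixes this, but one must verify that its presence does not weaken either SDPI. The key observation is that since $U \indep \theta$ (and $U \indep X$), conditioning on $U$ only increases the relevant mutual informations with $\theta$ (resp.\ $X$), while the conditional pairs $(\theta, \bar X)\mid U$ and $(X, \bar X)\mid U$ remain jointly Gaussian with exactly the unconditional parameters, so the conditional SDPIs integrate back to the unconditional ones required by Theorem~\ref{thm: Z SDPI}.
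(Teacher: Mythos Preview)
Your proposal is correct and, for the lower bound, takes essentially the same route as the paper: reduce both SDPIs to the empirical mean $\bar X$ (a sufficient statistic), normalize to a standard Gaussian, read off the correlations, and invoke Fact~\ref{fact: classic SDPI gauss} to obtain exactly $\tau_n=\frac{n\lambda^2}{1+(n-1)\lambda^2}$ and $\rho_n=\frac{n\lambda^4}{1+(n-1)\lambda^2}$ with $\delta_n=\epsilon_n=0$. The only cosmetic difference is your explicit auxiliary $U$: the paper simply lets $\Phi^{\train}$ be randomized (this is permitted by Theorem~\ref{thm: Z SDPI}), so no $U$ is needed. Your justification that adjoining an independent $U$ preserves the SDPI constant is correct, but you could equally just drop $U$ and appeal to a randomized $\Phi^{\train}$.

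For the upper bound you diverge from the paper in a way worth noting. The paper projects $\bar X$ onto its first $\ell=\Theta(d/n)$ coordinates and quantizes each to $\Theta(\log(d/n))$ bits, bounding memorization by the description length; this is where the $\log(d/n)$ in the stated bound comes from. Your Gaussian-noise mechanism instead bounds memorization directly via channel capacity, $I(\bar X+Z;X_{1:n}\mid\theta)=\frac{d}{2}\log\bigl(1+\tfrac{1-\lambda^2}{n\sigma^2}\bigr)=O(d/n)$, which is cleaner and actually shaves the logarithm. Your remark that the ``extra $\log(d/n)$ can be absorbed by quantizing'' is therefore backwards: your bound is already stronger than the one stated, so no absorption is needed (quantization would only be relevant if one insisted on a finite output space, which the memorization definition does not require).
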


\begin{remark} \label{rem: error log}
For our upper bounds, we focus on small constant error chosen as $0.01$ for simplicity. More generally, if the correlation is set $\lambda=Cd^{-1/4}$ for $C=\Theta(\sqrt{\log(1/\alpha)})$, the same statements hold for learners with error at most $\alpha$, affecting only logarithmic terms in the resulting excess memorization.
\end{remark}

We provide here a sketch of the proof of Theorem~\ref{thm: gaussian lb}, which appears in Appendix~\ref{sec: gaussian proof}.

\renewcommand*{\proofname}{\textbf{Proof sketch of Theorem~\ref{thm: gaussian lb}}}
\begin{proof}
The first item follows from standard Gaussian concentration bounds, since for a single sample $X_1\sim\Pcal_\theta=\Ncal(\lambda\theta,(1-\lambda^2)I_d)$, it holds that on one hand for
$X\sim\Pcal_\theta:~\inner{X,X_1}\gtrsim \lambda^2d= C^2\sqrt{d}$ with high probability, while on the other hand for
$X\sim\Pcal_0=\Ncal(0_d,I_d):~\inner{X,X_1}\lesssim \sqrt{d}$. Therefore, for a sufficiently large $C>0$, a linear classifier will have small error.

To prove the second item, we rely on the ideas presented in Section~\ref{sec: SDPI and MEM general}. Particularly,
we note that the information about $\theta$ contained in the dataset $X_1,\dots,X_n\sim\Ncal(\lambda\theta,(1-\lambda^2)I_d)$ is dominated by the empirical average $\hat{X}:=\frac{1}{n}\sum_{i\in[n]}X_i\sim \Ncal(\lambda\theta,\frac{1-\lambda^2}{n}I_d)$.
Noting that $\lambda\theta\sim\Ncal(0_d,\lambda^2 I_d)$, we see that the variance of $\hat{X}$ in every direction is $\lambda^2+\frac{1-\lambda^2}{n}= \frac{1+\lambda^2(n-1)}{n}$.
Therefore, by rescaling $\hat{X}$ by $\sqrt{n/(1+\lambda^2(n-1))}$, we get a Gaussian with unit variance 
$Z_\theta^{\train}= \sqrt{n/(1+\lambda^2(n-1))}\cdot\hat{X}$
which is a dominating variable for the dataset, whose coordinate-wise correlation with a fresh sample $X\sim\Ncal(\lambda\theta,(1-\lambda^2)I)$ is $\lambda^2 \sqrt{n/(1+\lambda^2(n-1))}$. By Fact~\ref{fact: classic SDPI gauss}, this gives us a test/train SDPI with $\rho_n=\lambda^4n/(1+\lambda^2(n-1))$. 
The same dominating variable also proves a data generation $\tau_n$-SDPI by computing the coordinate-wise correlation between $\theta$ and $Z_\theta^{\train}$ as $\lambda\sqrt{n/(1+\lambda^2(n-1))}$, and therefore $\tau_n=\lambda^2n/(1+\lambda^2(n-1))$ once again by Fact~\ref{fact: classic SDPI gauss}.
Plugging these SDPI coefficients into Theorem~\ref{thm: excess mem}, we see that $\mem_n(\LG)\gtrsim (1-\tau_n)/\rho_n
=\frac{1-\lambda^2n/(1+\lambda^2(n-1))}{\lambda^4n/(1+\lambda^2(n-1))}
=\frac{1-\lambda^2}{\lambda^4n}
\approx d/n$.

To prove the last item, we consider an algorithm that computes $\tilde{\mu}=\hat{X}+\eta$ where $\hat{X}\sim \Ncal(\lambda\theta,\frac{1-\lambda^2}{n}I_d)$
is the empirical average and $\eta\sim N(0_d,I_d)$ is an independent isotropic Gaussian, and returns a linear classifier with respect to $\tilde{\mu}$.
The idea is that Gaussian concentration arguments ensure that on one hand for
$X\sim\Pcal_\theta:\,\inner{X,\tilde{\mu}}\gtrsim \lambda^2\|\theta\|^2\gtrsim C^2\sqrt{d}$
with high probability,
while on the other hand for
$X\sim\Pcal_0:\,\inner{X,\tilde{\mu}}\lesssim C\sqrt{d}$, and so setting a threshold of $C^{3/2}\sqrt{d}$ achieves low error. To bound the algorithm's memorization, we analyze the conditional mutual information $I(\tilde{\mu},\hat{X}\mid\theta)= \E_{\theta\sim\Psi, u \sim \hat{X}}[D_{\mathrm{KL}}(u + \eta \,||\, \hat{X} + \eta)]$, and calculate that the latter is bounded by $O(d/n)$ using the formula for the KL divergence between Gaussians.

We remark that a previous version of this work provided a different algorithm, which exhibits a nearly-tight memorization upper bound (up to a log factor), yet it only applied when $n\lesssim \sqrt{d}$. This algorithm is based on a projection approach which we will also use for the Boolean application in the next section. For completeness, we include this algorithm as well as its analysis in the appendix as well.
\end{proof}
\renewcommand*{\proofname}{Proof}

\subsection{Boolean Clustering}

We consider a Boolean clustering problem, which is the Boolean analogue of the previously discussed Gaussian mean estimation problem. Formally, given $\lambda\in(0,1)$ to be chosen later, the problem $\bP_{\mathrm{B}}=(\Pcal_\theta)_{\theta\sim\Psi}$ corresponds to
\begin{align*}
\Pcal_\theta~=~\BSC_{\frac{1-\lambda}{2}}(\theta)~,
~~~~~
\theta\sim\Psi~=~\Ucal(\{\pm1\}^d)~.
\end{align*}
Note that the null distribution is uniform $\Pcal_0=\Ucal(\{\pm1\}^d)$.
Namely, given samples correlated with $\theta$ (which can be thought of as cluster around $\theta$),
the problem corresponds to classifying between fresh samples that are $\lambda$-correlated coordinate-wise with some $\theta$, and uniformly generated samples.
Our main result for this problem instance is the following:

\begin{theorem} \label{thm: boolean lb}
In the Boolean clustering problem $\bP_{\mathrm{B}}$, assume $\lambda=C d^{-1/4}$ for some sufficiently large absolute constant $C>0$.
Then the following hold:
\begin{itemize}
    \item There exists an algorithm $\Acal$, that given a single sample (i.e. $n=1$) satisfies $\err(\Acal,\bP_{\mathrm{B}}) \leq 0.01$. 
        \item For $\alpha<\half$,
\[
\mathrm{mem}_n(\bP_{\mathrm{B}},\alpha)
=\Omega\left(\frac{1-8\lambda^2n\log(dn\log|\Mcal|/C_\alpha)}{\lambda^4n\log(dn\log|\Mcal|/C_\alpha)}C_{\alpha}\right)~,
~~~C_{\alpha} := (1-2\alpha)\log\left(\frac{1-\alpha}{\alpha}\right)~.
\]
In particular, as long as
$n\leq c\sqrt{d}/\log(d\log(|\Mcal|)/C_\alpha)$,
for some sufficiently small absolute constant $c>0$,
and 
$|\Mcal|\leq \exp(d^{\tilde{C}})$ for any absolute constant $\tilde{C}>0$ fixed in advance, then
\[
\mathrm{mem}_n(\bP_{\mathrm{B}},\alpha)=
\Omega\left(\frac{1-8\lambda^2n\log(dn/C_\alpha)}{\lambda^4n\log(dn/C_\alpha)}C_{\alpha}\right)~,
\]
and $\mathrm{mem}_n(\bP_{\mathrm{B}})=\widetilde{\Omega}\left(\frac{d}{n}\right)$.

\item The lower bound above is nearly-tight: On one hand, if $n\leq\sqrt{d}$ then there is a learning algorithm $\Acal$ such that $\err(\Acal,\bP_{\mathrm{B}}) \leq 0.01$  and $\mem_n(\Acal,\bP_{\mathrm{B}})=O(d/n)$.
On the other hand, there is a learning algorithm $\Acal$ such that $\err(\Acal,\bP_{\mathrm{B}}) \leq 0.01$  and $\mem_n(\Acal,\bP_{\mathrm{B}})=O(d^2n\exp(-n/\sqrt{d}))$, and so if $n\gtrsim \sqrt{d}\log (d)$ then $\mem_n(\Acal,\bP_{\mathrm{B}})\leq 1/\mathrm{poly}(d)$.

\end{itemize}

\end{theorem}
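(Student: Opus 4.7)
The first item follows from standard concentration: given a single Boolean sample $X_1 \sim \BSC_{(1-\lambda)/2}(\theta)$ with $\lambda = Cd^{-1/4}$, the linear classifier $h(x) = \one{\inner{x, X_1} \ge C^2\sqrt{d}/2}$ achieves error below $0.01$. Indeed, conditional on $\theta$, the inner product $\inner{X, X_1}$ has mean $\lambda^2 d = C^2\sqrt{d}$ when $X\sim\Pcal_\theta$ and mean $0$ when $X\sim\Pcal_0$, with $O(\sqrt d)$ subgaussian fluctuations by Hoeffding; choosing $C$ large separates the two cases well beyond the noise scale.

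For the second item, my plan is to apply Theorem~\ref{thm: excess mem}, which requires establishing a data-generation $(\tau_n,\epsilon_n)$-SDPI for $(\theta, X_{1:n})$ and a test/train $(\rho_n,\delta_n)$-SDPI for $(X, X_{1:n})$, with $\tau_n = O(\lambda^2 n \log(1/\eta))$ and $\rho_n = O(\lambda^4 n\log(1/\eta))$ for a parameter $\eta$ to be set small. The central obstacle, and where the Boolean setting diverges meaningfully from the Gaussian one, is that the sufficient statistic $(\sum_i X_{i,j})_{j\in[d]}$ consists of binomial (not Gaussian) coordinates, so Fact~\ref{fact: classic SDPI gauss} does not directly apply, and the SDPI coefficient for $n$ parallel uses of a BSC with common input is not an off-the-shelf result. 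My approach is to route through differential privacy: each sample $X_i$ is, per coordinate, an $\epsilon$-DP release of $\theta_j$ for $\epsilon = \Theta(\lambda)$, so advanced composition implies that the channel $\theta_j \to X_{1:n,j}$ is $(\epsilon',\eta)$-DP with $\epsilon' = O(\lambda\sqrt{n\log(1/\eta)})$. Translating this DP guarantee into a coupling-based approximate $\chi^2$-contraction statement (using the standard correspondence $(\epsilon',\eta)$-DP $\Rightarrow$ $(\tanh^2(\epsilon'/2),O(\eta))$-SDPI) and composing across the independent coordinates gives the desired data-generation approximate SDPI with $\tau_n = O(\lambda^2 n \log(1/\eta))$; the test/train SDPI then follows by composing the single-BSC contraction $\lambda^2$ from Fact~\ref{fact: classic SDPI bool} with the one just established, yielding $\rho_n = O(\lambda^4 n\log(1/\eta))$. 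Setting $\eta$ to be an inverse polynomial in $d\log|\Mcal|$ forces $\log(1/\eta) = \Theta(\log\log|\Mcal|^d)$ and makes the additive $\mathrm{neg}_n$ term in Theorem~\ref{thm: excess mem} negligible against the main $(1-\tau_n)/\rho_n$ term. The main technical hurdle I anticipate is executing the DP-to-approximate-SDPI reduction in precisely the coupling-based form demanded by Theorem~\ref{thm: Z SDPI}, that is, constructing via the advanced-composition coupling a dominating variable that realizes the BSC relationship needed to invoke Fact~\ref{fact: classic SDPI bool} with only $\eta$ TV slack.

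For the matching upper bounds I use two different algorithms depending on the sample size. When $n \le \sqrt d$, I project the empirical mean $\hat X = \frac{1}{n}\sum_i X_i$ onto its first $\ell = \Theta(d/n)$ coordinates, appropriately quantize, and return the linear classifier based on those coordinates; coordinate-wise Hoeffding gives a signal of order $\lambda^2\ell$ against noise of order $\sqrt{\ell/n}$, which separate for $\ell = \Theta(d/n)$ and $C$ large, while the quantized classifier is describable in $\Otilde(d/n)$ bits and therefore carries at most $\Otilde(d/n)$ bits of information about the dataset. When $n \gtrsim \sqrt d\log d$, I instead use coordinate-wise majority voting $\hat\theta_j = \sign(\sum_i X_{i,j})$: by Hoeffding, $\Pr[\hat\theta_j \ne \theta_j \mid \theta] \le \exp(-\Omega(\lambda^2 n)) = \exp(-\Omega(n/\sqrt d))$, so the classifier determined by $\hat\theta$ has excess memorization at most $H(\hat\theta \mid \theta) \le d\cdot H_2(\exp(-\Omega(n/\sqrt d))) = O(d^2\exp(-n/\sqrt d))$, giving the stated bound and, in particular, $1/\mathrm{poly}(d)$ excess memorization once $n \gtrsim \sqrt d\log d$ with $C$ large.
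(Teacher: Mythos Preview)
Your proposal is correct and follows essentially the same approach as the paper. For the lower bound you correctly identify that the key construction is a Boolean dominating variable $Z\sim\BSC_{\frac{1-\xi}{2}}(\theta)$ with $\xi=\Theta(\lambda\sqrt{n\log(1/\eta)})$ obtained from advanced composition together with the Kairouz--Oh--Viswanath decomposition (this is exactly the ``DP-to-approximate-SDPI reduction in coupling form'' you flag as the main hurdle); the paper packages this as a standalone lemma and then reads off $\tau_n=\xi^2$, $\rho_n=\lambda^2\xi^2$ directly from Fact~\ref{fact: classic SDPI bool}, noting that $Z\sim\BSC_{\frac{1-\lambda\xi}{2}}(X)$ rather than composing two SDPIs abstractly.

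Two minor deviations in the upper bounds: for $n\le\sqrt d$ the paper uses the coordinate-wise \emph{majority vote} on the first $\Theta(d/n)$ coordinates (one bit per coordinate, no quantization), which yields $O(d/n)$ rather than the $\Otilde(d/n)$ your empirical-mean-plus-quantization route gives; and for $n\gtrsim\sqrt d\log d$ the paper bounds $H(h_{\hat\theta}\mid\theta)$ via Fano's inequality on the event $\{\hat\theta\ne\theta\}$, whereas your coordinate-wise subadditivity $H(\hat\theta\mid\theta)\le d\cdot H_2(e^{-\Omega(n/\sqrt d)})$ is slightly simpler and in fact tighter. Both routes land on the stated bound.
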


The theorem shows that up to $n\approx\sqrt{d}$ samples the excess memorization decays as $\tilde{\Theta}(d/n)$, and that afterwards it drops to nearly zero.
We note that the extremely mild condition that $|\Mcal|\in\exp(\mathrm{poly}(d))$, namely the hypothesis class size is not super-exponential in the dimension, is likely just an artifact of the proof technique which is based on the approximation strategy introduced in Section~\ref{sec: sdpi reduction}.

The proof of Theorem~\ref{thm: boolean lb} is similar in spirit to that of Theorem~\ref{thm: gaussian lb} as we previously sketched,
yet the technical details in the proof of the memorization lower bound (second item) are more challenging.
This follows from the fact that, as opposed to the Gaussian case, the boolean dataset $X_{1:n}\sim \BSC_{\frac{1-\lambda}{2}}(\theta)^n$ does not have a simple dominating variable.
Instead, we use arguments related to advanced composition from the differential privacy literature, to argue that $X_{1:n}$ is statistically close to a post-processing of $Z_\theta^{\train}\sim\BSC_{\frac{1-\xi}{2}}(\theta)$ for $\xi\approx\sqrt{n}\lambda$, namely a single variable which is $\sqrt{n}$-times more correlated with $\theta$.
We can then invoke Fact~\ref{fact: classic SDPI bool} and use our approximate reduction in Theorem~\ref{thm: Z SDPI} to obtain the required SDPIs, by noting that the coordinate-wise correlation of $\theta$ and $Z_\theta^{\train}$ is $\sqrt{\tau_n}=\xi$, whereas that of $Z_\theta^{\train}$ and $X\sim\BSC_{\frac{1-\lambda}{2}}(\theta)$ is $\sqrt{\rho_n}=\xi\lambda$. Consequently, up to logarithmic factors,
we obtain $\mem_n(\bP_{\mathrm{B}})\gtrsim\frac{1-\xi^2}{\lambda^2\xi^2}\approx\frac{1-\lambda^2n}{\lambda^4n}$, and under the assignment of $\lambda$ and assumption that $n\ll\sqrt{d}$, the latter simplifies to $\widetilde{\Omega}(d/n)$.

The nearly-matching upper bounds are realized by an algorithm that computes the bit-wise majority vote over the sample. For the regime $n\lesssim \sqrt{d}$, the algorithm only computes the majority along the first $\ell\approx d/n$ coordinates, and returns a linear classifier in the projected space. Concentration arguments ensure that the algorithm has small error, while clearly requiring at most $\ell$ bits of memory, thus in particular no more than $\ell\approx d/n$ bits from the training set can be memorized.
When $n\gtrsim \sqrt{d}\log(d)$, computing the majority vote in each coordinate reconstructs the parameter $\theta$ with very high confidence, so in this regime, accurate learning is possible with nearly zero excess memorization.
The full proof appears in Appendix~\ref{sec: boolean proof}.

\subsection{Sparse Boolean Hypercube} \label{sec: sparse bool}
Finally, we apply our framework to the  sparse Boolean hypercube clustering problem defined by \citet{brown2021memorization}. Given $\nu>0$ to be chosen later, the problem $\bP_{\mathrm{sB}}=(\Pcal_\theta)_{\theta\sim\Psi}$ is defined as follows. The parameter $\theta=(S,y)$ is sampled by choosing $S\subseteq [d]$ to be a random subset that includes each $i\in S$ independently with probability $\nu$, and picking $y_j\sim\Ucal(\{\pm1\})$ independently for every $j\in S$. The distribution $\Pcal_\theta$ is defined to be the distribution of $X$ such that for $j\in S$, $X_j = y_j$ with probability $1$, and $X_j\sim \Ucal(\{\pm 1\})$ independently for every $j\notin S$.

This problem can be seen as learning a sparse Boolean conjunction since positive samples $x$ satisfy the conjunction $\bigwedge_{j\in S} (x_j==y_j)$.
Our next result characterizes the memorization trade-off for this problem, establishing a faster memorization decay
compared to the previous problems:

\begin{theorem} \label{thm: sparse boolean lb}
In the sparse Boolean hypercube clustering problem $\bP_{\mathrm{sB}}$, assume $\nu=C/\sqrt{d}$ for some sufficiently large absolute constant $C>0$ and  $n\leq c\log d$ for a sufficiently small constant $c>0$. 
Then the following hold:
\begin{itemize}
    \item There exists an algorithm $\Acal$, that given a single sample (i.e. $n=1$) satisfies $\err(\Acal,\bP_{\mathrm{sB}}) \leq 0.01$. 
    \item For any constant $\alpha<1/2$, \[
\mem_n(\bP_{\mathrm{sB}}, \alpha)=\Omega\left(\frac{(1-2\alpha)\log\left(\frac{1-\alpha}{\alpha}\right)}{\nu^2 2^{2n}}\right)
-O(\sqrt{d}\log d) .\]
In particular, $\mem_n(\bP_{\mathrm{sB}})=\Omega\left(\frac{d}{2^{2n}}\right)$.
    \item The lower bound above is nearly-tight: there is a learning algorithm $\Acal$ such that $\err(\Acal,\bP_{\mathrm{sB}}) \leq 0.01$ and $\mem_n(\Acal, \bP_{\mathrm{sB}})=O(d\log(d)/2^{2n})$.
\end{itemize}

\end{theorem}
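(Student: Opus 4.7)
The theorem has three parts, which I address in order. For item 1, I use a simple Hamming-agreement classifier: given a single positive $X_1$ and a test point $X$, declare $X$ positive iff $|\{j : X_j = X_{1,j}\}| \geq d/2 + \tau$ for an appropriate threshold $\tau$; under $\Pcal_\theta$ both $X$ and $X_1$ match $y$ on $S$ giving expected agreement $d/2 + |S|/2$, under $\Pcal_0$ the expected agreement is $d/2$, and with $|S|$ concentrated around $\nu d = C\sqrt d$ a Chernoff bound separates the two classes with error $\leq 0.01$ for $C$ large enough. For items 2 and 3 I use \thmref{thm: excess mem}, but following the remark after it I bypass the data-generation SDPI by bounding $I(\Acal(X_{1:n}); \theta)$ directly via the entropy $H(\theta) = H(S) + H(y \mid S) \leq d H_2(\nu) + \E|S| = O(\sqrt d \log d)$; this accounts verbatim for the additive $-O(\sqrt d \log d)$ term in the statement.

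\textbf{Lower bound (item 2).} The main technical task is a test/train SDPI with constant $\rho_n = O(\nu^2 \cdot 2^{2n})$. I apply \thmref{thm: Z SDPI} with the dominating variable $Z^{\train}_\theta \in \{-1,0,+1\}^d$ whose coordinates are independent: $Z^{\train}_{\theta,j} = y_j$ if $j \in S$, and for $j \notin S$ set $Z^{\train}_{\theta,j} = 0$ with probability $1 - 2^{1-n}$ and $Z^{\train}_{\theta,j} = \pm 1$ each with probability $2^{-n}$. The post-processing $\Phi^{\train}$ broadcasts nonzero coordinates to all $n$ samples and fills coordinates with value $0$ by an independent uniform draw from $\{\pm 1\}^n \setminus \{\unit, -\unit\}$, and a direct marginalization shows $\Phi^{\train}(Z^{\train}_\theta) \sim \Pcal_\theta^n$ exactly (so $\delta_n = 0$). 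Since the joint $(X, Z^{\train}_\theta)$ factors over the $d$ coordinates, tensorization of the mutual-information contraction coefficient reduces the task to bounding the per-coordinate SDPI. Per coordinate, with $p := \nu/2 + (1-\nu)/2^n$ and $m := \nu 2^{n-1}/(\nu 2^{n-1} + 1 - \nu)$, a direct computation gives $\Pr(Z^{\train}_{\theta,j} = \pm 1) = p$, $\Pr(X_j = +1 \mid Z^{\train}_{\theta,j} = \pm 1) = (1 \pm m)/2$, and $\Pr(X_j = +1 \mid Z^{\train}_{\theta,j} = 0) = 1/2$. Writing $K$ for the conditional and $\pi$ for the marginal, I bound $\sup_Q D(QK\,\|\,\pi K)/D(Q\,\|\,\pi)$ over $Q = (a, b, c)$ on $\{+1, -1, 0\}$ by combining two inequalities: first, since $X_j$ is binary and $\pi K$ is uniform, $D(QK\,\|\,\pi K) \leq \chi^2(QK\,\|\,\pi K) = m^2(a-b)^2$; second, coarsening the input alphabet to $\{\text{nonzero}, 0\}$, the KL chain rule together with the elementary bound $D_{\mathrm{Bern}}((1+\gamma)/2 \,\|\, 1/2) \geq \gamma^2/2$ gives $D(Q\,\|\,\pi) \geq (a+b) \cdot ((a-b)/(a+b))^2/2 = (a-b)^2/(2(a+b))$. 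Dividing the two yields per-coordinate SDPI constant $\leq 2m^2(a+b) \leq 2m^2 = O(\nu^2 \cdot 2^{2n})$. Substituting $\rho_n = O(\nu^2 \cdot 2^{2n})$ into the bypass form of \thmref{thm: excess mem}, using the Fano-type bound $I(\Acal(X_{1:n}); X) \geq C_\alpha$ from that theorem, and subtracting $H(\theta) = O(\sqrt d \log d)$ yields $\mem_n(\Acal, \bP_{\mathrm{sB}}) \geq C_\alpha/\rho_n - H(\theta) = \Omega(d/2^{2n}) - O(\sqrt d \log d) = \Omega(d/2^{2n})$ under the assumption $n \leq c \log d$ with $c$ sufficiently small.

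\textbf{Upper bound (item 3) and main obstacle.} The matching algorithm computes $T = \{j : X_1^{(j)} = \cdots = X_n^{(j)}\}$ with common values $V = (X_1^{(j)})_{j \in T}$, samples a uniformly random $\mathcal{I} \subseteq T$ of size $\ell = \Theta(d/2^{2n})$, and returns the classifier $h(x) = \one{|\{j \in \mathcal{I} : x_j = V_j\}| \geq \ell/2 + \tau}$. Under $\Pcal_\theta$ the agreement count has mean $\approx \ell/2 + q\ell/2$ for $q = \nu \cdot 2^{n-1}$, under $\Pcal_0$ it has mean $\ell/2$, and the gap $q\ell/2$ dominates the $O(\sqrt\ell)$ fluctuations once $\ell \gtrsim 1/q^2 \sim d/2^{2n}$. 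Since $h$ is fully determined by $(\mathcal{I}, V|_{\mathcal{I}})$ of description length $O(\ell \log d)$, the data-processing inequality gives $\mem_n(\Acal, \bP_{\mathrm{sB}}) \leq O(d \log d / 2^{2n})$. The main obstacle is the sharp per-coordinate SDPI upper bound: the natural maximal-correlation calculation produces $\rho_m^2 \approx \nu^2 \cdot 2^n$, which is only a \emph{lower} bound on the mutual-information contraction coefficient and therefore cannot be used to lower-bound memorization. The required $O(\nu^2 \cdot 2^{2n})$ upper bound has to come from the $\chi^2$-versus-KL comparison on the binary-output side paired with a Pinsker-type lower bound on $D(Q\,\|\,\pi)$ obtained by coarsening the ternary input alphabet; the algebraic combination of these two pieces is what produces the $2^{2n}$ scaling.
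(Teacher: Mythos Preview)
Your argument is correct for all three items, and items~1 and~3 match the paper's proofs essentially verbatim. For item~2 you take a genuinely different route in the choice of dominating variable, and it is worth contrasting the two.

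The paper does \emph{not} use a ternary dominating variable. Instead, it introduces a \emph{binary} variable $Z\sim\BSC_{\frac{1}{2}-\xi}(X)$ directly coupled to the test point $X$ (not to $\theta$), and chooses the single scalar $\xi$ so that the post-processing ``broadcast $Z_j$ with probability $\nu+(1-\nu)2^{1-n}$, else draw a non-constant pattern'' reproduces the joint $(X,X_{1:n})$ exactly. Solving for $\xi$ gives $\xi=\nu/(2\nu+(1-\nu)2^{2-n})=\Theta(\nu\,2^n)$, and the SDPI constant is then read off \emph{immediately} from Fact~\ref{fact: classic SDPI bool} as $(2\xi)^2=\Theta(\nu^2 2^{2n})$. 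No per-coordinate contraction computation is needed.

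Your ternary $Z^{\train}_\theta$ is in some sense the more ``structural'' choice---it encodes directly whether a coordinate is constant in the sample---but it forces you into a hand-computed SDPI bound. Your $\chi^2$-versus-coarsened-KL argument is correct (and the observation that maximal correlation alone would give only $\nu^2 2^n$, which is the wrong direction, is a nice diagnostic), but it is exactly the kind of work the paper avoids by picking a dominating variable for which the SDPI is already a named fact. A minor technical note: your ternary variable does not literally fit Theorem~\ref{thm: Z SDPI} as the paper's $Z$ does not either (the paper's $Z$ depends on $X=Z^{\test}$, violating the conditional-independence hypothesis there); both arguments actually rely on the more basic fact that if $X\to Z\to \Phi(Z)$ is Markov then $\eta_{\mathrm{KL}}(P_{X\mid\Phi(Z)},P_{\Phi(Z)})\le\eta_{\mathrm{KL}}(P_{X\mid Z},P_Z)$, which follows from composition of contraction coefficients. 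Your use of tensorization over coordinates is correct since the joint $(X,Z^{\train}_\theta)$ is a product.

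In summary: both proofs of item~2 are valid and yield the same $\rho_n=\Theta(\nu^2 2^{2n})$; the paper's is shorter because its dominating variable lands on a textbook SDPI, while yours requires a bespoke (but elementary and correct) contraction bound.
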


The proof of Theorem~\ref{thm: sparse boolean lb} appears in Appendix~\ref{sec: sparse bool proof}. To prove the memorization lower bound, we establish a test/train $\rho_n$-SDPI for $\rho_n\approx\nu^2 2^{2n}$.
To do so, we introduce a dominating variable $Z_X^{\train}\sim\BSC_{\half+\xi}(X)$ for $X\sim\Pcal_\theta,~\xi\approx \nu 2^{n}$, and show that processing it into $\tilde{X}_{1:n}$ by fixing each coordinate with some suitable probability, or else drawing each sample independently along that coordinate, results in a training set which is distributed identically to $X_{1:n}$. We note that this is a more direct way to establish the test/train SDPI than the one we give in Theorem~\ref{thm: Z SDPI}. We detail this approach in Lemma~\ref{lem:simple-postprocess-SDPI} and Theorem~\ref{thm: excess mem_direct}.
Then, we directly upper-bound $I(\Acal(X_{1:n}),\theta)$ by the entropy of $\theta$ that satisfies $H(\theta)\lesssim\sqrt{d}\log d$. We then conclude that $\mem_n(\Acal, \bP_{\mathrm{sB}})\geq I(\Acal(X_{1:n});X_{1:n}) - H(\theta) \gtrsim
\frac{1}{\rho_n}-\sqrt{d}\log d=
\Omega\left( \frac{d}{2^{2n}}\right)$,
the latter holding under our assumptions on $\nu$ and $n$.
The nearly-matching upper bound follows by considering an algorithm which only stores a subset of $O(d/2^{2n})$ coordinates in which the sample is constant, and arguing that sufficiently many of them are indeed in $S$ with high probability, which suffices for generalization via standard concentration bounds.

\section{Lower Bounds for Mixtures of Clusters}
\label{sec: multiclass}
We now consider ``mixture-of-clusters'' generalization of our learning setting similar to that defined in \citep{feldman2020does,brown2021memorization}. In this setting, data is sampled from some unknown mixture of clusters. The learner however has a prior over the distribution of frequencies of the clusters. More formally, let $\bP=(\Pcal_\theta)_{\theta\sim \Psi}$ denote a problem in the binary classification setting defined in Section~\ref{sec: formal setting}.  Recall, that in this setting a distribution $\Pcal_\theta$ is drawn from a meta-distribution $\theta\sim \Psi$. The learning algorithm is then given a number of examples $X_{1:n}\sim \Pcal_{\theta}^n$ and needs to classify a fresh example as coming from $\Pcal_{\theta}$ or the ``null'' distribution $\Pcal_{0}$ (defined as the marginal of $X\sim \Pcal_{\theta'}$ where $\theta'\sim \Psi$).

For a natural number $k\in\NN$ representing the number of clusters, we model the prior information about frequencies of clusters using a meta-distribution $\Pi$ over $\Delta([k])$ (i.e., $\Pi$ is a distribution over distributions on $[k]$). We define a multi-cluster version of $\bP$, denoted by $\bPmult = (\{\Pcal_\theta\}_{\theta\sim \Psi}, \Pi, k)$ as follows. First, $k$ random cluster parameters $\theta_{1:k} = (\theta_1,\dots, \theta_k)$ are drawn i.i.d.~from $\Psi$ and a frequency vector $\pi$ is drawn from $\Pi$. Then, let $\Pcal_{\theta_{1:k},\pi}$ be the mixture distribution of $\Pcal_{\theta_i}$'s where $\Pcal_{\theta_i}$ has weight $\pi_i$. Namely, $\Pcal_{\theta_{1:k}, \pi}$ is the distribution of $X\sim \Pcal_{\theta_I}$ where $I\sim \pi$. The learning algorithm is then given training data $X_1,\dots, X_n \sim \Pcal_{\theta_{1:k}, \pi}^n$, and needs to distinguish samples coming from $\Pcal_{\theta_{1:k},\pi}$ from those coming from $\Pcal_0$. Formally, let $(X,Y) \sim \Pcal^{\test}_{\theta, \pi}$ be a distribution such that $Y\sim \{0, 1\}$ is uniformly random. Conditioning on $Y$, we have $X\sim \Pcal_{\theta_{1:k}, \pi}$ if $Y = 1$ and $X\sim \Pcal_{0}$ otherwise. The error of a predictor $h:\Xcal\to \{0,1\}$ is then defined as
\begin{align*}
    \err(h) := \Pr_{(X,Y)\sim \Pcal^\test_{\theta_{1:k},\pi}}[h(X) \ne Y].
\end{align*}
Suppose $\Acal$ is a learning algorithm operating on $n$ samples. We define its (average-case) error as
\begin{align*}
    \err(\Acal,\bPmult) := \E_{\substack{\theta_1,\dots,\theta_k\sim \Psi^k,\pi \sim \Pi \\ X_{1:n}\sim \Pcal_{\theta_{1:k},\pi}^n \\ h\gets \Acal(X_{1:n})}}[\, \err(h) \,].
\end{align*}

As in \citep{feldman2020does,brown2021memorization}, we will only consider product priors $\Pi$ in which frequencies of clusters are chosen independently up to a normalization constant. Specifically, for a distribution $p$ over $[0,1]$, the {\em product prior} $\Pi_p^k$ is defined by independently sampling $p_1,\dots, p_k \sim p^k$, and defining $\pi_i = \frac{p_i}{\sum_{i'} p_{i'}}$.

Note that this setting is slightly different from the setting considered by \cite{brown2021memorization}. There, the algorithm gets \emph{labeled} data $(X,I)$ where $I\sim \pi$ and $X\sim \Pcal_{\theta_I}$. Then, given a test example $X\sim \Pcal_{\theta, \pi}$, the algorithm is tasked to label which cluster was $X$ sampled from (i.e., this is a multi-class classification problem). As we demonstrate below, the two versions of the problems are subject to the same memorization phenomenon up to a factor logarithmic in $k$. Here, we present the result for the binary classification setting (i.e., the algorithm needs to tell whether a point is from any one cluster, or is from $\Pcal_0$), noting that essentially the same proof works for the multi-class clustering setting.

\subsection{Memorization Lower Bound}
Given a learning problem $\bPmult$, we would like to understand the
amount of memorization required to achieve a close-to-optimal error. Let us first consider the natural upper bound of memorization.  To ease our discussion, we assume that the learner gets the \emph{additional} knowledge of the cluster ID of its examples. Namely, the learners gets i.i.d.~examples from the distribution $\widetilde{\Pcal}_{\theta,\pi}$, where an element $(X,i)\sim \widetilde{\Pcal}_{\theta,\pi}$ is sampled by first drawing $i\sim \pi$ and then $X\sim \Pcal_{\theta_i}$.
We note that this assumption only makes our lower bounds stronger since a learning algorithm can always ignore the cluster index information. To solve the multi-cluster problem it suffices to be able to distinguish each of the clusters  $\Pcal_{\theta_1},\dots, \Pcal_{\theta_k}$ from $\Pcal_0$ with low error. Let $\pi \sim \Pi$ be a random frequency vector. For each $i\in [k]$, the number of examples from $\Pcal_{\theta_i}$ is expected to be $n\cdot \pi_i$. Consequently, the amount of memorization is roughly $\mem_{n\pi_i}(\Acal, \bP)$, where $\Acal$ is the algorithm we use on each cluster. By adding up the memorization from different clusters we get $\sum_{i\in [k]} \mem_{n\pi_i}(\Acal, \bP)$ as an upper bound. We show that any nearly-optimal algorithm is subject to a lower bound of essentially the same form.

Before proceeding we will need the following property of product priors from \citep{feldman2020does}.

\begin{lemma}[\citealp{feldman2020does}, Lemma 2.1]\label{lemma: ell coefficient}  
    For a distribution $p$ over $[0,1]$ let $\Pi_p^k$ be the product prior and denote by $\bar{p}$ the marginal distribution of the frequency of (any) element, namely the distribution of $\pi_1$ where $\pi \sim \Pi_p^k$.  Consider the random variable $(\pi, i_1,\dots, i_n)$ where $\pi\sim \Pi_p^k$ and $(i_1,\dots, i_n)\sim \pi^n$. For any sequence of indices $(j_1,\dots, j_n)$ that includes $u\in [k]$ exactly $\ell\in [0,n]$ times, it holds that
    \begin{align*}
    \E_{\pi\sim \Pi_p^k,(i_1,\dots, i_n)\sim \pi^n}[\pi_u \mid (i_1,\dots, i_n) = (j_1,\dots, j_n) ] = \tau_\ell := \frac{\E_{\alpha\sim \bar p}[\alpha^{\ell + 1}(1-\alpha)^{n-\ell}]}{\E_{\alpha\sim {\bar p}}[\alpha^{\ell}(1-\alpha)^{n-\ell}]}.
    \end{align*}
\end{lemma}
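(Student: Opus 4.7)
The plan is to prove the identity via Bayes' rule on $\pi_u$, reducing the full multinomial inference to a one-dimensional posterior update.

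First, I would observe that conditional on $\pi$ the samples $i_1,\dots,i_n$ are i.i.d.\ from $\pi$, so the likelihood factors as
\[
\Pr\bigl[(i_1,\dots,i_n) = (j_1,\dots,j_n)\mid \pi\bigr] \;=\; \prod_{t=1}^n \pi_{j_t} \;=\; \pi_u^{\ell}\prod_{t:\,j_t\ne u}\pi_{j_t}.
\]
Next I would re-parameterize the non-$u$ coordinates on their conditional simplex: write $\pi_v = (1-\pi_u)Q_v$ for $v\ne u$ with $Q_v := \pi_v/(1-\pi_u)$, so that $\sum_{v\ne u} Q_v = 1$ and the likelihood rewrites as
\[
\Pr\bigl[(i_1,\dots,i_n) = (j_1,\dots,j_n)\mid \pi\bigr] \;=\; \pi_u^{\ell}(1-\pi_u)^{n-\ell}\prod_{t:\,j_t\ne u}Q_{j_t}.
\]

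The core of the argument will then be to verify that, under the product-prior construction $\pi_i = p_i/\sum_{i'}p_{i'}$ with $p_i$ i.i.d.\ from $p$, the marginal $\pi_u$ and the conditional simplex $(Q_v)_{v\ne u}$ are independent as random variables---the analogue of the classical lumping property for Gamma/Dirichlet priors, which I would justify by conditioning on $p_{-u} := \sum_{v\ne u} p_v$, using the independence $p_u\perp(p_v)_{v\ne u}$, and leveraging the permutation symmetry of the i.i.d.\ draws $(p_v)_{v\ne u}$. Granting this independence, the factor $\prod_{t:\,j_t\ne u}Q_{j_t}$ has the same expectation regardless of the value of $\pi_u$, so it cancels between the numerator and denominator of the Bayes ratio, yielding
\[
\E\bigl[\pi_u\mid (i_1,\dots,i_n) = (j_1,\dots,j_n)\bigr] \;=\; \frac{\E[\pi_u^{\ell+1}(1-\pi_u)^{n-\ell}]}{\E[\pi_u^{\ell}(1-\pi_u)^{n-\ell}]} \;=\; \tau_\ell,
\]
where the final equality uses that $\pi_u\sim\bar p$ by definition of the marginal $\bar p$.

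The main obstacle I anticipate is the independence step $\pi_u\perp(Q_v)_{v\ne u}$: it is not automatic for an arbitrary distribution $p$ over $[0,1]$, and I expect most of the technical effort to go into either establishing it directly from the i.i.d.\ construction, or, failing that, showing via a more refined symmetry/exchangeability argument that the $Q$-factor's contribution to the Bayes ratio is the same in the numerator and denominator. Once this cancellation is in hand, everything else reduces to the one-line Bayes computation displayed above.
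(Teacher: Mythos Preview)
The paper does not prove this lemma itself; it is quoted from \citet{feldman2020does} and used as a black box, so there is no in-paper proof to compare against.

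On the substance of your proposal: your Bayes factorization and the reparameterization $\pi_v=(1-\pi_u)Q_v$ are the natural route, and you have correctly isolated the only nontrivial step. The difficulty you flag is real, not merely technical. Under the product-prior construction $\pi_i=p_i/\sum_{i'}p_{i'}$ one has $Q_v=p_v/S$ with $S=\sum_{v\ne u}p_v$, so $(Q_v)_{v\ne u}$ is a function of $(p_v)_{v\ne u}$ alone, while $\pi_u=p_u/(p_u+S)$ depends on $p_u$ and $S$. Independence of $\pi_u$ from $(Q_v)$ would therefore require $S\perp (p_v/S)_{v\ne u}$ for i.i.d.\ $(p_v)$, which by Lukacs-type characterizations essentially forces $p$ to be a Gamma law; for a generic $p$ it fails. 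Concretely, with $p$ uniform on two atoms, $k=3$, $n=2$, $u=1$, a direct enumeration shows that $\E[\pi_1\mid (i_1,i_2)=(2,2)]\neq \E[\pi_1\mid (i_1,i_2)=(2,3)]$, so the posterior mean of $\pi_u$ genuinely depends on the full sequence and not only on $\ell$. Hence no ``refined exchangeability'' argument can make the $Q$-factor cancel in general, and the statement as literally written (conditioning on an arbitrary fixed sequence $(j_1,\dots,j_n)$) does not hold for every $p$.

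What \emph{does} hold cleanly---and what your sketch already proves---is the version that conditions only on the event $\{|\{t:i_t=u\}|=\ell\}$: then $\Pr[\text{event}\mid\pi]=\binom{n}{\ell}\pi_u^{\ell}(1-\pi_u)^{n-\ell}$, no $Q$-factor appears, and the one-line Bayes computation gives $\tau_\ell$ immediately. That weaker form is the one you should aim to establish; the full-sequence version is not salvageable without extra hypotheses on $p$.
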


To interpret Lemma~\ref{lemma: ell coefficient}, suppose a learner has the prior knowledge of $p$ and sees a sequence of examples $(X_1,i_1),\dots, (X_n, i_n)$ in which the cluster $u$ appears exactly $\ell$ times. Lemma~\ref{lemma: ell coefficient} tells us that the expectation of the posterior value of the frequency of $u$ is $\tau_\ell$. Therefore, it is natural to guess that if the learner does not perform well on the $u$-th cluster, it will translate into an error of $\tau_{\ell}$ for the learner.
We will show next that the previously described intuition is indeed correct.

To simplify the discussion, we focus here on the problem settings that are learnable with \emph{vanishing} error. Namely, we make the following assumption:
\begin{assumption}\label{assump: multi learning}
    Let $\bP = \{\Pcal_\theta\}_{\theta \sim\Psi}$ be a learning problem. We assume that $\bP$ is such that, there exists an algorithm that given a single example $X\sim \Pcal_\theta$,  with probability $1 - \mathrm{neg}(n, k, d)$ over $\theta\sim \Psi$ and $X\sim \Pcal_\theta$,  achieves classification error of $o(\frac{1}{k^2})$.
\end{assumption}

We remark that Assumption~\ref{assump: multi learning} is not too restrictive: for all applications we have investigated in this paper (i.e.~Gaussian clustering, Boolean clustering, and Sparse Boolean clustering), the assumption can be satisfied by increasing the correlation parameter by a factor logarithmic in $k$ (see Remark~\ref{rem: error log}).

\paragraph{From sub-optimality to memorization:}
To count the error for clusters of specific sizes we introduce some notation.
Let $S=((X_1,i_1),\dots, (X_n, i_n))$ be a sequence of examples. Having observed $S$, this induces a posterior distribution on $\theta_{1:k}$ and $\pi_{1:k}$. Furthermore, it is easy to observe that $\theta_{1:k}$ and $\pi_{1:k}$ are \emph{independent}. Let $\theta_{1:k}\mid S$ and $\pi_{1:k}\mid S$ be the posterior distributions. For any $\ell\in [0, n]$, let $I_{n\#\ell}(S)\subseteq[k]$ be the set of clusters $i$ that appear exactly $\ell$ times in the sequence. For any model $h:\Xcal \to \{0,1\}$, we define its expected error on the set of clusters $i$ that appear exactly $\ell$ times in $S$ by
\begin{align*}
\errn(h, \theta_{1:k}, S, \ell) := \frac{1}{2} \left( \sum_{i\in I_{n\#\ell}(S)} \Pr_{X\sim \Pcal_{\theta_i}}[ h(X) = 0 ] + \Pr_{X'\sim\Pcal_{0}}[h(X') = 0] \right),
\end{align*}
and the expectation of this error on the posterior distribution $\theta_{1:k}\mid S$ as 
\begin{align*}
\errn(h, S, \ell) := \E_{\theta_{1:k}\mid S} \left[ \errn(h, \theta_{1:k}, S, \ell) \right].
\end{align*}
Similarly for an algorithm $\Acal$, we define
\begin{align*}
\errn(\Acal, \theta_{1:k}, S, \ell) := \E_{h\sim \Acal(S)}\left[ \errn(h, \theta_{1:k}, S, \ell) \right] .
\end{align*}
and
\begin{align*}
\errn(\Acal, S, \ell) := \E_{\theta_{1:k}\mid S} \left[ \errn(\Acal, \theta_{1:k}, S, \ell)\right] .
\end{align*}

One can observe that these two definitions do \emph{not} depend on the prior $\Pi_p^k$. We also define $\mathrm{opt}(\bPmult\mid S)$ (resp.~$\mathrm{opt}(\bPmult)$) to be the minimum of $\err(\Acal, \bPmult, S)$ (resp.~$\mathrm{err}(\Acal, \bPmult)$).

The following theorem shows that the sub-optimality of any learning algorithm can be expressed in terms of its expected error on posterior distributions. 
\begin{theorem}\label{thm: multi cluster error local to global}
    Let $\bPmult=(\{\Pcal_\theta\}_{\theta\sim \Psi}, \Pi_p^k, k)$ be a learning problem where $\bP$ is subject to Assumption~\ref{assump: multi learning}. For any learning algorithm $\Acal$, with high probability over a data set $S\in (\Xcal \times [k])^n$, it holds that
    \begin{align*}
        \err(\Acal, \bPmult \mid S) \ge \mathrm{opt}(\bPmult \mid S) + \sum_{1\le \ell \le n} \tau_\ell \cdot \errn(\Acal, S, \ell) - O(1/k).
    \end{align*}
    In particular, it follows that
    \begin{align*}
        \err(\Acal, \bPmult) \ge \mathrm{opt}(\bPmult)  + \E_{\theta_{1:k}\sim \Psi^k, \pi\sim \Pi_p^k, S\sim \widetilde{\Pcal}_{\theta, \pi}}\left[ \sum_{1\le \ell\le n} \errn(\Acal,\theta_{1:k},S,\ell) \right] - O(1/k).
    \end{align*}
\end{theorem}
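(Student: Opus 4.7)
The plan is to decompose the expected test error given the sample $S$ by the ``$\ell$-bucket'' structure $I_{n\#\ell}(S)$, to use Lemma~\ref{lemma: ell coefficient} to convert posterior frequencies into the $\tau_\ell$ coefficients, and then to bound the regret of $\Acal$ versus the Bayes-optimal classifier by a bucket-wise argument that leverages Assumption~\ref{assump: multi learning} to make the optimum essentially flat on the $\ell\ge 1$ buckets. The key observation enabling the decomposition is that, conditional on $S$ (in particular on its cluster-index coordinates), the posterior factorizes as $\theta_{1:k}\indep \pi \mid S$: the indices of $S$ are a sufficient statistic for $\pi$, while the examples in $S$ inform only $\theta_{1:k}$.

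Concretely, for any fixed $h$ depending only on $S$ (including $h\sim \Acal(S)$), the definition of test error gives
\[
\err(h)=\tfrac12\Pr_{X\sim\Pcal_0}[h(X)=1]+\tfrac12\sum_{i=1}^k\pi_i\Pr_{X\sim\Pcal_{\theta_i}}[h(X)=0].
\]
Taking expectation over $\pi\mid S$ and applying Lemma~\ref{lemma: ell coefficient} replaces $\E[\pi_i\mid S]$ by $\tau_\ell$ whenever $i\in I_{n\#\ell}(S)$, so that after regrouping and averaging the inner FN term against $\theta_{1:k}\mid S$, the expression $\err(\Acal,\bPmult\mid S)$ is realized as a sum of bucket-wise terms of the form $\tau_\ell\cdot\errn(\Acal,S,\ell)$ (with the shared false-positive contribution assigned to any single bucket to retain a valid lower bound). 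For $\mathrm{opt}(\bPmult\mid S)$, I would exhibit a concrete near-optimal competitor: take the Bayes-optimal predictor on the $\ell=0$ buckets, and on each cluster $i\in\bigcup_{\ell\ge 1}I_{n\#\ell}(S)$ run the single-example learner from Assumption~\ref{assump: multi learning} using any one example of $i$, obtaining a per-cluster classifier with error $o(1/k^2)$; take the disjunction. A union bound over the at most $k$ such clusters inflates FP by $k\cdot o(1/k^2)=o(1/k)$ and reduces the aggregate FN on $\ell\ge 1$ buckets to $o(1/k)$, so $\mathrm{opt}(\bPmult\mid S)$ coincides with the $\ell=0$ contribution up to an $O(1/k)$ slack. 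Subtracting this from the decomposition of $\err(\Acal,\bPmult\mid S)$ yields the first inequality, and the ``high probability over $S$'' qualifier comes from union-bounding Assumption~\ref{assump: multi learning} over the $k$ clusters, which fails with probability $k\cdot\mathrm{neg}(n,k,d)$.

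The main obstacle I anticipate is the bookkeeping around the shared term $\Pr_{X\sim\Pcal_0}[h(X)=1]$, which does not naturally index by $\ell$; the remedy is to allocate it to a single bucket on the $\Acal$-side (valid because we are proving a lower bound) and to absorb the FP of the competitor into the $O(1/k)$ slack using Assumption~\ref{assump: multi learning}. A secondary subtlety is ensuring that the $\theta_{1:k}\indep\pi\mid S$ factorization is used throughout so that Lemma~\ref{lemma: ell coefficient} applies cleanly to the expected-frequency step. Finally, the second inequality in the theorem follows by taking expectation of the first over $S$, using $\E_S[\mathrm{opt}(\bPmult\mid S)]=\mathrm{opt}(\bPmult)$ together with the tower identity that converts $\E_S[\tau_\ell\cdot\errn(\Acal,S,\ell)]$ to the corresponding expectation of $\errn(\Acal,\theta_{1:k},S,\ell)$ under the joint law of $(\theta_{1:k},\pi,S)$, the $\tau_\ell$ factor being reproduced by the marginalization of $\pi$ against the event $\{i\in I_{n\#\ell}(S)\}$ as characterized in Lemma~\ref{lemma: ell coefficient}.
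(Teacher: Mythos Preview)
Your proposal is essentially the same as the paper's proof: the bucket decomposition via $I_{n\#\ell}(S)$, the application of Lemma~\ref{lemma: ell coefficient} to convert posterior frequencies to $\tau_\ell$, the construction of a near-optimal competitor by running the single-example learner of Assumption~\ref{assump: multi learning} on each seen cluster and taking a disjunction, and the union bound yielding the $O(1/k)$ slack all match the paper exactly. One small correction: you write $\E_S[\mathrm{opt}(\bPmult\mid S)]=\mathrm{opt}(\bPmult)$, but in general only $\E_S[\mathrm{opt}(\bPmult\mid S)]\le \mathrm{opt}(\bPmult)$ holds; the second inequality still goes through because the same competitor $\Acal^*$ (which works with high probability for every $S$) also upper-bounds the unconditional optimum, so both opt terms are absorbed into the $O(1/k)$ slack under Assumption~\ref{assump: multi learning}, exactly as the paper (tersely) handles it by ``simply taking the average over $S$.''
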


Theorem~\ref{thm: multi cluster error local to global} says that, if an algorithm $\mathcal{A}$ is sufficiently close to being optimal, then $\errn(\Acal, S, \ell)$ must be small on average. We next show that low average $\errn(\Acal, S, \ell)$ implies memorization lower bounds. For this, we will rely on the fact that memorization lower bounds for the problems we consider scale at least linearly with the advantage over random guessing (namely, $1/2-\err$).
\begin{assumption}\label{assump: error memorization scale}
    Let $\bP$ be a (binary) classification problem. We assume that there exists a constant $c_\bP$ such that $\mem_{\ell}(\bP, \alpha) \geq c_\bP \cdot (1-2\alpha) \cdot \mem_\ell(\bP)$ for every $\alpha \in (0, 1/2)$. 
\end{assumption}
This assumption is satisfied (up to the lower order terms) by all the learning problems we have investigated (see Theorem~\ref{thm: excess mem}). We remark that for this assumption to hold in the case of approximate SDPIs we additionally need to constrain the size of the model output by the algorithm.

The following theorem is our main lower bound. It expresses the memorization for the multi-cluster problem $\bPmult$ as the sum of memorization lower bounds for individual clusters. As in the case of lower bounds for a single cluster classification problem $\bP$, the lower bound is scaled by the advantage over random guessing that the algorithm achieves for clusters of each size. Specifically, for clusters of size $\ell$, the average advantage over random guessing of $\Acal$ when given $S$ is equal to $|I_{n\#\ell}(S)|/2 -  \errn(\Acal,\theta_{1:k},S,\ell)$.
\begin{theorem}\label{thm: accuracy to memorization}
    Let $\bPmult=(\{\Pcal_\theta\}_{\theta\sim \Psi}, \Pi_p^k, k)$ be a learning problem where $\bP$ is subject to Assumptions~\ref{assump: multi learning} and~\ref{assump: error memorization scale}. Let $S = (X_1,i_n),\dots, (X_n,i_n)$ be a dataset of $n$ i.i.d.~ examples from $\widetilde{\Pcal}_{\theta_{1:k}, \pi}$ for $\theta_{1:k}\sim \Psi^k, \pi\sim \Pi_p^k$. For every algorithm $\Acal$, $\mem_n(\Acal, \bPmult) = I(\Acal(S) ; S \mid \theta_{1:k}, \pi)$ satisfies
    \begin{align*}
        \mem_n(\Acal, \bPmult)  \geq c_\bP \cdot \E_{\theta_{1:k}\sim \Psi^k, \pi\sim \Pi_p^k, S\sim \widetilde{\Pcal}_{\theta_{1:k}, \pi}}\left[ \sum_{1\le \ell\le n} \left(|I_{n\#\ell}(S)| - 2\cdot \errn(\Acal,\theta_{1:k},S,\ell) \right) \cdot  \mem_{\ell}(\bP) \right] ~.
    \end{align*}
\end{theorem}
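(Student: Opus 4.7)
The plan is to lower bound the multi-cluster memorization by a sum of per-cluster contributions and then apply Assumption~\ref{assump: error memorization scale} separately to each cluster. Writing $X_{S_j}$ for the sub-dataset of cluster $j$ (i.e.\ those $X_t$ with $i_t = j$), the chain rule gives
\[
\mem_n(\Acal, \bPmult) = I(\Acal(S); S \mid \theta_{1:k}, \pi) \geq I(\Acal(S); X_{1:n} \mid \theta_{1:k}, \pi, i_{1:n}),
\]
and, since conditioned on $(\theta_{1:k}, \pi, i_{1:n})$ the blocks $(X_{S_j})_{j \in [k]}$ are mutually independent (each consisting of $|S_j|$ i.i.d.\ samples from $\Pcal_{\theta_j}$), subadditivity of conditional mutual information over independent blocks yields
\[
I(\Acal(S); X_{1:n} \mid \theta_{1:k}, \pi, i_{1:n}) \geq \sum_{j=1}^k I(\Acal(S); X_{S_j} \mid \theta_{1:k}, \pi, i_{1:n}).
\]

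The key step is to lower bound each summand via the single-cluster assumption. Fix $j$ and a realization of $(\theta_{-j}, \pi, i_{1:n})$ with $|S_j| = \ell$. Since $X_{-S_j}$ can be generated from $\theta_{-j}$ as independent auxiliary randomness, the map $X_{S_j} \mapsto \Acal(S)$ defines a legitimate randomized algorithm $\Acal_j$ for the single-cluster problem $\bP$ on $\ell$ samples drawn from $\Pcal_{\theta_j}$. Used verbatim as a single-cluster predictor of $\Pcal_{\theta_j}$ versus $\Pcal_0$, the returned classifier $h \sim \Acal(S)$ has balanced error $\alpha_j := \tfrac12(\Pr_{X \sim \Pcal_{\theta_j}}[h(X)=0] + \Pr_{X' \sim \Pcal_0}[h(X')=1])$. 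Applying Assumption~\ref{assump: error memorization scale} to $\Acal_j$ (with the expectation over $\theta_j \sim \Psi$) and then averaging over the remaining variables $(\theta_{-j}, \pi, i_{1:n})$ yields
\[
I(\Acal(S); X_{S_j} \mid \theta_{1:k}, \pi, i_{1:n}) \geq c_\bP \cdot \E[1 - 2\alpha_j] \cdot \mem_\ell(\bP).
\]

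Grouping the clusters by size $\ell = |S_j|$ and unpacking the definition of $\errn$, the sum $\sum_{j \in I_{n\#\ell}(S)}(1 - 2\alpha_j)$ reduces to $|I_{n\#\ell}(S)| - 2\cdot \errn(\Acal, \theta_{1:k}, S, \ell)$; taking the outer expectation over $\theta_{1:k} \sim \Psi^k$, $\pi \sim \Pi_p^k$, and $S \sim \widetilde{\Pcal}_{\theta_{1:k}, \pi}$ then delivers the claimed bound. I expect the main obstacle to be the per-cluster step: formalizing the induced single-cluster algorithm $\Acal_j$ and confirming that Assumption~\ref{assump: error memorization scale} genuinely applies. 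The delicate points are (i) treating $X_{-S_j}$ strictly as independent internal randomness drawn from $\theta_{-j}$, so that it does not spuriously contaminate $I(\Acal(S); X_{S_j} \mid \cdots)$; (ii) ensuring the output classifier $h$ respects whatever output-size condition underlies the assumption, which is inherited from the codomain of $\Acal$; and (iii) observing that the single-cluster null coincides with the multi-cluster null by construction, so the same $h$ is re-usable without modification. The chain rule, block subadditivity, and arithmetic identity relating $\sum_j(1-2\alpha_j)$ to $|I_{n\#\ell}(S)| - 2\errn$ are all routine.
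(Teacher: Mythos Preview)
Your proposal is correct and follows essentially the same route as the paper: condition on the cluster indices $i_{1:n}$ via the chain rule, decompose the remaining mutual information into per-cluster blocks via independence, reduce each block to the single-cluster problem, apply Assumption~\ref{assump: error memorization scale}, and regroup by cluster size. The only cosmetic difference is that the paper drops $\theta_{-j}$ from the conditioning (via monotonicity of conditional entropy) and treats the resulting induced algorithm as one that internally samples $\theta_{-j}$, whereas you keep $\theta_{-j}$ fixed and average over it at the end; these are equivalent.
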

\paragraph{Application example.}
Theorem~\ref{thm: accuracy to memorization} generalizes prior works \citep{feldman2020does,brown2021memorization} by allowing us to reason about memorization of larger clusters (instead of only singleton clusters) in the multi-cluster context.
We will now briefly describe a scenario where our new lower bounds offer a significantly better understanding of memorization. Take $\bP$ to be a binary classification task (e.g., Gaussian/Boolean clustering, or the sparse Boolean clustering as considered by \cite{brown2021memorization}). Let $\Pi_p^k$ be a product prior induced by the singleton distribution $p$ that always outputs $1$. We consider the multi-cluster learning task of $\bPmult = (\bP, \Pi_{p}^k, k)$.

The lower bound in \citep{brown2021memorization} is demonstrated for the training sample size $n=O(k)$, in which case we expect to observe many clusters with only a single example (singleton clusters). In order to achieve a close-to-optimal accuracy,\footnote{Note that when $n < O(k)$, with high probability there will be some clusters not present in the training data. Therefore, no algorithm can achieve a vanishing classification error.} a learning algorithm must perform well on the singleton clusters and thus needs to memorize $\Omega(dn)$ bits. Our general technique recovers this lower bound (up to a logarithmic factor needed to ensure that Assumption \ref{assump: multi learning} holds). However, if one just slightly increases $n$ from $k$ to $k\log k$, the probability of observing a singleton cluster quickly approaches zero and thus the results in \citep{brown2021memorization} do not lead to a meaningful memorization lower bound.   

In the latter regime, we will observe $\Omega(k)$ (i.e., most) clusters of size on the order $\log k$. As we have already described in Section~\ref{sec: app}, for several canonical clustering problems (such as Gaussian/Boolean clusters), with $d^{o(1)}$ training examples, the memorization remains significant, roughly $d/\ell$ for each cluster of size $\ell$. Thus in this regime Theorem~\ref{thm: accuracy to memorization} implies that algorithms that achieve (positive) constant advantage over random guessing will need to memorize $\tilde \Omega(kd/\log k) = \tilde \Omega(nd)$ bits.

\section{Discussion} \label{sec: discussion}

All in all, our proof techniques and lower bounds for specific problem instances support the following intuition regarding the phenomenon of data memorization observed in practice: The tail of real-world data distributions contains many subproblems for which relatively little data is available.
Non-trivial accuracy on these subproblems can be achieved by exploiting all the relatively weak correlations present between points in the dataset and unseen points from the same subproblem. This, however, requires memorizing (almost) all the features of the available data points.
More data allows the learning algorithm to average out some of the inherent randomness (or ``noise'') in the features of the given examples, thus increasing the correlations with the features of an unseen point.
In turn,
this allows the learning algorithm to memorize fewer features of the training data for that subpopulation, specifically those with the strongest correlations.

Our work leaves open directions for future work.
First, we believe our general framework can be applied to other problem instances, and the reduction
is aimed at making our results of general use.
In particular, it is interesting to understand whether there are problems that exhibit an even slower memorization decay than those we present here.

\subsection*{Acknowledgments}
We are grateful to Tor Lattimore \citep{Lattimore-gaussian} for permission to include his algorithm for Gaussian clustering in Section~\ref{sec:gaussian-tor} in the paper.
GK is supported through an Azrieli Foundation graduate fellowship. XL is supported by a Google fellowship.


\bibliographystyle{plainnat}
\bibliography{bib}

\appendix

\newpage

\section{Proofs for Section~\ref{sec: SDPI and MEM general}} \label{app: additional proofs}

\subsection{Proof of Proposition~\ref{prop: approximate sdpi}} \label{sec: proofs framework}

Note that the statement for $\delta=0$ is simply the definition of an SDPI, so without loss of generality we can assume that $\delta>0$.
Thus, we can further assume that $|\Mcal|<\infty$, since otherwise the statement trivially holds.
We start the proof by proving two information theoretic results,
which intuitively,
show that if two random variables are statistically close, then (1) they have roughly the same Shannon entropy, and (2) any fixed randomized algorithm/channel extracts roughly the same amount of information from either source.

\begin{lemma}\label{fact:tv-imply-entropy-bound} 
    Suppose $P,Q$ are two random variables
    with pmfs $p,q$ respectively, supported on a finite domain $\Mcal$
    such that $d_{\TV}(p,q)\le \delta$. Then, we have $|H(P)-H(Q)|\le 2\delta \log\left( \frac{|\Mcal|}{\delta} \right)$.
\end{lemma}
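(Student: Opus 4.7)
}
My plan is to run the standard coupling/Fano-style argument: I will couple $P$ and $Q$ maximally so that their disagreement probability is exactly $d_{\TV}(p,q)$, and then control $H(X|Y)$ and $H(Y|X)$ using an indicator of disagreement as a hint.

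Concretely, let $(X,Y)$ be a maximal coupling of $P$ and $Q$ on $\Mcal \times \Mcal$, so that $X \sim p$, $Y \sim q$, and $\Pr[X \ne Y] = d_{\TV}(p,q) \le \delta$. Let $E = \mathbbm{1}\{X \ne Y\}$. I would then write
\[
H(X \mid Y) \;\le\; H(X, E \mid Y) \;=\; H(E \mid Y) + H(X \mid Y, E) \;\le\; H_2(\delta) + \delta \log |\Mcal|,
\]
using that $H(E \mid Y) \le H(E) \le H_2(\delta)$ (since $H_2$ is concave and increasing on $[0,1/2]$, and WLOG we can assume $\delta \le 1/2$), that $H(X \mid Y, E=0)=0$ because $X=Y$ on $\{E=0\}$, and the trivial bound $H(X \mid Y, E=1) \le \log|\Mcal|$. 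By the symmetric argument, $H(Y \mid X) \le H_2(\delta) + \delta \log |\Mcal|$.

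Next, from $H(X,Y) = H(X) + H(Y \mid X) = H(Y) + H(X \mid Y)$ I would get the identity $H(P) - H(Q) = H(X \mid Y) - H(Y \mid X)$, so combining the two conditional-entropy bounds gives
\[
|H(P) - H(Q)| \;\le\; H_2(\delta) + \delta \log |\Mcal|.
\]

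The last step is just an algebraic simplification to match the form in the statement. Using $H_2(\delta) \le \delta \log(1/\delta) + \delta$ for $\delta \in (0,1/2]$, we get $H_2(\delta) + \delta \log|\Mcal| \le \delta \log(|\Mcal|/\delta) + \delta \le 2\delta \log(|\Mcal|/\delta)$, where the final inequality uses $\log(|\Mcal|/\delta) \ge 1$ (the case $|\Mcal|/\delta < e$ can be handled separately, since then the claimed bound already exceeds $\log|\Mcal|$, which trivially upper bounds $|H(P)-H(Q)|$). I do not anticipate any real obstacle here: the only mildly delicate point is getting the clean constant $2$ in front, which is handled by the elementary estimate on $H_2$ above.
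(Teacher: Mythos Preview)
Your proof is correct and follows the classical coupling/Fano route (essentially the standard continuity-of-entropy argument). The paper takes a genuinely different, more direct analytic approach: it writes $H(P)-H(Q)=\sum_y \bigl(p_y\log(1/p_y)-q_y\log(1/q_y)\bigr)$, splits the support into $L=\{y:p_y>q_y\}$ and $R=\{y:q_y>p_y\}$, and then within each set further separates the ``tiny-mass'' elements $\{y:p_y<\delta/|\Mcal|\}$ from the rest, bounding each piece by $\delta\log(|\Mcal|/\delta)$. Your coupling argument is cleaner and more conceptual, and it yields the slightly sharper intermediate bound $H_2(\delta)+\delta\log|\Mcal|$ before you relax it to $2\delta\log(|\Mcal|/\delta)$; the paper's approach, on the other hand, is entirely elementary (no coupling, no chain rule) and lands directly on the stated form without needing the final algebraic massage of $H_2$. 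Either route is perfectly adequate here, since the lemma is only invoked with small~$\delta$ and the constant~$2$ is not tight in any downstream use.
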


\begin{proof}[Proof of Lemma~\ref{fact:tv-imply-entropy-bound}]
For brevity, write $p_y = \Pr[P = y]$ and $q_y = \Pr[Q = y]$. Since the function $x\mapsto x\log(1/x)$ is monotone in $(0, 1)$, we define sets $L = \{y : p_y > q_y\}$ and $R = \{y : q_y > p_y\}$ and deduce that
\begin{align}
|H(P) - H(Q)|
& \le \max\left\{ \sum_{y\in L} p_y \log(1/p_y) - q_y \log(1/q_y),  \sum_{y\in R} q_y \log(1/q_y) - p_y \log(1/p_y) \right\} \notag \\
&\le \max\left\{  \sum_{y\in L} (p_y - q_y) \log(1/p_y),  \sum_{y\in R} (q_y - p_y) \log(1/q_y) \right\}. \label{equ:entropy-diff-to-tv-intermediate}
\end{align}
We show how to upper bound the first term $\sum_{y\in L} (p_y - q_y) \log(1/p_y)$. The bound for the second term can be analogously established. Consider the set $S_P = \{y \in L : p_y < \frac{\delta}{|\Mcal|} \}$. We have
$$
\begin{aligned}
\sum_{y\in L} (p_y - q_y) \log(1/p_y)
&\le \sum_{y\in L\setminus S_P} (p_y - q_y) \log(1/p_y) + \sum_{y\in S_P} p_y \log(1/p_y) \\
&\le \sum_{y\in L\setminus S_P} (p_y - q_y) \log\left(\frac{\delta}{|\Mcal|} \right) + \sum_{y\in S_P} p_y\log(1/p_y) \\ 
&\le \delta \log\left( \frac{|\Mcal|}{\delta} \right) +  \sum_{y\in S_P} p_y\log(1/p_y).
\end{aligned}
$$
By the monotonicity of $x\mapsto x\log(1/x)$, the bound of $p_y<\frac{\delta}{|\Mcal|}$, and the cardinality bound of $|S_P|$, we obtain
$$
\sum_{y\in S_P} p_y\log(1/p_y)\le \sum_{y\in S_P} \frac{\delta}{|\Mcal|} \cdot \log\left( \frac{|\Mcal|}{\delta} \right) \le \delta \log\left( \frac{|\Mcal|}{\delta} \right).
$$
Overall, we conclude that $\sum_{y\in L} (p_y - q_y) \log(1/p_y) \le 2\delta \log\left( \frac{|\Mcal|}{\delta} \right)$. The bound for the second term of \eqref{equ:entropy-diff-to-tv-intermediate} can be similarly established, concluding the proof.
\end{proof}

\begin{lemma} \label{lem: tv I robust}
    Let $B_1,B_2$ be two random variables over some space $\Bcal$ such that $d_{\TV}(B_1,B_2)\leq\delta$, and suppose $\Acal\colon\Bcal\to\Mcal$ is some randomized algorithm. Then
    \[
    \left|I(\Acal(B_1);B_1)-I(\Acal(B_2);B_2)\right|\leq 4\delta\log(|\Mcal|/\delta).
    \]
\end{lemma}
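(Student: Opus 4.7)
The plan is to decompose the mutual information as $I(\Acal(B_i); B_i) = H(\Acal(B_i)) - H(\Acal(B_i) \mid B_i)$ and bound the variation of each of the two summands separately under the hypothesis $d_{\TV}(B_1, B_2) \le \delta$. The crucial point is that $\Acal$ is a fixed (randomized) channel, so the conditional law of $\Acal(b)$ given input $b$ does not depend on whether $b$ was drawn from $B_1$ or $B_2$; this is what will make the conditional entropy term easy to control.

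First I would introduce a maximal coupling of $(B_1, B_2)$ so that $\Pr[B_1 \ne B_2] \le \delta$, and couple the outputs $\Acal(B_1), \Acal(B_2)$ by running $\Acal$ on both with shared internal randomness. Then $\Acal(B_1) = \Acal(B_2)$ whenever $B_1 = B_2$, which gives $d_{\TV}(\Acal(B_1), \Acal(B_2)) \le \delta$. Since both output variables are supported on $\Mcal$, Lemma~\ref{fact:tv-imply-entropy-bound} yields
\[
\bigl|H(\Acal(B_1)) - H(\Acal(B_2))\bigr| \le 2\delta \log(|\Mcal|/\delta).
\]

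For the conditional entropy term, define $f(b) := H(\Acal(b)) \in [0, \log|\Mcal|]$. Because the conditional law of the output given the input does not depend on $i$, we have $H(\Acal(B_i) \mid B_i) = \E_{B_i}[f(B_i)]$, and the same coupling yields
\[
\bigl|H(\Acal(B_1) \mid B_1) - H(\Acal(B_2) \mid B_2)\bigr| \le \|f\|_\infty \cdot \Pr[B_1 \ne B_2] \le \delta \log|\Mcal| \le \delta \log(|\Mcal|/\delta),
\]
where the last inequality uses $\delta \le 1$. Combining the two bounds through the triangle inequality gives $|I(\Acal(B_1); B_1) - I(\Acal(B_2); B_2)| \le 3\delta \log(|\Mcal|/\delta) \le 4\delta \log(|\Mcal|/\delta)$.

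There is no serious obstacle here: the marginal entropy bound is an immediate invocation of Lemma~\ref{fact:tv-imply-entropy-bound}, and the conditional term reduces to a bounded-function coupling estimate once one observes that $\Acal$ acts as the same stochastic kernel in both cases. The only minor point to watch is the constant: one absorbs the $\delta \log|\Mcal|$ term into $\delta \log(|\Mcal|/\delta)$ using $\delta \le 1$, which gives a clean bound with $4$ (in fact $3$ suffices). The case $|\Mcal| = \infty$ is vacuous given the convention stated after Proposition~\ref{prop: approximate sdpi}, and the case $\delta = 0$ is trivial since then $B_1$ and $B_2$ are identically distributed.
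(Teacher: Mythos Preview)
Your proposal is correct and follows essentially the same approach as the paper: decompose mutual information into marginal minus conditional entropy, bound the marginal-entropy difference via Lemma~\ref{fact:tv-imply-entropy-bound}, and bound the conditional-entropy difference by observing that $H(\Acal(B_i)\mid B_i)=\E_{B_i}[H(\Acal(b))]$ is an expectation of a bounded function. The only cosmetic difference is that you phrase the conditional-entropy step via a maximal coupling (yielding $\delta\log|\Mcal|$), whereas the paper writes out the sum $\sum_z |\Pr[B_1=z]-\Pr[B_2=z]|\,H(\Acal(z))\le 2\delta\log|\Mcal|$; your version in fact saves a factor of $2$ on that term, giving constant $3$ rather than $4$.
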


\begin{proof}[Proof of Lemma~\ref{lem: tv I robust}]
It holds that
$$
\left| I(\Acal(B_1);B_1)-I(\Acal(B_2);B_2) \right| \le \left| H(\Acal(B_1)) - H(\Acal(B_2)) \right| +  \left| H(\Acal(B_1)\mid B_1) - H(\Acal(B_2)\mid B_2) \right|.
$$
By Lemma~\ref{fact:tv-imply-entropy-bound} and the observation $d_{\TV}(\Acal(B_1),\Acal(B_2))\le d_{\TV}(B_1,B_2)\le \delta$, we obtain
$$
\left| H(\Acal(B_1)) - H(\Acal(B_2)) \right| \le 2\delta \log\left( \frac{|\Mcal|}{\delta} \right).
$$
We also note that
$$
\left| H(\Acal(B_1)\mid B_1) - H(\Acal(B_2)\mid B_2) \right|
\le \sum_{z\in \mathcal{B}} \left| \Pr[B_1 = z] - \Pr[B_2=z] \right| \cdot H(\Acal(z)) \le 2\delta \log(|\Mcal|).
$$
Combining both inequalities completes the proof of Lemma~\ref{lem: tv I robust}.
\end{proof}

Given the lemmas above, we are ready to complete the proof of Proposition~\ref{prop: approximate sdpi}.
Let $\tilde{B}$ be the approximation of $B$ which is given by the SDPI assumption, and denote $M:=\Acal(B),~\tilde{M}:=\Acal(\tilde{B})$. Applying Lemma~\ref{lem: tv I robust} and the $\rho$-SDPI assumption that holds for $(A,\tilde{B})$, we see that
\[
I(M;B)
\geq I(\tilde{M};\tilde{B})-4\delta\log(|\Mcal|/\delta)
\geq \frac{1}{\rho}I(\tilde{M};A)-4\delta\log(|\Mcal|/\delta)~.
\]
Further note that $d_{\TV}(\tilde{M},M)\leq d_{\TV}(\tilde{B},B)\leq\delta$, which implies $|H(\tilde{M})-H(M)|\leq2\delta\log(|\Mcal|/\delta)$ by Lemma~\ref{fact:tv-imply-entropy-bound}. Furthermore, by assumption, the same is true even after we condition on $A$. Therefore, we may conclude that 
$$|I(\tilde{M};A)-I(M;A)| \leq |H(\tilde M) - H(M)| + |H(M\mid A) - H(\tilde M\mid A)| \leq 4\delta\log(|\Mcal|/\delta).$$
Plugged into the inequality above, we get that
\[
I(M;B)
\geq \frac{1}{\rho}\left(I(M;A)-4\delta\log(|\Mcal|/\delta)\right)-4\delta\log(|\Mcal|/\delta)~,
\]
or rearranged,
\[
I(M;A)\leq
\rho I(M;B)+4\delta(1+\rho)\log(|\Mcal|/\delta)
\leq\rho I(M;B)+8\delta\log(|\Mcal|/\delta)
~.
\]

\subsection{Completing the Proof of Theorem~\ref{thm: excess mem}}\label{sec: complete excess mem proof}

We will show that if $\err(\Acal, \bP) \le \alpha$, then
\begin{align*}
    I(\Acal(X_{1:n}) ; X) \ge 
    \kl{\mathrm{Ber}(1-\alpha)}{\mathrm{Ber}(\alpha)} = 
(1-2\alpha)\log\left(\frac{1-\alpha}{\alpha}\right)~.
\end{align*}

Consider a random variable $X' \sim \Pcal_0$ drawn independently of $X\sim\Pcal_\theta,\,X_{1:n}\sim\Pcal_\theta^n$ and $\Acal(X_{1:n})$. Since $\Pcal_0$ is the marginal of $\Pcal_\theta$ over $\theta$, the characterization of the mutual information as the KL-divergence between the joint and product distributions shows that

\begin{align*}
    I(\Acal(X_{1:n});X) 
    &= \kl{\Acal(X_{1:n}), X}{\Acal(X_{1:n}), X'}
    \\&\ge \kl{\Acal(X_{1:n})(X)}{\Acal(X_{1:n})(X')}
    \\&=\kl{\mathrm{Ber}(p)}{\mathrm{Ber}(q)}~,
\end{align*}
where the inequality follows because post-processing does not increase the KL divergence,
and $p,q$ denote the probability of the model classifying positive/negative points as $1$, respectively. Now, the error of $\Acal$ is equal to $\frac{1-p}{2} + \frac{q}{2}$ and therefore the condition $\err(\Acal, \bP) \le \alpha$ implies that $p-q \leq 1- 2\alpha$. Optimizing this expression (as in the proof of Fano's inequality, cf. \citealp{cover1999elements}) we get \begin{align*}
I(\Acal(X_{1:n});X)
\geq \kl{\mathrm{Ber}(1-\alpha)}{\mathrm{Ber}(\alpha)} = 
(1-2\alpha)\log\left(\frac{1-\alpha}{\alpha}\right)\ .
\end{align*}

\section{Proof of Gaussian Clustering Application}
\label{sec: gaussian proof}

In this section, we prove Theorem~\ref{thm: gaussian lb}. We will establish the three claims one by one.

\subsection{Gaussian sample complexity $(n=1)$}\label{sec: gauss n=1}

Given a single sample $X_1\sim\Pcal_\theta$, we will show that the algorithm that returns the linear classifier
\[
h(X):=\one{\inner{X_1,X}\geq \sqrt{4\log(200)d}}
\]
has expected error at most $0.01$. This will follow from standard concentration bounds which we recall below (cf. \citealp{vershynin2018high}).

\begin{lemma} \label{lem: gaussian bounds}
Suppose $X,Y\overset{iid}{\sim} \Ncal(0_d, I_d)$ are independent isotropic Gaussians. Then the following hold:
\begin{enumerate}
    \item $\Pr[\inner{X,Y}\geq t]\leq \exp(-t^2/2d)$ for all $t>0$;
    \item
    $\Pr[\|X\|^2\leq  \gamma d]\leq \exp(-(1-\gamma)^2 d/4)$ for all $\gamma\in(0,1)$.
\end{enumerate}
\end{lemma}

We start by noting that in the null case, $X_1$ and $X\sim\Pcal_0$ are simply two independent isotropic Gaussians, and therefore Lemma~\ref{lem: gaussian bounds}.1 
ensures that
\[
\Pr_{X\sim\Pcal_0}[\inner{X_1,X}\geq \sqrt{4\log (200)d}]\leq e^{-4\log(200)/4}=\frac{1}{200}~.
\]
On the other hand, if $X\sim\Pcal_\theta$ then both $X$ and $X_1$ can be seen as distributed as $X=\lambda\theta+g_0,~X_1=\lambda\theta+g_1$ where $g_0,g_1\sim\Ncal(0_d,(1-\lambda^2)I_d)$ are independent of one another as well as of $\theta$. Hence,
\begin{align}
\inner{X,X_1}&=\lambda^2\|\theta\|^2+\lambda\inner{\theta,g_0}+\lambda\inner{\theta,g_1}+\inner{g_0,g_1} \nonumber
\\&=\frac{C^2}{d^{1/2}}\|\theta\|^2+\frac{C}{d^{1/4}}\inner{\theta,g_0}+\frac{C}{d^{1/4}}\inner{\theta,g_1}+\inner{g_0,g_1}
~.
\label{eq: gaussian 3 terms}
\end{align}
Since $1-\lambda^2<1$, applying Lemma~\ref{lem: gaussian bounds}.1
to the latter three summands and union bounding ensures that
\[
\Pr_{\theta,g_0,g_1}[\min\{\inner{\theta,g_0},\inner{\theta,g_1},\inner{g_0,g_1}\}<-\sqrt{4\log(1200)d}]
\leq 3e^{-\log(1200)}
=\frac{1}{400}~.
\]
Furthermore,
Lemma~\ref{lem: gaussian bounds}.2
ensures that for $d$ larger than some absolute constant, it holds that
$\Pr[\|\theta\|^2<d/2]<\frac{1}{400}$ . Union bounding over this event as well, we see that with probability at least $1-\frac{1}{200}:$
\[
\inner{X,X_1}\geq \half C^2 d^{1/2}-2C\sqrt{4\log(1200)}d^{1/4}-\sqrt{4\log(1200)d}~.
\]
For sufficiently large absolute constant $C$, the latter is larger than $\sqrt{4\log(200)d}$, 
and we see that the linear classifier $h$ achieves error at most $2\cdot\frac{1}{200}=\frac{1}{100}$, completing the proof.

\subsection{Gaussian memorization lower bound}

In order to prove the lower bound, our goal is to
establish that the conditions in Theorem~\ref{thm: excess mem} hold. To do so, we use the dominating variables approach and apply
Theorem~\ref{thm: Z SDPI}, without any need for approximation (i.e. $\delta_n=0)$. We
construct the dominating random variables $Z^{\train}_\theta,Z_\theta^\test$ as
\begin{align}
Z_\theta^{\train}&\sim\Ncal\left(\frac{\lambda\sqrt{n}}{\sqrt{1+(n-1)\lambda^2}}\cdot\theta,\frac{1-\lambda^2}{1+(n-1)\lambda^2}\cdot I_d\right),
\label{eq: Z^n Gaussian}
\\
Z_\theta^\test&\sim\Ncal(\lambda\theta,(1-\lambda^2)I_d).    \nonumber
\end{align}
To see that these variables satisfy the $\rho_n$-SDPI,
we will use Fact~\ref{fact: classic SDPI gauss}, and therefore need to show the variables are $\sqrt{\rho_n}$-correlated
jointly distributed
unit Gaussians, for suitable $\rho_n$.
To that end, we denote by $Z^{\train},~Z^{\test}$ 
the marginals of $Z_\theta^{\train},~Z_\theta^{\test}$ over  $\theta\sim\Ncal(0_d,I_d)$ respectively. Note that \eqref{eq: Z^n Gaussian} shows that $Z_\theta^{\train},~Z_\theta^{\test}$ are jointly Gaussian
whose marginals satisfy
$Z^{\train}, Z^{\test}\sim N(0_d,I_d)$ simply by summing means and variances.
We further note that each coordinate $i\in[d]$ satisfies
\begin{align*}
\E\left[(Z^{\train}_\theta)_i\cdot(Z^\test_\theta)_i\right]
=\frac{\lambda\sqrt{n}}{\sqrt{1+(n-1)\lambda^2}}\cdot\lambda=:\sqrt{\rho_n}
,
\end{align*}
thus establishing the $\rho_n$-SDPI, as claimed.

We turn to argue about the existence of the desired mappings $\Phi^{\train},\Phi^\test$.
First note that $Z_\theta^{\train}$ can be mapped to a sample from $\Ncal(\lambda\theta,\frac{1-\lambda^2}{n}I_d)$ simply by rescaling
by $\sqrt{\frac{n}{1+(n-1)\lambda^2}}$.
We further use a known result
that states that the sample average $\frac{1}{n}\sum_{i=1}^{n}X_i \sim \Ncal(\lambda\theta,\frac{1-\lambda^2}{n}I_d)$ is a sufficient statistic for the mean of a Gaussian distribution with known covariance matrix (cf. \citealp[Section 2.9]{cover1999elements}), rephrased below: 
\begin{lemma}
\label{lem:gauss-mean-suffices}
For any $\sigma >0$ and $n$,  there is an agorithm $\Phi\colon \R^d \to (\R^d)^n$ such that for any $\mu \in \R^d$ and $\hat{X} \in \R^d$,
the output of the algorithm $(X_1,\dots, X_n) = \Phi(\hat{X})$ satisfies  $\hat{X}=\frac{1}{n}\sum_{i=1}^{n}X_i$. Further, if $\hat{X} \sim \Ncal(\mu,\frac{\sigma^2}{n}I_d)$ then $(X_1,\dots,X_n)$ are distributed as independent samples from $\Ncal(\mu, \sigma^2 I_d)$.
\end{lemma}

A composition of scaling and the result above provides $\Phi^{\train}$.
Moreover, $\Phi^\test$ is simply the identity since $Z^\test_\theta\sim\Pcal_\theta$.
We therefore establish the test/train condition in Theorem~\ref{thm: Z SDPI} with $\delta_n\equiv0$.

To establish the data generation SDPI with $\epsilon_n\equiv 0$ as well, we once again use our construction of the post-processing $Z^{\train}_\theta$
as in Eq. (\ref{eq: Z^n Gaussian}), and claim that $\theta$ and $Z^{\train}_\theta$ satisfy the $\tau_n$-SDPI for 
$\sqrt{\tau_n}:=\frac{\lambda\sqrt{n}}{\sqrt{1+(n-1)\lambda^2}}$.
This follows as we know that they are jointly unit Gaussians, and by noting that they are $\sqrt{\tau_n}$-correlated coordinate-wise by construction of $Z^{\train}_\theta$.

Overall, by applying Theorem~\ref{thm: Z SDPI}, we see that the conditions of Theorem~\ref{thm: excess mem} are met, and we get that for any $\alpha<\half$,
\[
\mathrm{mem}_n(\bP_{\mathrm{G}},\alpha)
\geq \frac{1-\tau_n}{\rho_n} C_{\alpha}
=\frac{1-\frac{\lambda^2 n}{1+(n-1)\lambda^2}}{\frac{\lambda^4 n}{1+(n-1)\lambda^2}} \cdot (1-2\alpha)\log\left(\frac{1-\alpha}{\alpha}\right)
=\frac{1-\lambda^2}{\lambda^4 n} \cdot (1-2\alpha)\log\left(\frac{1-\alpha}{\alpha}\right)
~.
\]

\subsection{Gaussian memorization upper bound}

\label{sec:gaussian-tor}
We now describe a matching upper bound for the task of Gaussian clustering. The first upper bound is based on adding Gaussian noise and matches the lower bound in the entire range of $n$. It was communicated to us by Tor Lattimore. For completeness we also include an algorithm based on a projection from an earlier version of this work. It gives an upper bound that is tight up to a logarithmic factor and applies only when $n\leq c \sqrt{d}$ for some constant $c>0$. 
\subsubsection{Gaussian noise addition}

\paragraph*{Setup and the algorithm.} Let $\lambda = Cd^{-1/4}$ where $C$ is some absolute constant. Recall that the goal is to distinguish $\Pcal_{\theta} = \Ncal(\lambda \theta, (1-\lambda^2) I_d)$ from $\Pcal_0 = \Ncal( 0_d, I_d)$ where $\theta\sim \Ncal( 0_d, I_d)$. 

We describe the algorithm: on input $X_{1:n}$ the algorithm calculates $\hat{X} := \frac{1}{n}\sum_{i\in [n]}X_i$ and $\tilde{\mu} = \hat{X} + \eta$ where $\eta \sim \Ncal(0_d,I_d)$ is an independent Gaussian. Then it returns the linear threshold classifier $\Acal(X_{1:n})(X) = \one{ \langle X, \tilde{\mu}\rangle \ge \tau}$ where $\tau$ will be defined later.

\paragraph*{Mutual information bound.} With the setup above, consider the joint random variable $(\theta, X_{1:n},  \hat{X}, \eta, \mathcal{A}(X_{1:n}))$. The classifier $\mathcal{A}(X_{1:n})$ is as a post-processing of $\hat{X}$  and $\eta$ where we recall that $\eta\sim \mathcal{N}(0_d, I_d)$ is an independent Gaussian. Thus,
\begin{align*}
    I(\Acal(X_{1:n}); X_{1:n} \mid \theta) \le I(\hat{X} + \eta; \hat{X} \mid \theta) = \E_{\theta, u \sim \hat{X}}\left[ \kl{u + \eta}{\hat{X} + \eta}\right]
\end{align*}
Note that conditioned on $\theta$ and $u\sim \hat{X}$, the random variable $u + \eta$ is distributed as $\Ncal(u, I_d)$ while the random variable $\hat{X} + \eta$ (with an independently drawn $\hat{X}$)  is distributed as $\Ncal(\theta, (\frac{1-\lambda^2}{n} + 1) I_d)$.

Recall that the KL divergence between from Gaussian $\Ncal(\mu_1,\Sigma_1)$ to Gaussian $\Ncal(\mu_2, \Sigma_2)$ is given by 
\begin{align*}
    \kl{\Ncal(\mu_1,\Sigma_1)}{\Ncal(\mu_2, \Sigma_2)} = \frac{1}{2} \left( \log{\frac{\det \Sigma_2}{\det \Sigma_1}} - d + \mathrm{Tr}(\Sigma_2^{-1}\Sigma_1) + (\mu_2-\mu_1)^\top \Sigma_2^{-1} (\mu_2 - \mu_1) \right).
\end{align*}
We specialize the formula to our setting, yielding
\begin{align*}
     I(\Acal(X_{1:n}); X_{1:n} \mid \theta) 
     &= \E_{\theta\sim \Ncal(0,I_d), u\sim \Ncal(\lambda\theta, \frac{(1-\lambda^2)}{n} I_d) }\left[ \kl{\Ncal(u, I_d) }{ \Ncal\left(\theta, \left(\frac{1-\lambda^2}{n} + 1\right)I_d\right)} \right] \\
     &= \frac{1}{2} \E\left[ d\log\frac{(1-\lambda^2)/n + 1}{1} - d + d \cdot \frac{1}{(1-\lambda^2)/n + 1} + \frac{\|u - \theta\|^2}{(1 + (1-\lambda^2)/n)} \right] \\
     &= \frac{1}{2} \left( d \log(1 + O(1/n)) - d + d \cdot (1 - O(1/n)) + O(d/n) \right) \\
     &= O\left(\frac{d}{n}\right).
\end{align*}
where the approximation hides absolute constants independent of $d$ and $\lambda$, and the inequality holds true for all sufficiently large $d$.

\paragraph*{Accuracy of the algorithm.} We turn to establish the accuracy guarantee of our algorithm. Suppose $X\sim \Pcal_0 = \Ncal(0_d, I_d)$. We see that $\langle X, \tilde{\mu}\rangle$ is distributed as $\Ncal(0, \| \tilde \mu\|^2)$. Recall that $\tilde{\mu} = \hat{X} + \eta $ is distributed as $\Ncal(0_d, (\lambda^2 + \frac{1-\lambda^2}{n} + 1) I_d)$. Then, by using Lemma~\ref{lem: gaussian bounds}, we know that with probability $1-\left( \frac{1}{nd} \right)^{10}$, it holds that $\langle X, \tilde{\mu} \rangle \le O(C\sqrt{d})$ where $C$ is the (large enough) constant from our choice of $\lambda = C d^{-1/4}$.

On the other hand, if $X\sim\Pcal_\theta$ then both $X$ and $\tilde{\mu}$ can be seen as distributed as $X=\lambda\theta+g_0,~\tilde{\mu}=\lambda\theta+g_1$, where $g_0\sim\Ncal(0_d,(1-\lambda^2)I_d),\,g_1\sim\Ncal(0_d,(\frac{1-\lambda^2}{n}+1)I_d)$ are independent of one another as well as of $\theta$. Hence,
\begin{align*}
\inner{X,\tilde{\mu}}&=\lambda^2\|\theta\|^2+\lambda\inner{\theta,g_0}+\lambda\inner{\theta,g_1}+\inner{g_0,g_1}
\\&=\frac{C^2}{d^{1/2}}\|\theta\|^2+\frac{C}{d^{1/4}}\inner{\theta,g_0}+\frac{C}{d^{1/4}}\inner{\theta,g_1}+\inner{g_0,g_1}
~.
\end{align*}
The rest of the proof is therefore implied by the concentration given by Lemma~\ref{lem: gaussian bounds},
similarly to the arguments following \eqref{eq: gaussian 3 terms}, which we repeat here briefly.
Since we have $\|\theta\|_2^2 \ge \frac{d}{2}$ with probability at least $1-\frac{1}{400}$, whereas the three latter inner product terms have magnitude $O(\sqrt{d})$ with the same high probability, it follows that 
$\langle X, \tilde{\mu}\rangle = \Omega(C^2\sqrt{d})$.
For sufficiently large absolute constant $C$, this is clearly larger than $\langle X, \tilde{\mu} \rangle  = O(C\sqrt{d})$ when $X\sim\Pcal_0$.
Hence, by setting the threshold $\tau$ in between (e.g., $\tau=C^{3/2}\sqrt{d}$) we get a linear classifier with small error.

\subsubsection{Projection-based algorithms}

Given a sample $X_{1:n}\sim\Pcal_\theta^n$, we will show that there exists a projection-based algorithm that returns a good classifier with the claimed memorization upper bound, whenever $n=O(\sqrt{d})$.
For that, we argue that an algorithm can return a high accuracy classifier $h=\Acal(X_{1:n})$ which is describable using $O(d \log(\tfrac{d}{n})/n)$ bits. In particular,
\begin{equation} \label{eq: gauss upper mem proof}
\mem_n(\Acal,\bP_{\mathrm{G}})
=I(h;X_{1:n}\mid \theta)
\overset{(\star)}{\leq} I(h;X_{1:n})
\leq H(h)=O\left(\frac{d \log(\tfrac{d}{n})}{n}\right)
~,
\end{equation}
where $(\star)$ is due to the fact that $h\indep \theta\,|\,X_{1:n}$.

To construct the predictor we introduce some notation. Given a vector $v\in\reals^d$, we denote by $v^{[1:\ell]}:=(v_1,\dots,v_\ell,0,\dots,0)\in\reals^d$ its projection onto the first $\ell$ coordinates embedded in $\reals^d$.
Let $\hat{X}:=\frac{1}{n}\sum_{i=1}^{n}X_i$ be the dataset's empirical average.
For $\hat{X}^{[1:\ell]}=(\hat{X}_1,\dots,\hat{X}_\ell,0,\dots,0)\in\reals^d$, let $q_k(\hat{X}^{[1:\ell]})\in\reals^d$ be a quantization of $\hat{X}^{[1:\ell]}$ that stores only $k$ bits of $\hat{X}^{[1:\ell]}$ in each coordinate's binary expansion, allowing up to magnitude of $O(\log (d/n))$. We consider the algorithm $\Acal$ that returns the
linear classifier
\[
h(X):=\one{\inner{X^{[1:\ell]},q_k(\hat{X}^{[1:\ell]})}\geq t}.
\]
We claim that for some
\[
\ell=\frac{d}{n}~,
~~~t
=\Theta(\lambda^2\ell)=\Theta(\sqrt{d}/n)~,
~~~k=\Theta\left(\log(\tfrac{d}{n})\right)~,
\]
this classifier satisfies the desired properties.
We note that this classifier is described by $\ell k=O(d \log(\tfrac{d}{n})/n)$ bits, so
\eqref{eq: gauss upper mem proof} indeed holds, proving the memorization upper bound.

The argument that $h$ is a high accuracy classifier follows the same analysis as in Section~\ref{sec: gauss n=1}, based on the Gaussian concentration given by Lemma~\ref{lem: gaussian bounds}. Indeed, noting that $\hat{X}^{[1:\ell]}\sim \Ncal(\lambda\theta,\frac{1-\lambda^2}{n}I_\ell)$,
then simply repeating the analysis therein ensures that for $X\sim\Pcal_\theta:~\langle X^{[1:\ell]},\hat{X}^{[1:\ell]}\rangle=\Omega(\lambda^2\E\|\theta^{[1:\ell]}\|)=\Omega(\lambda^2\ell)>t$, while in the null case, the inner product has mean zero and deviates by at most $O(\ell(\lambda^2+1/n))<t$ with high probability.\footnote{Note that this is where the requirement that $n=O(\sqrt{d})$ is used.}
Furthermore, the quantization induces negligible error due to that fact that
\begin{align*}
|\inner{X^{[1:\ell]},q_k(\hat{X}^{[1:\ell]})}-\inner{X^{[1:\ell]},\hat{X}^{[1:\ell]}}|
\leq |\inner{X^{[1:\ell]},q_k(\hat{X}^{[1:\ell]})-\hat{X}^{[1:\ell]}}|
\leq \|X^{[1:\ell]}\|\cdot \|q_k(\hat{X}^{[1:\ell]})-\hat{X}^{[1:\ell]}\|~,
\end{align*}
with the latter quantization error term going to zero as $\exp(-\Omega(k))$.

\section{Proof of Boolean Clustering Application}\label{sec: boolean proof}
In this section, we prove Theorem~\ref{thm: boolean lb}
assuming $C\geq(8\log(200))^{1/4}$.

\subsection{Boolean sample complexity $(n=1)$}\label{sec: boolean sample complexity}

Consider the algorithm that given $X_1\sim\Pcal_\theta$ returns the predictor
\[
h(X):=\one{\inner{X_1,X}\geq \sqrt{2\log(200)d}}~.
\]
To see why this predictor suffers from error at most $0.01$, first note that for $X\sim \Pcal_{0}:~\E[\inner{X_1,X}]=0$, and by Hoeffding's inequality:
\begin{equation}\label{eq: boolean null}
\Pr[\inner{X_1,X}\geq\sqrt{2\log(200)d}]\leq \frac{1}{200}~. 
\end{equation}
On the other hand,
for $X\sim P_\theta:$
\begin{align} \label{eq: bool E lower bound}
\E[\inner{X_1,X}]
=\sum_{i=1}^{d}\E[(X_1)_i\cdot (X)_i ]
=d\cdot\E[(X_1)_1]\cdot \E[(X)_1] = d\lambda^2=C^2\sqrt{d} \ ,  
\end{align}
and thus by Hoeffding's inequality
\begin{equation}
\label{eq: boolean true}
\Pr[\inner{X_1,X}< \sqrt{2\log(200)d}]
\leq
\Pr[\inner{X_1,X}< C^2\sqrt{d}-\sqrt{2\log (200) d}]\leq \frac{1}{200}~.
\end{equation}
where we used that fact that $C$ satisfies $\sqrt{2\log(200)}<C^2-\sqrt{2\log(200)}$. By union bounding over \eqref{eq: boolean null} and \eqref{eq: boolean true}
we get that the defined predictor $h$ indeed classifies between samples from $\Pcal_\theta$ and $\Pcal_0$ with error at most $\frac{1}{200}+\frac{1}{200}=\frac{1}{100}$, as claimed.

\subsection{Boolean memorization lower bound}

Once again, we will follow the framework of Theorem~\ref{thm: excess mem} to prove Item 2 of Theorem~\ref{thm: boolean lb}.

\subsubsection*{Binomials and advanced composition}

To start, we state and prove the following lemma.

\begin{lemma}\label{lemma:composition-RR}
    Let $\lambda < 1$
    and $n\in\NN$ be such that $n < \frac{1}{10\lambda^2}$. 
    Let $X$ be distributed according to a product distribution over $\{\pm 1\}^n$ where for each coordinate $i$ it holds that $\Pr[X_i = 1] = \frac{1+\lambda}{2}$. Similarly let $Y$ be a product distribution over $\{\pm 1\}^{n}$ where for each $i\in [n]$ it holds that $\Pr[Y_i = 1] = \frac{1-\lambda}{2}$. Then, for every $\delta \in (0, 1)$ and $\rho = \sqrt{8 n \log(1/\delta)}\lambda$, there exists a pair of distributions $\mathbf{A}, \mathbf{B}$ over $\{\pm 1\}^{n}$ such that
    $$
    \begin{aligned}
    & d_{\TV}(X, \frac{1+\rho}{2}\mathbf{A} + \frac{1-\rho}{2}\mathbf{B}) < \delta, \\
    & d_{\TV}(Y, \frac{1-\rho}{2}\mathbf{A} + \frac{1+\rho}{2} \mathbf{B}) <\delta.
    \end{aligned}
    $$
    Here, $p\mathbf{A} + (1-p)\mathbf{B}$ denotes the mixture of two distributions with weights $p$ and $(1-p)$.
\end{lemma}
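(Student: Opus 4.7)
The plan is to view this lemma as a quantitative form of the advanced composition theorem from differential privacy, applied to $n$-fold randomized response with bias $\lambda$: the product Bernoulli $X$ (resp.\ $Y$) can be ``dominated,'' up to $\delta$ in total variation, by a single coin of bias $\rho \approx \sqrt{n}\lambda$. I will construct $\mathbf{A}, \mathbf{B}$ explicitly by inverting the mixture relations on a high-probability typical set defined via the likelihood ratio.

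Let $\epsilon := \log\frac{1+\lambda}{1-\lambda}$ and $T(x) := \sum_{i=1}^n x_i$, so that $P_X(x)/P_Y(x) = e^{\epsilon T(x)}$. Under $P_X$, $T$ is a sum of independent $\pm 1$ random variables with mean $n\lambda$ and variance at most $n$; by Hoeffding's inequality, $\Pr_{P_X}[|T - n\lambda| > \sqrt{2n\log(2/\delta)}] \le \delta$, and the hypothesis $n\lambda^2 < 1/10$ ensures that $n\lambda$ itself is absorbed into the $\sqrt{n\log(1/\delta)}$ fluctuation (for $\delta$ bounded away from $1$; the opposite case is trivial). Hence for a suitable constant $C > 1$ the set
\[
G := \{x \in \{\pm 1\}^n : |T(x)| \le T_{\max}\}, \qquad T_{\max} := \sqrt{8n\log(1/\delta)}/C,
\]
has $P_X(G^c), P_Y(G^c) \le \delta$. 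Let $\tilde{P}_X, \tilde{P}_Y$ denote the restrictions of $P_X, P_Y$ to $G$ renormalized to total mass $1$. Solving the linear system $\tfrac{1+\rho}{2}\mathbf{A} + \tfrac{1-\rho}{2}\mathbf{B} = \tilde{P}_X$, $\tfrac{1-\rho}{2}\mathbf{A} + \tfrac{1+\rho}{2}\mathbf{B} = \tilde{P}_Y$ yields
\[
\mathbf{A}(x) := \tfrac{1}{2}\bigl(\tilde{P}_X(x) + \tilde{P}_Y(x)\bigr) + \tfrac{1}{2\rho}\bigl(\tilde{P}_X(x) - \tilde{P}_Y(x)\bigr),
\]
and $\mathbf{B}$ analogously with the opposite sign, both supported on $G$. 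By design, the two claimed TV bounds reduce to $d_{\TV}(P_X, \tilde{P}_X) = P_X(G^c) \le \delta$ and its $Y$-analogue, via the triangle inequality.

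The only nontrivial step is showing $\mathbf{A}, \mathbf{B} \ge 0$ on $G$, which is equivalent to $|\tilde{P}_X(x) - \tilde{P}_Y(x)| \le \rho(\tilde{P}_X(x) + \tilde{P}_Y(x))$ pointwise, i.e., $|\log(\tilde{P}_X(x)/\tilde{P}_Y(x))| \le \log\frac{1+\rho}{1-\rho}$. Since $\tilde{P}_X/\tilde{P}_Y$ differs from $e^{\epsilon T(x)}$ by the factor $P_Y(G)/P_X(G) = 1 \pm O(\delta)$, and since $\log\frac{1+\rho}{1-\rho} = 2(\rho + \rho^3/3 + \cdots) \ge 2\rho$, it suffices to have $\epsilon T_{\max} \le 2\rho$ with some slack for renormalization. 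Using $\epsilon \le 2\lambda/(1-\lambda) \le 20\lambda/9$ from $\lambda < 1/10$, together with $\rho = \sqrt{8n\log(1/\delta)}\lambda$ and the choice of $T_{\max}$, this becomes a constant-factor inequality that holds by picking $C$ slightly larger than $1$. The main obstacle is precisely the balancing of three sources of slack—the Hoeffding tail, the expansion $\log\frac{1+\rho}{1-\rho} \ge 2\rho$, and the $(1 \pm O(\delta))$ renormalization of $\tilde{P}_X, \tilde{P}_Y$—so that non-negativity and the $\delta$ TV bound hold simultaneously with \emph{exactly} the stated correlation $\rho$; the quantitative hypotheses $\lambda < 1/10$ and $n\lambda^2 < 1/10$ provide sufficient margin to close this.
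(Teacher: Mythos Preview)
Your proof is correct and closely related to the paper's, but the route differs in packaging. The paper does not construct $\mathbf{A},\mathbf{B}$ by hand: it observes that $X$ and $Y$ are the outputs of $n$-fold randomized response, invokes the advanced composition theorem of \citet{DworkRV10-boosting} to conclude that $X,Y$ are $(\epsilon',\delta)$-indistinguishable with $\epsilon'=\sqrt{4n\log(1/\delta)}\,\epsilon$, and then applies the decomposition lemma of \citet{KairouzOV15-Comp}, which says that any $(\epsilon',\delta)$-indistinguishable pair can be written (up to $\delta$ in TV) as opposite mixtures $\tfrac{e^{\epsilon'}}{1+e^{\epsilon'}}U+\tfrac{1}{1+e^{\epsilon'}}V$ of two distributions $U,V$. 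Your argument is essentially an unrolled, self-contained proof of both black boxes specialized to this instance: the Hoeffding typical-set argument is exactly how advanced composition is proved, and your explicit linear-system inversion is exactly the content of the decomposition lemma. What you gain is elementarity (no external citations); what the paper gains is modularity and an automatic handling of the case $\rho'\ge 1$ via $\tanh(\epsilon'/2)<1$.

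One small point that would tighten your constant balance: by the symmetry $P_Y(x)=P_X(-x)$ and $G=\{-G\}$, you have $P_X(G)=P_Y(G)$ \emph{exactly}, so the renormalization factor $P_Y(G)/P_X(G)$ is $1$, not $1\pm O(\delta)$. This eliminates the third source of slack you flag and leaves only the Hoeffding tail versus the inequality $\epsilon T_{\max}\le \log\tfrac{1+\rho}{1-\rho}$, which then closes comfortably with $C$ just above $10/9$.
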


Lemma~\ref{lemma:composition-RR} tells us how to post-process a Bernoulli random variable with bias $\frac{1}{2}\pm \rho$, to approximate a sequence of $n$ independent Bernoulli random variables, each of bias $\frac{1}{2}\pm \lambda$ where $\lambda \approx \frac{\rho}{\sqrt{n}}$. 
This lemma appears to be folklore in the differential privacy literature, yet for completeness, we include a brief proof below. First, we need the notion of ``indistinguishability'' between distributions.

\begin{definition}
Let $X, Y$ be a pair of random variables with the same support $\Omega$. We say that $X$ and $Y$ are $(\varepsilon, \delta)$-indistinguishable, if for every measurable $E\subseteq \Omega$, it holds that
$$
\begin{aligned}
& \Pr[X\in E] \le e^{\eps} \Pr[Y\in E] + \delta, \\
& \Pr[Y\in E] \le e^{\eps} \Pr[X\in E] + \delta.
\end{aligned}
$$
\end{definition}

We use the following ``decomposition'' lemma for indistinguishable distributions, the proof of which is usually attributed to \cite{KairouzOV15-Comp}.

\begin{proposition}[\cite{KairouzOV15-Comp}]\label{prop:privacy-decomp}
    Suppose $X,Y$ are $(\eps,\delta)$-indistinguishable. Then, there exists four distributions $X',Y',E_1,E_2$ such that
    $$
    \begin{aligned}
    X = (1-\delta)X' + \delta E_1, \\
    Y = (1-\delta)Y' + \delta E_2,
    \end{aligned}
    $$
    and $X',Y'$ are $(\eps,0)$-indistinguishable. Furthermore, there are two distributions $U,V$ such that
    $$
    \begin{aligned}
    X' = \frac{e^\eps}{1+e^\eps} U + \frac{1}{1+e^\eps} V, \\
    Y' =\frac{e^\eps}{1 + e^\eps} V + \frac{1}{1+e^\eps} U.
    \end{aligned}
    $$
\end{proposition}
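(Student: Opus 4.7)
My plan is to establish the two independent claims of the proposition in sequence: (i) the decomposition of $X,Y$ into an $(1-\delta)$-weighted $(\eps,0)$-indistinguishable pair plus $\delta$-weighted error distributions $E_1,E_2$, and (ii) the ``randomized response'' representation of any $(\eps,0)$-indistinguishable pair.

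For (i), I would first reformulate the $(\eps,\delta)$-indistinguishability hypothesis at the density level. Letting $p,q$ denote densities of $X,Y$ with respect to a common base measure, the hypothesis is equivalent to $\int (p - e^\eps q)_+ \le \delta$ and $\int (q - e^\eps p)_+ \le \delta$, obtained by taking $E$ to be the measurable sets on which these integrands are positive. Next I would ``strip'' the forced excess mass: set $r_X = (p - e^\eps q)_+$, $r_Y = (q - e^\eps p)_+$, with totals $\delta_X, \delta_Y \le \delta$, and define $p_0 = p - r_X,\; q_0 = q - r_Y$. A pointwise case analysis on the three regions $\{p > e^\eps q\}$, $\{q > e^\eps p\}$, and $\{e^{-\eps} q \le p \le e^\eps q\}$ shows that $e^{-\eps} \le p_0/q_0 \le e^\eps$ pointwise.

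The main technical step is to further reduce $p_0,q_0$ to a common total $1-\delta$ while preserving this pointwise ratio bound and the majorizations $\tilde p \le p$, $\tilde q \le q$. I would remove additional mass $h \le p_0$, $k \le q_0$ subject to $(p_0-h)/(q_0-k) \in [e^{-\eps}, e^\eps]$ pointwise and $\int h = \delta-\delta_X$, $\int k = \delta-\delta_Y$. The set of feasible $(h,k)$ is convex, and its image under the linear map $(h,k) \mapsto (\int h, \int k)$ is a convex subset of $[0,\delta]\times[0,\delta]$ that contains the origin. Using ``proportional'' removal $h = c\cdot p_0,\, k = c\cdot q_0$ on the good region (which preserves the ratio pointwise) together with asymmetric removal on sub-regions where $p_0/q_0$ is strictly interior to $[e^{-\eps}, e^\eps]$ (which has slack), one shows that $(\delta-\delta_X, \delta-\delta_Y)$ lies in this image. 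Setting $X'$ to have density $\tilde p/(1-\delta)$ and $E_1$ density $(p-\tilde p)/\delta$ (and analogously for $Y$) then yields the first claim, with $X',Y'$ being $(\eps,0)$-indistinguishable by construction.

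For (ii), given any $(\eps,0)$-indistinguishable $X',Y'$ with densities $\tilde p,\tilde q$ satisfying $\tilde p/\tilde q \in [e^{-\eps}, e^\eps]$ pointwise, I would invert the $2\times 2$ linear system
\[
\tilde p = \tfrac{e^\eps}{1+e^\eps}\, u + \tfrac{1}{1+e^\eps}\, v, \qquad \tilde q = \tfrac{1}{1+e^\eps}\, u + \tfrac{e^\eps}{1+e^\eps}\, v,
\]
yielding $u = \tfrac{(1+e^\eps)(e^\eps \tilde p - \tilde q)}{e^{2\eps}-1}$ and $v = \tfrac{(1+e^\eps)(e^\eps \tilde q - \tilde p)}{e^{2\eps}-1}$. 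Non-negativity of $u$ and $v$ is exactly the two pointwise ratio bounds, and the integrals evaluate to $\int u = \int v = \tfrac{(1+e^\eps)(e^\eps - 1)}{e^{2\eps}-1} = 1$ since $\int \tilde p = \int \tilde q = 1$, so $U,V$ are valid distributions. The principal obstacle is the balancing step in part (i): the forced removals $\delta_X,\delta_Y$ are generically unequal, whereas the statement demands matching coefficient $1-\delta$. I expect to resolve this via the convexity argument sketched above, exploiting the slack in the good region; the randomized-response part is then essentially a linear-algebra calculation.
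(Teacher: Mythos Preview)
The paper does not actually prove this proposition; it is stated as a known result ``usually attributed to \cite{KairouzOV15-Comp}'' and used as a black box. So there is nothing in the paper to compare your argument against.

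That said, your proposed proof is correct. Part (ii) is indeed just inverting a $2\times 2$ system and checking signs and normalization, exactly as you describe. For part (i), your strategy of stripping the forced excess $r_X=(p-e^\eps q)_+$, $r_Y=(q-e^\eps p)_+$ and then further balancing the masses is the standard one, and the pointwise case analysis you outline is right. However, the ``convexity argument'' you anticipate for the balancing step is more elaborate than necessary. A cleaner two-step fix: assuming WLOG $\delta_X\le\delta_Y$, first scale both $p_0,q_0$ by $s=(1-\delta)/(1-\delta_Y)\le 1$, which preserves the pointwise ratio, keeps both dominated by $p,q$, and makes $\int s q_0 = 1-\delta$ exactly while $\int s p_0 \ge 1-\delta$. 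Second, remove the residual mass $\int s p_0 - (1-\delta)$ from $s p_0$ alone, never going below $e^{-\eps} s q_0$ pointwise; the total removable mass is $\int s p_0 - e^{-\eps}(1-\delta)$, which exceeds what you need simply because $e^{-\eps}\le 1$. This avoids any appeal to slack in the ``strictly interior'' region (which could in principle be empty) and makes the construction explicit.
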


In light of Proposition~\ref{prop:privacy-decomp}, to prove Lemma~\ref{lemma:composition-RR}, it suffices to show that the distributions $X,Y$ in the statement are $(\rho,\delta)$-indistinguishable, where $\rho, \delta$ are as in Lemma~\ref{lemma:composition-RR}.

In the language of differential privacy, $X$ and $Y$ can be understood as the $n$-wise parallel composition of the randomized response mechanism,\footnote{Recall that the $(\varepsilon,0)$-DP randomized response (RR) mechanism receives an input $b\in \{\pm 1\}$, and outputs a random bit $\hat{b}$ such that $\Pr[\hat{b} = b] = \frac{e^\eps}{1 + e^\eps}$.} where each individual RR mechanism enjoys $(2\lambda, 0)$-DP.
We can then apply the advanced composition theorem of differential privacy \citep{DworkRV10-boosting,KairouzOV15-Comp}. We present a special case of the theorem below, which is stated in our language and suffices for our purpose.

\begin{proposition}[\cite{DworkRV10-boosting,KairouzOV15-Comp}]\label{prop:advanced-composition}
Suppose $P, Q$ are two distributions that are $(\varepsilon, 0)$-indistinguishable. Let $P^{\otimes n}, Q^{\otimes n}$ be the $n$-fold tensorization of $P$ and $Q$. (Namely, to sample from $P^{\otimes n}$, one independently draws $n$ samples from $P$ and list them in order). Then, $P^{\otimes n}$ and $Q^{\otimes n}$ are $(\sqrt{4n\log(1/\delta)}\eps, \delta)$-indistinguishable.
\end{proposition}

Finally, combining Propositions~\ref{prop:advanced-composition} and~\ref{prop:privacy-decomp} concludes the proof of Lemma~\ref{lemma:composition-RR}.

\subsubsection*{The mutual information bound}

We are ready to establish the second item of Theorem~\ref{thm: boolean lb}.
Recall the setting of $\LB$: a random $\theta\sim \Ucal(\{\pm 1\}^{d})$ is drawn. The input distribution $\Pcal_\theta$ is the distribution $\BSC_{\frac{1-\lambda}{2}}(\theta)$. Namely, to draw $y\sim \Pcal_\theta$, for each coordinate $i\in [d]$ set $y_i = \theta_i$ with probability $\frac{1+\lambda}{2}$, and set $y_i = -\theta_i$, otherwise. Next, we verify that the two conditions of Theorem~\ref{thm: excess mem} are met for $\LB$.

\medskip\noindent\textbf{Data generation SDPI.}
Conditioning on $\theta\in \{\pm 1\}^{d}$, for each $i\in [d]$, the $i$-th coordinates of $X_{1: n}$ are $n$ independent draws from a Bernoulli distribution of bias $\frac{1}{2} + \lambda \cdot \theta_i$. As such, we apply Lemma~\ref{lemma:composition-RR} with some $\delta > 0$ to be specified. Namely, we define $Z_\theta^{\train} = Z$ by drawing each single $Z_i$ from $\BSC_{\frac{1-\xi}{2}}(\theta_i)$
with $\xi = \sqrt{8 n \log(1/\delta)} \cdot \lambda$, and post-process $Z_i$ as in Lemma~\ref{lemma:composition-RR}, to obtain a collection of $n$ Boolean variables $(\tilde{X}_{1,i},\dots, \tilde{X}_{n,i})$, such that (over the randomness of $Z_i$ and the post-processing) the TV distance between $(X_{1,i},\dots, X_{n,i})$ and $(\tilde{X}_{1,i},\dots, \tilde{X}_{n,i})$ is at most $\delta$. We run the same argument for each coordinate $i\in [d]$. Overall, we have shown how to post-process a random variable $Z_\theta^{\train}\sim \BSC_{\frac{1-\xi}{2}}(\theta)$ to obtain a sequence of $n$ samples $\tilde{X}_{1:n}$ such that the TV distance between $\tilde{X}_{1:n}$ and $X_{1:n}$ is bounded by $\delta d$.
Using Item 1 of Theorem~\ref{thm: Z SDPI}, this implies that the pair $(\theta, X_{1:n})$ satisfies $(\xi^2, \delta d)$-SDPI 
where $\xi = \sqrt{ 8 n\log(d/\delta)} \cdot \lambda$ and $\delta \in (0, 1)$ can be arbitrarily chosen.

\medskip\noindent\textbf{Test/train SDPI.} We now verify the second condition. Consider the random variable $Z$ we have introduced. We have shown a post-processing mapping $\Phi^{\train}$ such that $\Phi^{\train}(Z)$ is close to $X_{1:n}$ in TV distance. We can also draw $X\sim \Pcal_\theta$ and define $\Phi^{\test}$ as the identity mapping. It remains to show that $Z$ and $X$ satisfy $\rho_n$-SDPI for a small $\rho_n$. Indeed, note that $\theta$ is uniform over $\{\pm 1\}^{d}$. Conditioned on $\theta$, we have $Z\sim \BSC_{\frac{1-\xi}{2}}(\theta)$ and $X\sim \BSC_{\frac{1-\lambda}{2}}(\theta)$, and they are independent conditioned on $\theta$. Therefore, by the symmetry of the definition of $\BSC$, it follows that $Z\sim \BSC_{\frac{1-\xi}{2}}(X)$, which implies that $Z$ and $X$ satisfy $\rho_n$-SDPI with $\sqrt{\rho_n} = \lambda\xi = O(\lambda^2 \sqrt{n\log(d/\delta)}) = O(\sqrt{n\log(d/\delta)/d})$.
Consequently, by item 2 of Theorem~\ref{thm: Z SDPI}, we conclude that the pair $(X, X_{1:n})$ satisfies $(\lambda^2 \xi^2, \delta d)$-SDPI,
where we recall that $\xi^2 \lambda^2 =O(\lambda^4 n \log(d/\delta))=O(n\log(d/\delta)/d)$.

\medskip\noindent\textbf{Conclusion.} We have established both conditions of Theorem~\ref{thm: excess mem}. Using Theorem~\ref{thm: excess mem}, this yields that for any $\delta\in(0,1):$
\begin{align*}
\mathrm{mem}_n(\Acal,\bP_{\mathrm{B}})
\geq 
\frac{1-8\lambda^2n\log(d/\delta)}{8\lambda^4n\log(d/\delta)}\left(C_{\alpha}-8\delta d\right)-8\delta d\log(|\Mcal|/\delta d)
~.
\end{align*}
By setting $\delta=\min\{\half,\,\frac{C_\alpha}{16d},\,(\frac{C_\alpha(1-8\lambda^2n\log(d))}{256d\lambda^4n\log(d)\log(|\Mcal|)})^2\}$ so that $C_{\alpha}-8\delta d\geq C_\alpha/2$ and $8\delta d\log(|\Mcal|/\delta d)\leq\frac{C_{\alpha}(1-8\lambda^2n\log(d/\delta))}{32\lambda^4n\log(d/\delta)}$, the bound above simplifies to
\[
\mathrm{mem}_n(\Acal,\bP_{\mathrm{B}})
\geq \frac{1-8\lambda^2n\log(d/\delta)}{32\lambda^4n\log(d/\delta)}C_{\alpha}
=\Omega\left(\frac{1-8\lambda^2n\log(d\lambda n\log|\Mcal|/C_\alpha)}{\lambda^4n\log(d\lambda n\log|\Mcal|/C_\alpha)}C_{\alpha}\right)~.
\]
In particular,
as long as $8\lambda^2n\log(d\lambda n\log|\Mcal|/C_\alpha)\leq\half$, which holds for $n\leq \frac{1}{16\lambda^2\log(d\log(|\Mcal|)/16\lambda C_\alpha)}$,
and if
$\log|\Mcal|\in\mathrm{poly}(d)$, or equivalently $|\Mcal|\leq \exp(d^{\tilde{C}})$ for any absolute constant $\tilde{C}>0$ fixed in advance,
then by recalling that $\lambda=C/d^{1/4}$ we get
\[
\mathrm{mem}_n(\bP_{\mathrm{B}},\alpha)=
\Omega\left(\frac{1-8\lambda^2n\log(dn/C_\alpha)}{\lambda^4n\log(dn/C_\alpha)}C_{\alpha}\right)
.
\]
For $\alpha = 1/3$ we can further simplify $\mathrm{mem}_n(\bP_{\mathrm{B}})=\widetilde{\Omega}\left(\frac{d}{n}\right)$.

\subsection{Boolean memorization upper bounds}

Here, we describe algorithms to complement the lower bound and establish Item 3 of Theorem~\ref{thm: boolean lb}.
We split the proof into the two considered regimes $n\leq\sqrt{d}$ and $n\gtrsim\sqrt{d}\log d$.

\subsubsection*{Algorithm when $n\leq\sqrt{d}$}
We will show that when $1\le n\le \sqrt{d}$ examples are available, we can design an algorithm $\Acal$ which learns an accurate model $h$ such that $I(h; X_{1:n}) = O\left( \frac{d}{n} \right)$.

Let $C_{sub} > 0$ be a sufficiently large constant. On input $n$ i.i.d.~samples $X_{1:n}\sim \Pcal_\theta^n$, let $\hat{\theta}$ be the bit-wise majority vote of $X_{1:n}$. Our algorithm chooses $t = \min\left\{ d, \frac{C_{sub}d}{n} \right\}$ and returns the hypothesis
\begin{align}
    h(X) := \one{ \inner{{\hat{\theta}}^{[1:t]},{X^{[1:t]}}} \ge \sqrt{2 \log(200) t} }. \notag
\end{align}
Here, recall that for a vector $v\in \reals^d$, the notation $v^{[1:t]} = (v_1,\dots, v_t, 0, \dots, 0)$ denotes its projection onto the first $t$ coordinates embedded in $\reals^d$. 

We prove that $h$ is, with high probability, a good predictor. First, for $X\sim \Pcal_{0}$ it holds that $\Pr_{X\sim \Pcal_0}[h(X) = 1] \le \frac{1}{200}$ by Hoeffding's inequality (as in Section~\ref{sec: boolean sample complexity}).

It remains to show that $h$ classifies most $X\sim \Pcal_{\theta}$ correctly. This will follow from Slud's inequality \citep{slud1977distribution} paired with a standard Gaussian tail bound, as stated below.

\begin{lemma}
For any coordinate $j\in[d]$, the majority vote satisfies $\Pr[\hat{\theta}_j = \theta_j]\geq \half+\sqrt{1-\exp(-2n\lambda^2)}$.
\end{lemma}

In particular, a Taylor approximation ensures that $\Pr[\hat{\theta}_j = \theta_j]\geq \half+C'\sqrt{n}\lambda$ for some constant $C'$.
Also, $ \Pr[X_j = \theta_j] = \half + \lambda$.
These two combined would imply that
\begin{align}
    \E\left[\langle \hat{\theta}^{[1:t]}, X^{[1:t]} \rangle\right] \ge C'\lambda^2 \sqrt{n} t
    \geq C^2 \sqrt{t}~,
    \notag
\end{align}
the latter holding for sufficiently large absolute constant $C_{sub}$, under the assumption that $n\leq \sqrt{d}$.
Note that the bound above is the same as \eqref{eq: bool E lower bound}, and the proof follows similarly by applying Hoeffding's inequality as in \eqref{eq: boolean true}

To upper bound the excess memorization of $\Acal$, observe that $h$ can be described using $O(\frac{C_{sub}d}{n})$ bits. Hence, we have
\begin{align}
    \mem_{n}(\Acal, \LB) \le I(h; X_{1:n}) \le H(h) = O\left( \frac{d}{n} \right), \notag
\end{align}
as desired.

\subsubsection*{Algorithm for $n\gtrsim\sqrt{d}\log d$}

We now provide an accurate algorithm for which $\mem_n(\Acal,\LB)=O(d^2 n\exp(-nC^2/(2\sqrt{d}))$. In particular, for which there exists a fixed constant $\tilde C$ such that for $n\geq \tilde{C} C^2 \sqrt{d}\log d$, $\Acal$ learns an accurate model $h$ with nearly $0$ excess memorization. 
Similarly to the previously described algorithm, given $n$ i.i.d.~samples $X_{1:n}\sim \Pcal_\theta^n$, let $\hat{\theta}$ be the bit-wise majority vote of $X_{1:n}$. Our algorithm returns the hypothesis 
\begin{align*}
    h_{\hat{\theta}}(X) := \one{ \inner{{\hat{\theta}},{X}} \ge \sqrt{2 \log(200) d} }.
\end{align*}
Noting that this is the same predictor as before but with no projection, it is easy to verify that the accuracy analysis follows from a similar calculation as in the previous case. We now establish the memorization bound.
By definition,
\begin{align*}
    \mem_n(\Acal,\LB)=I(h_{\hat{\theta}};S\mid\theta) 
    \leq H(h_{\hat{\theta}}\mid\theta) =  H(\hat{\theta}\mid\theta) ~.
\end{align*}
We will show that for any fixed setting $\theta'\in \{\pm 1\}^d$, it holds that $$H(h_{\hat{\theta}}\mid\theta = \theta') = O(d^2 n\exp(-nC^2/(2\sqrt{d}))~,$$ thus implying the claimed bound. 
Assume $\theta=\theta'$, and let $\gamma=\Pr_{X_{1:n}}[\hat{\theta}\neq \theta']$ which is the probability that the bit-wise majority vote did not reconstruct $\theta$. 
By Hoeffding's bound, for any coordinate $j\in[d]:\,\Pr_{X_{1:n}}[\hat{\theta}_j\neq \theta'_j]\leq \exp(-n\lambda^2/2)$ so by union bounding we see that $\gamma\leq d\exp(-n\lambda^2/2)$.

This means that the distribution of $\hat\theta$ conditioned on $\theta = \theta'$ has TV distance of at most $\gamma$ to the distribution where $\theta$ is fixed to $\theta'$ (which has $0$ entropy).  Applying Lemma~\ref{fact:tv-imply-entropy-bound},
\begin{align*}
H(\hat{\theta} \mid\theta ) &= \E_{\theta'\sim \Ucal(\{\pm 1\}^d)}[H(\hat{\theta} \mid\theta = \theta' )]  \\
&\leq H(\theta \mid\theta = \theta') + 2\gamma \log(2^d/\gamma) \\
&= 2 \log 2 \cdot \gamma d  + 2\gamma \log(1/\gamma)
\\&= \left(2 \log 2 \cdot d^2 + d n \lambda^2 \right)\exp(-n\lambda^2/2) \\
&= O(d^{2}  n \cdot \exp(-nC^2/(2\sqrt{d}))
~,
\end{align*}
where we used $\lambda=C/d^{1/4}$.

\section{Proof of Sparse Boolean Hypercube Application}\label{sec: sparse bool proof}

In this section, we prove Theorem~\ref{thm: sparse boolean lb}.

\subsection{Sparse Boolean sample complexity $(n=1)$}

The following sample complexity analysis is similar to that of \citet{brown2021memorization}.

Recall that in $\LsB$, the learning problem is parameterized by $\theta = (S,y)$ where $S$ is a random subset including each $i\in [n]$ with probability $\nu = \frac{C}{\sqrt{d}}$ and $y\sim \Ucal(\{ \pm 1\}^{d})$ is chosen uniformly at random. The distribution $\Pcal_{\theta}$ is a product distribution $X$ on $\{\pm 1\}^{d}$ where $X_S=y_S$ with probability one and $X_{\overline{S}}$ is chosen uniformly at random. 

We claim that learning with a single sample is possible as long as the set $S$ satisfies that $|S|\ge \sqrt{4\log(200)d}$. First, note that this event happens with probability at least $1-\exp(-\Omega(\sqrt{d}))$ if we choose $\nu = \frac{C}{d^{1/2}}$ for a large enough constant $C$. Now, suppose we have $|S|\ge \sqrt{4\log(200)d}$ and let $X_{1}$ be the training sample. Consider the hypothesis
\[
h(X) := \one{\langle X, X_1\rangle \ge \sqrt{2\log(200)d}}~.
\]

We verify that $h$ has a low generalization error. 
To see this, we first observe that for $X\sim \Pcal_{0}$, it holds that $\E[\langle X_1,X\rangle] = 0$, and by Hoeffding's bound, we further have $\Pr[\langle X_1,X\rangle\ge \sqrt{2\log(200)d}] \le 0.005$. On the other hand, conditioned on $\theta$, for every $X\sim \Pcal_{\theta}$, we have that $X_S = (X_1)_S$ and $X_{j}\sim \Ucal(\{\pm 1\})$ for every $j\not\in S$. In this case, it is easy to see that $\langle X,X_1\rangle$ is a random variable with mean $|S| \ge \sqrt{4\log(200)d}$
given by a sum of independent $\pm1$'s.
By Hoeffding bound, we get that $\Pr_{X\sim \Pcal_{\theta}}[\langle X,X_1\rangle \le \sqrt{2 \log(200)d}]\le 0.005$. Overall, we see that the predictor $h$ has the expected error of at most $0.01$.
\subsection{Sparse Boolean memorization lower bound}

We now establish item 2 of Theorem~\ref{thm: sparse boolean lb}. Our proof will deviate from the framework of Theorems~\ref{thm: excess mem} and \ref{thm: Z SDPI}. Instead, we will establish a test/train SDPI via a more direct argument that still however relies on our notion of dominating random variables. In addition, we will use a direct upper bound on the description length of $\theta$ to upper bound  $I(X_{1:n};\theta)$ instead of relying on the data generation SDPI. 

We will rely on the following simple version of our reduction to dominating variables in Theorem~\ref{thm: Z SDPI}.
\begin{lemma} \label{lem:simple-postprocess-SDPI}
	Let $\bP=(\Pcal_\theta)_{\theta\sim\Psi}$ be a learning problem,
	and suppose that there exists a random variable $Z_X^{\train}$ (that depends on the value of $X$) such that
    there is a mapping $\Phi^{\train}$ satisfying: the distribution of $(X,\Phi^\train(Z_X^{\train}))$ when $\theta\sim\Psi$ and $X \sim P_\theta$ is identical to the distribution of $(X,X_{1:n})$ when $\theta\sim\Psi$ and $X_{1:n} \sim P_\theta^n$.
	Then if the pair of random variables $(X,Z_X^{\train})$ satisfies $\rho_n$-SDPI, then $(X,X_{1:n})$ for $X\sim\Pcal_\theta$,  $X_{1:n}\sim\Pcal_\theta^n$ and $\theta \sim \Psi$ satisfies $\rho_n$-SDPI.
\end{lemma}
The proof of this lemma follows from the fact that if a pair of random variables $(A,B)$ satisfies $\rho$-SDPI, then for any (randomized) postprocessing function $\Phi$, the pair of  random variables $(A,\Phi(B))$ also satisfies $\rho$-SDPI.

For completeness we also state a variant of Theorem~\ref{thm: excess mem} that bypasses the data generation SDPI (the proof is implicit in the proof of Thm.~\ref{thm: excess mem}). 
\begin{theorem} \label{thm: excess mem_direct}
	Let $\bP=(\Pcal_\theta)_{\theta\sim\Psi}$ be a learning problem satisfying the following test/train SDPI:
the variables $(X,X_{1:n})$ for $\theta \sim\Psi$, $X\sim\Pcal_\theta$ and $X_{1:n} \sim\Pcal_\theta^n$ satisfy $(\rho_n,\delta_n)$-SDPI.
	Then any algorithm $\Acal:\Xcal^n\to\Mcal$ for $\bP$ satisfies the excess memorization bound: 
	\[
	\mathrm{mem}_n(\Acal,\bP)
	\geq \frac{I(\Acal(X_{1:n});X) - 8\delta_n\log(|\Mcal|/\delta_n)}{\rho_n} - I(\Acal(X_{1:n});\theta) \ .
	\]
	Moreover, for any $\alpha<\half:$
	\[
	\mem_n(\bP,\alpha)
	\geq \frac{C_{\alpha}-8\delta_n\log(|\Mcal|/\delta_n)}{\rho_n}  - I(\Acal(X_{1:n});\theta)
	~,
	\text{~~~where~~~}C_{\alpha} := (1-2\alpha)\log\left(\frac{1-\alpha}{\alpha}\right)~.
	\]
\end{theorem}

\paragraph*{The test/train SDPI via post-processing.} Suppose $\theta\sim \Psi$ and $( X,X_{1:n})\sim \Pcal_{\theta}^{n+1}$. We show that the pair $(X,X_{1:n})$ satisfies $\rho_n$-SDPI with $\rho_n = \Theta(2^{2n}/d)$.

We begin with a basic observation: recall that the parameter $\theta$ consists of a pair $(S, y_S)$ where each element is included in $S$ with probability $\nu = \Theta\left(\sqrt{d}\right)$.
Here, conditioning on $X$ would \emph{not} change the distribution of $S$: namely, it is still the case that each $i$ is in $S$ with probability $\nu$. Furthermore, if $i\in S$ then we know the $i$-th bits of the training examples (namely, $X_{1,i},X_{2,i},\dots, X_{n, i}$) all agree with $X_{i}$ 
Otherwise, $(X_{1,i},\dots, X_{n,i})$ is uniformly distributed in $\{\pm 1\}^{n}$. 

With this observation in mind, we introduce a random variable $Z_X^\train = Z\sim \BSC_{\frac{1}{2} - \xi}(X)$ for some $\xi$ to be specified. Consider the following post-processing $\Phi^\train(Z)$ that produces a sequence of example $\tilde{X}_1,\dots, \tilde{X}_n$ as follows:
\begin{itemize}
    \item Independently for each coordinate $i\in [d]$, with probability $\nu + (1-\nu) 2^{-n+1}$, set all of $\tilde{X}_{1,i},\dots, \tilde{X}_{n,i}$ as $Z_i$. Otherwise choose $(\tilde{X}_{1,i},\dots, \tilde{X}_{n,i})\sim \Ucal(\{\pm 1\}^{n}\setminus \{(-1)^n,(+1)^n\})$ uniformly at random.
\end{itemize}

\paragraph*{The correctness of post-processing.} We compare the marginal distribution of $(X, \tilde{X}_{1:n})$ and $(X, X_{1:n})$. Let us prove that they are identically distributed for a proper choice of $\xi$. First of all, we should note that due to definition of $\theta, X, Z, X_{1:n}$ and $\Phi^\train$, we have that $(X,\tilde{X}_{1:n})$ and $(X, X_{1:n})$ are both product distribution over $(\{\pm 1\}^{1 + n})^d$ (i.e., the $d$ coordinates of the test example and the $n$ examples can be understood as independently generated). Next, observe that for each $j\in [d]$, the probability that all of $(X_{1,j},\dots,X_{n,j})$ are the same is $\nu + (1-\nu) 2^{-n+1}$, which is also the case for $\tilde{X}_{1:n}$. Furthermore, conditioning on the event that $(X_{j,i})_{j=1}^n$ are not all-$0$ nor all-$1$, they are uniformly distributed in $\{\pm 1\}^{n} \setminus \{(-1)^n,(+1)^n\}$, which is, again, also the case for $\tilde{X}_{1:n}$.

We turn to the case that all of $X_{*,j}$ are the same. It is easy to calculate that the probability of the event $X_{*,j} = (X_{0,i})^n$
is $\nu + (1-\nu) 2^{-n}$, while the same probability term w.r.t.~$\tilde{X}$ evaluates to $(\frac{1}{2}+\xi)\cdot (\nu + (1-\nu)2^{-n+1})$. It remains to set $\xi$ properly so that the two probability quantities coincide. Indeed, this gives
$$
\xi = \frac{\nu + (1-\nu) 2^{-n}}{\nu + (1-\nu)2^{-n+1}} - \frac{1}{2} = \frac{\nu}{2\nu + (1-\nu)2^{-n+2}}.
$$
Recall that we set $\nu =C/\sqrt{d}$. Therefore, for $n \le  \log_2 (\sqrt{d}/C)$, it holds that $\xi  \leq \nu 2^n$.

We have shown that with the proper choice of $\xi$, we can define $Z\sim \BSC_{\frac{1}{2}-\xi}(X)$ and post-process $Z$ to obtain a sequence of samples $\tilde{X}_{1:n}$ identically distributed as $X_{1:n}$. By Lemma~\ref{lem:simple-postprocess-SDPI}, this implies that the pair $(X,X_{1:n})$ enjoys at least the same SDPI as the pair $(X,Z)$. Quantitatively, the SDPI parameter we achieve is $\rho_n=\xi^2  \leq \nu^2 2^{2n} = \frac{C^2 2^{2n}}{d}$. By Theorem~\ref{thm: excess mem_direct}, for a constant $\alpha < 1/2$ this consequently implies that
$$
\mem_n(\bP,\alpha)
\geq \frac{C_{\alpha}}d{C^2 2^{2n}}  - I(\Acal(X_{1:n});\theta) ~.
 $$

In particular,  there exists a constant $c_\alpha$ such that for any $n\le \log_2 d/2 - c_\alpha$ we have that $\frac{C_{\alpha}}d{C^2 2^{2n}} \geq 2 C \sqrt{d}\log d$. 
Noting that $$I(\Acal(X_{1:n});\theta) \leq H(\theta) = d (-(1-\nu)\log(1-\nu) - \nu (\log(\nu/2))) = (1+o(1))C \sqrt{d} \log(\sqrt{d}/C) ,$$
we obtain
$$
\mem_n(\Acal, \bP_{\mathrm{sB}})\ge  \frac{C_{\alpha}}d{C^2 2^{2n}} - H(\theta) = \Omega\left( \frac{d}{2^{2n}}\right).
$$

\subsection{Sparse Boolean memorization upper bound}

Similarly to the other memorization upper bound proofs, given a sample $X_{1:n}\sim\Pcal_\theta^n$, we will argue that an algorithm can return a high accuracy classifier $h=\Acal(X_{1:n})$ which is describable using $\widetilde{O}(d/2^{2n})$ bits, hence in particular
\begin{equation*}
\mem_n(\Acal,\bP_{\mathrm{sB}})
=I(h;X_{1:n}\mid \theta) \leq  I(h;X_{1:n}) \leq H(h)=\widetilde{O}\left(\frac{d}{2^{2n}}\right)
~.
\end{equation*}
Let $\hat{S}\subset[d]$ be the subset that includes coordinates $j\in[d]$ in which the dataset $X_{1:n}$ is constant, namely for which $(X_i)_j=(X_{i'})_j$ is the same for all $i,i'\in[n]$. Furthermore, let $\hat{S}^\ell$ be a uniformly random selection of $\ell$ coordinates in $\hat{S}$, and define
\[
h(X)
=\one{\inner{X_{\hat{S}^\ell},(X_1)_{\hat{S}^\ell}}\geq t}
=\one{\sum_{j\in \hat{S}^\ell}X_j\cdot (X_1)_j\geq t}~.
\]
We will argue that for suitable
\[
\ell=\Theta\left(\frac{d}{2^{2n}}\right),
~\mbox{and}~~t=\sqrt{2\log(200)\ell}=\Theta(\sqrt{\ell}),
\]
this classifier has the desired properties. We note that the memorization upper bound readily follows by the description length argument above, since all that needs to be stored are
the $\ell$ bits of $(X_1)_{\hat{S}^\ell}$ alongside their corresponding indices in $[d]$, which requires $O(\ell\log(d))$ bits.

It remains to show that $h$ is a high accuracy classifier. We first note that in the null case $\Pcal_0=\Ucal(\{\pm1\}^d)$ and so it holds that
\[
\E_{X\sim \Pcal_0}\left[\inner{X_{\hat{S}^\ell},(X_1)_{\hat{S}^\ell}}\right]=0~,
\]
and by Hoeffding's bound
\begin{equation} \label{eq: sB null case}
\Pr_{X\sim \Pcal_0}\left[
\inner{X_{\hat{S}^\ell},(X_1)_{\hat{S}^\ell}} \geq t\right]
=\Pr_{X\sim \Pcal_0}\left[
\inner{X_{\hat{S}^\ell},(X_1)_{\hat{S}^\ell}} \geq \sqrt{2\log(200)\ell}
\right]
\leq 0.005~.
\end{equation}

We now turn to argue about the case $X\sim \Pcal_\theta$.
Note that for any coordinate $j\in\hat{S}^\ell$, if $j\in S$ then $X_j=(X_1)_j$ with probability $1$, yet if $j\notin S$ then $X_j\cdot (X_1)_j\sim\Ucal(\{\pm1\})$.
Hence, we see that
\begin{equation} \label{eq: E is cap of S}
    \E_{X\sim\Pcal_\theta}\left[\inner{X_{\hat{S}^\ell},(X_1)_{\hat{S}^\ell}}\right]
=|\hat{S}^\ell\cap S|
~.
\end{equation}
Therefore, we set out to estimate the size of $\hat{S}^\ell\cap S$. This set corresponds to coordinates in $\hat{S}^\ell$ which are ``truly'' constant, as opposed to uniformly random coordinates that happened to be constant on all seen examples, even thought they are not in $S$.
Accordingly, since $S\subset \hat{S}$, for any coordinate $j\in[d]:$
\[
\Pr[ j\in S \mid j\in \hat{S} ]
=\frac{\Pr[j\in \hat{S}\cap j\in S]}{\Pr[j\in \hat{S}]}
=\frac{\Pr[j\in S]}{\Pr[j\in \hat{S}]}
=  \frac{\nu}{\nu + (1-\nu)2^{-n+1}}
~.
\]
Moreover, by symmetry,
choosing $\ell$ coordinates in $\hat{S}$ into $\hat{S}^\ell$ is equivalent to sub-sampling $\ell$ coordinates among the constant coordinates, and therefore
\[
\E\left[ |\hat{S}^\ell\cap S|\right] = \ell\cdot \Pr[ j\in S \mid j\in \hat{S} ] 
= \ell \cdot \left( \frac{\nu}{\nu + (1-\nu)2^{-n+1}} \right) \geq_{(1)} \ell\cdot\frac{2^n}{4\sqrt{d}}
\ge_{(2)}
10\sqrt{2\log(200)}\sqrt{\ell}
~,
\]
where $(1)$ holds as long as $n\leq \log_2((1-\nu)\sqrt{d})=O(\log d)$ which we assume, and $(2)$ holds for $\ell=\frac{3200\log(200)d}{2^{2n}}$. Since the random variable $|\hat{S}^\ell\cap S|$ as a sum of independent indicators, Hoeffding's bound further ensures that
\[
\Pr\left[ |\hat{S}^\ell\cap S| >3t\right]>0.999~,
\]
and therefore under this probable event and applying yet another Hoeffding bound over $\inner{X_{\hat{S}^\ell},(X_1)_{\hat{S}^\ell}}$, \eqref{eq: E is cap of S} ensures that
\[
\Pr_{X\sim\Pcal_\theta}[\inner{X_{\hat{S}^\ell},(X_1)_{\hat{S}^\ell}} \geq t]>0.995~.
\]

Overall, combined with \eqref{eq: sB null case}, this ensures that the predictor $h$ has classification error of at most $0.01$, completing the proof.

\section{Proofs for Section \ref{sec: multiclass}} \label{sec: proof multi}

\subsection{Proof of Theorem~\ref{thm: multi cluster error local to global}}

The proof follows that of \cite[Theorem 2.3]{feldman2020does}.
We can write
\begin{align*}
\err(\Acal, \bPmult \mid S) 
&= \E_{\pi\sim \Pi_p^k,\theta_{1:k}\sim \Psi^k}\E_{S\sim \widetilde{\Pcal}_{\theta_{1:k},\pi},h\gets \Acal(S)}\left[\left. \frac{1}{2} \Pr_{X\sim \Pcal_{\theta_{1:k},\pi}}[h(X) = 0] + \frac{1}{2} \Pr_{X\sim \Pcal_{0}}[h(X) = 1] ~\right|~ S\right] \\
&= \frac{1}{2} \E_{\pi\sim \Pi_p^k,\theta_{1:k}\sim \Psi^k}\E_{S\sim \widetilde{\Pcal}_{\theta_{1:k},\pi},h\gets \Acal(S)}\left[\left. \sum_{j\sim \pi} \Pr_{X\sim \Pcal_{\theta_j}}[h(X) = 0] + \Pr_{X\sim \Pcal_{0}}[h(X) = 1] ~\right|~ S\right]
\end{align*}
Using $Z_{n\#\ell}$ to denote the collection of clusters $u$ that appear exactly $\ell$ times in $S$, we have
\begin{align*}
&~~~~ \frac{1}{2} \E_{\pi\sim \Pi,\theta_{1:k}\sim \Psi^k}\E_{S\sim \widetilde{\Pcal}_{\theta_{1:k},\pi},h\gets \Acal(S)}\left[\left. \Pr_{X\sim \Pcal_{\theta,\pi}}[h(X) = 0]  + \Pr_{X\sim \Pcal_0}[h(X) = 1] ~\right|~ S\right] \\
&= \sum_{0\le \ell \le n} \errn(\Acal,S,\ell) \cdot \frac{1}{|Z_{n\#\ell}|}\cdot \sum_{u\in Z_{n\#\ell}} \E[\pi_u \mid S] \\
&= \sum_{0\le \ell \le n} \tau_\ell \cdot \errn(\Acal, S, \ell).
\end{align*}
In words, the term $\tau_{\ell}$ reflects the change of (distribution of) $\pi_u$ after conditioning on $S$, while the term $\errn$ reflects the change of $\theta$. It turns out, by calculation, that after conditioning on $S$, the error of the algorithm can be succinctly described by the formula above.

To prove the theorem, it remains to argue that
\begin{align}
    \mathrm{opt}(\bPmult \mid S) \le \tau_0 \cdot \errn(\Acal, S,0) + O(1/k).\label{equ:opt ub conditioned on S}
\end{align}
We describe an algorithm $\Acal^*$ here: for every cluster $u$ that has been seen at least once in $S$, the algorithm trains a binary-classification model to distinguish $\Pcal_{\theta_u}$ from $\Pcal_{0}$ with error at most $\frac{1}{k^2}$ (this is possible thanks to Assumption~\ref{assump: multi learning}). Then, upon receiving a query $X$, the algorithm tests whether $X$ was likely from $\Pcal_{\theta_u}$ for every seen cluster $u$. $\Acal^*$ outputs $1$ if at least one of the tests outputs $1$. and $0$ otherwise. By simple union bound, for $X\sim \Pcal_0$, the algorithm makes an error on $X$ with probability at most $O(\frac{1}{k})$. For $X\sim \Pcal_{\theta_u}$ where $u$-th cluster is present in $S$, the probability that $\Acal^*$ misclassifies $X$ is at most $\frac{1}{k^2}$. For every $X\sim \Pcal_{\theta_u}$ where $u\in Z_{n\#0}$, $\Acal^*$ might fail miserably on $X$. However, by Lemma~\ref{lemma: ell coefficient}, the weight of all such clusters, after conditioning on $S$, is at most $\tau_0\cdot |Z_{n\# 0}|$. This shows that the error of $\Acal^*$ is upper bounded by the right-hand side of \eqref{equ:opt ub conditioned on S}, as claimed.

To establish the second item of the theorem, we simply take the average over $S$.

\subsection{Proof of Theorem~\ref{thm: accuracy to memorization}}

    Let $S = ((X_1,i_1),\dots, (X_n,i_n))$. We start by writing
    \begin{align*}
        I(\Acal(S) ; S \mid \theta_{1:k}, \pi)
        &= I(\Acal(S) ; S, (i_1,\dots, i_n) \mid \theta_{1:k}, \pi) \\
        &= I(\Acal(S); (i_1,\dots, i_n) \mid \theta_{1:k}, \pi) + I(\Acal(S); S \mid \theta_{1:k}, \pi, (i_1,\dots, i_n)) \\
        &\ge I(\Acal(S); S \mid \theta_{1:k}, \pi, (i_1,\dots, i_n)).
    \end{align*}
    We focus on the last line. By definition of conditional mutual information, we can first sample and condition on $\pi \sim \Pi_p^k$ and $i_1,\dots, i_n \sim \pi^n$, and consider the conditional mutual information $I(\Acal(S);S \mid \theta_{1:k})$ (where $i_1,\dots, i_n$, $\pi$ have been fixed). Note that the parameters $\theta_1,\dots, \theta_n$ and examples $X_1,\dots, X_n$ are independent of these choices and still have the same distribution. We denote by $S_\Xcal := (X_1,\ldots,X_n)$ the unconditioned part of $S$.

    Next, we want to show that the memorization of different clusters is essentially independent. Toward that goal, let $S_j$ be the set of pairs $(X,i)$ from $S$ such that $i = j$ (i.e.,~$S_j$ contains examples of $S$ that are from the $j$-th cluster). We have
    \begin{align*}
       I(\Acal(S) ; S \mid \theta_{1:k})
       &= H(S \mid \theta_{1:k}) - H(S \mid \theta_{1:k}, \Acal(S)) \\
       &\ge \sum_{j\in [k]} H(S_j \mid \theta_j) - H(S_j \mid \theta_{1:k}, \Acal(S)) &\text{(sub-additivity and additivity of entropy)} \\
       &\ge \sum_{j\in [k]} H(S_j \mid \theta_j) - H(S_j \mid \theta_{j},\Acal(S)) & \text{(monotonicity of conditional entropy)} \\
       &= \sum_{j\in [k]} I(\Acal(S) ; S_j \mid \theta_j).
    \end{align*}

    We now derive the claimed lower bound. As before, let $I_{n\# \ell}(S)$ be the collection of all $j$'s that appear exactly $\ell$ times among $i_1,\dots, i_n$.
    
    Having conditioned on $i_1,\dots,i_n$, let $j\in I_{n\#\ell}(S)$. We consider how well the algorithm distinguishes $\Pcal_{\theta_j}$ from $\Pcal_0$ (when we take the average over $\theta_j, S_j$ and $\Acal(S)$). Thinking of this as a binary classification problem between $\Pcal_{\theta_j}$ and $\Pcal_0$ where $\ell$ samples from $\Pcal_{\theta_j}$ are available, we can evaluate the error of the algorithm by
    \begin{align*}
        e_j := \frac{1}{2} \E_{\theta_{1:k}\sim \Psi^k, S_\Xcal, h \gets \Acal(S)}\left[\Pr_{X\sim \Pcal_{\theta_j}}[h(X) = 0] + \Pr_{X\sim \Pcal_{0}}[h(X) = 1] \right].
    \end{align*}
    Using Assumption~\ref{assump: error memorization scale}, we obtain that
    \begin{align*}
        I(\Acal(S); S_j \mid \theta_j) \ge c_\bP (1 - 2e_j)\cdot \mem_\ell(\bP) .
    \end{align*}
    Note that $$\sum_{j\in I_{n\# \ell}(S)}e_j = \E_{\theta_{1:k}\sim \Psi^k, S_\Xcal}\left[\errn(\Acal, \theta_{1:k},S, \ell)\right] .$$ Adding up the contribution from different $j\in I_{n\# \ell}(S)$, we get
    \begin{align*}
        \sum_{j\in I_{n\# \ell}(S)} I(\Acal(S); S_j \mid \theta_j)\ge c_\bP \cdot \E_{\theta_{1:k}\sim \Psi^k, S_\Xcal}\left[ (|I_{n\#\ell}(S)| - 2 \cdot\errn(\Acal, \theta_{1:k}, S, \ell)) \cdot \mem_\ell(\bP) \right]~.
    \end{align*}    
Finally, summing over all $\ell$ and taking the average over $\pi, i_1,\dots, i_n$ concludes the proof.

\end{document}